
\documentclass{article}

\usepackage{microtype}
\usepackage{graphicx}
\usepackage{subfigure}
\usepackage{booktabs} 

\usepackage{hyperref}


\usepackage[nothing]{icml2025}


\usepackage{amsmath}
\usepackage{amssymb}
\usepackage{mathtools}
\usepackage{amsthm}

\usepackage[capitalize,noabbrev]{cleveref}

\usepackage{multirow}
\usepackage{makecell}
\usepackage{float}

\theoremstyle{plain}
\newtheorem{theorem}{Theorem}[section]

\newtheorem{lemma}[theorem]{Lemma}

\theoremstyle{definition}

\theoremstyle{remark}

\usepackage[textsize=tiny]{todonotes}

\usepackage{tikz}
\usepackage{verbatim}
\usepackage{hyperref}
\usepackage{harpoon}
\usepackage{halloweenmath}
\usepackage{mathabx}
\usepackage{enumitem}

\newcommand{\dee}{\mathrm{d}}

\newcommand{\etal}{{et~al.}}

\newcommand{\ra}{\rightarrow}
\newcommand{\mcal}[1]{\mathcal{#1}}

\newcommand{\x}{\mathbf{x}}

\newcommand{\tr}{\mathrm{tr}}

\newcommand{\Real}{\mathbb{R}}

\newcommand{\data}{\mathrm{data}}

\definecolor{cellhead}{HTML}{F4F4F4}
\newcommand{\mc}{\cellcolor{cellhead}\rule{0pt}{2.5ex}}
\definecolor{grayline}{HTML}{DDDDDD}
\icmltitlerunning{Distilling Two-Timed Flow Models by Separately Matching Initial and Terminal Velocities}

\begin{document}

\twocolumn[

\icmltitle{Distilling Two-Timed Flow Models by\\Separately Matching Initial and Terminal Velocities}



\icmlsetsymbol{equal}{*}

\begin{icmlauthorlist}
\icmlauthor{Pramook Khungurn}{xxx}
\icmlauthor{Pratch Piyawongwisal}{yyy}
\icmlauthor{Sira Sriswasdi}{zzz}
\icmlauthor{Supasorn Suwajanakorn}{yyy}
\end{icmlauthorlist}

\icmlaffiliation{xxx}{pixiv Inc., Tokyo, Japan}
\icmlaffiliation{yyy}{School of Information Science and Technology, VISTEC, Rayong, Thailand}
\icmlaffiliation{zzz}{Center for Artificial Intelligence in Medicine, Research Affairs, Faculty of Medicine, Chulalongkorn University, Bangkok, Thailand}

\icmlcorrespondingauthor{Pramook Khungurn}{pong@pixiv.co.jp}

\icmlkeywords{Machine Learning, diffusion models, flow matching models, distillation}

\vskip 0.3in
] 



\printAffiliationsAndNotice{}  

\begin{abstract}
A flow matching model learns a time-dependent vector field $v_t(x)$ that generates a probability path $\{ p_t \}_{0 \leq t \leq 1}$ that interpolates between a well-known noise distribution ($p_0$) and the data distribution ($p_1$). It can be distilled into a \emph{two-timed flow model} (TTFM) $\phi_{s,x}(t)$ that can transform a sample belonging to the distribution at an initial time $s$ to another belonging to the distribution at a terminal time $t$ in one function evaluation. We present a new loss function for TTFM distillation called the \emph{initial/terminal velocity matching} (ITVM) loss that extends the Lagrangian Flow Map Distillation (LFMD) loss proposed by Boffi \etal\ \yrcite{Boffi:FMM:2024} by adding redundant terms to match the initial velocities at time $s$, removing the derivative from the terminal velocity term at time $t$, and using a version of the model under training, stabilized by exponential moving averaging (EMA), to compute the target terminal average velocity. Preliminary experiments show that our loss leads to better few-step generation performance on multiple types of datasets and model architectures over baselines. 
\end{abstract}

\section{Introduction}

Diffusion models \cite{SohlDickstein:2015,Ho:2020} model a continuum of probability distributions $\{ p_t \}_{0 \leq t \leq 1}$ such that $p_0$ is a well-known noise distribution, and $p_1$ is the data distribution $p_{\data}$ or a slightly noisy version of it.\footnote{Lipman \etal\ call this a ``probability path'' \yrcite{Lipman:2023}.} A data item can be sampled by first sampling a noise vector from $p_0$ and then gradually transforming it into samples from $p_{t_1}$, $p_{t_2}$, $\dotsc$, $p_{t_K}$ where $0 < t_1 < t_2 < \dotsb < t_K = 1$. Compared to the previous reigning champion, GANs \cite{Goodfellow:2014}, diffusion models yield better sample quality \cite{Dhariwal:2021} and are more stable to train. Nevertheless, because the sampling process involves applying the model multiple times, diffusion models are significantly slower than other approaches, such as GANs, VAEs \cite{Kingma:2014}, and normalizing flows \cite{Rezende:2016} that can generate a sample with a single model evaluation. As a result, speeding up diffusion models has received much interest.

One model acceleration approach is to distill \cite{Hinton:2015} the knowledge of a pre-trained diffusion model into a new network that can generate samples in one or a few evaluations. We are interested in distilling a flow matching model \cite{Lipman:2023}, a variant of diffusion models that predicts a time-dependent velocity field that continuously transforms $p_0$ to $p_1$. Specifically, our objective is to construct a \emph{two-timed flow model} (TTFM)\footnote{Different authors have different names for this type of function. Lipman \etal\ call it a ``flow'' \yrcite{Lipman:2023}, Boffi \etal\ a ``flow map'' \cite{Boffi:FMM:2024}, and Kim \etal\ a ``consistency trajectory model'' \yrcite{Kim:CTM:2023}. We think that our naming better captures its characteristics and distinguishes it from other similar functions such as a normalizing flow, which has no time input, and a consistency model \cite{Song:CM:2023}, which has one time input.}  $\phi_{s,t}(x)$, which is a deterministic function such that $\phi_{s,t}(x) \sim p_t$ given that $x \sim p_s$ for any $0 \leq s < t \leq 1$. TTFM allows a data item to be generated in a single evaluation, by computing $\phi_{0,1}(x_0)$ where $x_0 \sim p_0$, or in multiple steps, which may increase the sample quality. Furthermore, TTFM also supports sample manipulation, such as with SDEdit \cite{Meng:SDEdit:2022}. While several approaches train the student models with no time input \cite{Luhman:2021, Zheng:DFNO:2023, yin:dmd:2024, yin:dmd2:2024, Xie:EM:2024, Kang:Diffusion2GAN:2024, Nguyen:SwiftBrush:2024} or one time input \cite{Salimans:2022, Sauer:ADD:2023, Song:CM:2023, Berthelot:TRACT:2023, Tee:2024:PID}, only a few have considered two time inputs \cite{Kim:CTM:2023, Boffi:FMM:2024}.

In this paper, we explore TTFM distillation and propose a new loss function for it, called the \emph{initial/terminal velocity matching} (ITVM) loss. It is based on the recent Lagrangian flow matching distillation (LFMD) loss \cite{Boffi:FMM:2024} that forces the trained network to obey a partial differential equation (PDE) that governs a flow matching model's trajectory. 
We make a key observation that, rather than enforcing the PDE directly, we can enforce special cases of the PDE along with a property called ``consistency'' \cite{Song:CM:2023} to achieve the same effect. This formulation has an advantage that, during training, the input to the teacher model is always in its training distribution, resulting in better supervising signals. More concretely, we modify the LFMD loss by first adding mathematically redundant terms that enforce the special cases. We then convert the original formula by replacing the partial derivative with a finite difference, which simplifies the implementation and speeds up computation. We then change the matching target of the finite difference from the teacher model to the exponentially moving averaged version of the model under training. This last step encourages the student to be consistent with itself rather than reproduce the teacher's out-of-distribution outputs. We extensively evaluated ITVM under multiple datasets and model architectures and found that it yields better few-step generation performance on many datasets than those of the LFMD loss and other baselines.


\section{Previous Works}

\textbf{Diffusion models} \cite{SohlDickstein:2015,Ho:2020} have demonstrated remarkable effectiveness in generative modeling across diverse domains such as image, audio, and video. Their approach learns the score of a noise-adding process governed by a stochastic differential equation (SDE), which has an equivalent ordinary differential equation (ODE), and reverses the process by simulating one of the differential equations backward in time \cite{Song:Score:2021}. Recent advancement like rectified flow \cite{Liu:2022}, flow matching \cite{Lipman:2023}, and stochastic interpolants \cite{Albergo:interp:2023} extend this framework by introducing different mathematical formulations and more flexibility in the modeling of the noise-adding process. However, the iterative sampling process of diffusion model is computationally intensive. To address this limitation, two main strategies have emerged: (1) reducing the simulation cost of ODEs or SDEs \cite{Lu:dpmsolver:2022,Shaul:bespoke:2023}, and (2) distillation, which trains a new model (the student) that compresses the sampling process of a diffusion model (the teacher) into fewer steps.

\textbf{Distillation.} A straightforward distillation approach is to match the student's outputs to those of the teacher. Luhman and Luhman proposed a simple implementation \yrcite{Luhman:2021}, which was later significantly improved with better similarity metrics and an adversarial loss \cite{Kang:Diffusion2GAN:2024}. Zheng \etal\ devises a student model that operates in the frequency domain and generates the entire sampling trajectory, which is matched to the teacher's, instead of just the sample \yrcite{Zheng:DFNO:2023}. These works require input-output pairs of the teacher as training data, which can be costly to prepare. 

Progressive distillation (PD) avoids this problem by an iterative distillation process where the student is trained to halve the number of steps of the teacher \cite{Salimans:2022}. Enforcing an additional property called ``consistency'' (to be discussed later) enables training in one round. Examples of this approach include consistency model (CM) \cite{Song:CM:2023,Song:ICM:2023} and consistency trajectory model (CTM) \cite{Kim:CTM:2023}. TRACT is a hybrid between CM and PD that significantly reduces the number of training rounds \cite{Berthelot:TRACT:2023}. 

Another popular approach is to minimize the KL divergence between the sample distributions induced by the teacher and the student \cite{yin:dmd:2024, yin:dmd2:2024, Luo:DiffInstruct:2023, Xie:EM:2024}. Doing so requires access to the student's score, which the model does not directly provide. As such, it is common for an auxiliary network to be trained along with the student to approximate its score. Salimans \etal\ sidestep this by approximating the auxiliary network's parameters from those of the teacher model \yrcite{Salimans:MM:2024}.

Our loss is based on a recent approach that make the student satisfy the PDE that defines the trajectory of an ODE solver when applied to the teacher model. Methods employing this approach include Lagrangian flow map distillation \cite{Boffi:FMM:2024} and physics-informed distillation \cite{Tee:2024:PID}.



\section{Preliminary} \label{sec:preliminary}

We denote a data item by $x \in \Real^d$.  The data distribution is denoted by $p_{\data}$. Following Lipman \etal~\yrcite{Lipman:2023}, given $x_{\data} \sim p_{\data}$, we can define the \emph{conditional probability path} $p_t(x|x_{\data})$ to be $\mcal{N}(tx_{\data}, \sigma_{t}^2 I)$ where $t \in [0,1]$, $\sigma_t = 1 - (1 - \sigma_{\min})t$, and $\sigma_{\min}$ is a small positive constant. It yields the \emph{marginal probability path} $$p_t(x) = \int p(x|x_{\data})p_{\data}(x_{\data})\, \dee x_{\data}$$ such that $p_0 = \mcal{N}(0,I)$, $p_1 = p_{\data} * \mcal{N}(0,\sigma_{\min}^2 I)$, and $p_t$ smoothly interpolates between $p_0$ (a well-known noise distribution) and $p_1$ (a slightly noisy version of $p_{\data}$).

The \emph{optimal transport conditional vector field} is given by $$v_t(x|x_{\data})  = (x_{\data} - (1-\sigma_{\min})x) / (1 - (1-\sigma_{\min})t),$$ and integrating it yields the \emph{marginal vector field} (MVF) $$v_t(x) = \int v_t(x|x_{\data}) p(x|x_{\data})p(x_{\data})/p_t(x)\, \dee x_{\data}.$$ It can be shown that the MVF is equivalent to the \emph{expected velocity field} \cite{Liu:2022}
$$v_t(x) = E[X_1 - X_0 | (1-t)X_0 + tX_1 = x],$$ where $X_0 \sim p_0$, and $X_1 \sim p_1$. We define a \emph{(two-timed) flow} $\phi_{s,t}(x)$ to be a solution to the \emph{Lagrangian PDE} $$\frac{\partial}{\partial t} \phi_{s,t}(x) = v_t(\phi_{s,t}(x)),$$ for all $0 \leq s \leq t \leq 1$ with the initial condition $\phi_{s,s}(x) = x$ for all $0 \leq s \leq 1$. The flow satisfies $p_t = [\phi_{s,t}]_*\, p_s$, where $[\phi_{s,t}]_*$ denotes a \emph{push-forward} by $\phi_{s,t}(\cdot)$ taken as a function from $\Real^d$ to $\Real^d$. We say that the MVF $v_t$ and the flow $\phi_{s,t}$ \emph{generates} the probability path $p_t$.

A \emph{flow matching model} is a neural network $v_t^\eta(x)$ with parameters $\eta$ that models the marginal vector field $v_t(x)$. It can be trained with the \emph{conditional flow matching} (CFM) loss \cite{Lipman:2023}
\begin{align}
\mcal{L}_{\operatorname{CFM}} = E_{t,x_0,x_{\data}}[\| v^\eta_t(x_t) - v_t(x_t|x_{data}) \|^2], \label{cfm}
\end{align}
where $t \sim \mcal{U}[0,1]$, $x_0 \sim p_0$, $x_{\data} \sim p_{\data}$, and $x_t = (1 - (1-\sigma_{\min})t)x_0 + tx_{\data}$ is a noised data distributed according to $p_t$. Given such a model, an \emph{implicit flow} $\phi_{s,t}^\eta(x)$ can be defined by integrating it: setting $\phi_{s,s}^\eta(x) = x$ and
\begin{align*}
    \phi_{s,t}^\eta(x) &= x + \int_s^t v_u(\phi^\eta_{s,u}(x))\, \dee u \quad \mbox{when } s < t.
\end{align*}
In practice, an ODE solver is used to compute the integral.

A \emph{two-time flow model} (TTFM) is a neural network $\phi_{s,t}^\theta(x)$ with parameters $\theta$ that models the (idealized) flow $\phi_{s,t}(x)$. We consider the problem of training a TTFM by distilling a flow matching model $v^{\eta}_t(x)$ so that it mimics the implicit flow $\phi^{\eta}_{s,t}(x)$. A TTFM should satisfy the following three properties. Firstly, it should satisfy the \emph{initial condition}: $\phi_{s,s}^\theta(x) = x$ for all $s \in [0,1]$. Secondly, it should satisfy the \emph{Lagrangian PDE}: $\partial \phi_{s,t}^\theta(x) / \partial t = v_t(\phi_{s,t}^\theta(x)).$ Lastly, it should satisfy \emph{consistency}: $\phi_{s,t}^\theta(x) = \phi_{u,t}^\theta(\phi_{s,u}^\theta(x))$ for any $0 \leq s \leq u \leq t \leq 1$.

{\bf Losses for distillation.} To make a TTFM satisfy the initial condition, we parameterize it as $\phi_{s,t}^\theta(x) =  x + (t-s) v_{s,t}^{\theta}(x)$. The neural network $v^{\theta}_{s,t}(x)$ is called the \emph{average velocity model} (AVM) because it models the average velocity of the particle starting at $x$ at time $s$ and traveling according to the MVF until time $t$. This parameterization is simpler than the ansatz proposed by Boffi \etal~\yrcite{Boffi:FMM:2024} and does not have a risk of division by zero like that of CTM \cite{Kim:CTM:2023}.

We now review some previous loss functions that can be used to distill a flow matching model into a TTFM. Song \etal\ propose the \emph{consistency matching} (CM) loss \yrcite{Song:CM:2023}:
\begin{align*}
\mcal{L}_{\operatorname{CM}} = E_{s,t,x_s}[ \mathfrak{d}(\phi_{s,t}^\theta(x_s), \phi^{\langle \theta \rangle}_{s+\tau,t}(\phi^{\eta}_{s,s+\tau}(x_s))) ],
\end{align*}
where $s \sim \mcal{U}[0,1-\tau]$, $t \sim \mcal{U}[s+\tau,1]$, and $x_s \sim p_s$. The function $\mathfrak{d}(\cdot,\cdot)$ is a distance function on $\Real^d$. $\tau$ is a hyperparameter that is a positive constant chosen so that it is small enough for $\phi^{\eta}_{s,s+\tau}(x)$ to be accurately calculated in one step of an ODE solver. $\langle \theta \rangle$ is the \emph{exponential moving average} (EMA) of the TTFM's parameters. 
We note that the CM loss is designed to be used in situations where $t$ is always fixed at $1$, but it can take all values in $[s+\tau,1]$.

Kim \etal\ proposes the \emph{soft consistency matching} (SCM) loss \yrcite{Kim:CTM:2023}, which generalizes the CM loss by changing the intermediate time $s + \tau$ to the value $u$ sampled from the interval $[s,t]$ and using multiple ODE solver steps to evaluate the term that involves $\phi^{\eta}$. More concretely,
\begin{align*}
\mcal{L}_{\operatorname{SCM}} = E_{s,t,u,x_s}[ \mathfrak{d}( \phi_{s,t}^\theta(x_s), \phi^{\langle \theta \rangle}_{u,t}(\phi^{\eta}_{s,u}(x_s))) ],
\end{align*}
where $u \sim \mcal{U}[s,t]$, and $s$, $t$ and $x$ are sampled in the same way as in the CM loss. We note that Kim \etal\ uses two additional losses to train their CTMs, but we specifically consider SCM, as it is the only one involving distillation.

Boffi \etal\ proposes two loss functions for training TTFMs \yrcite{Boffi:FMM:2024}. The first is the \emph{Lagrangian flow map distillation} (LFMD) loss:
\begin{align*}
\mcal{L}_{\operatorname{LFMD}} = E_{s,t,x_s}\bigg[ \bigg\| \frac{\partial}{\partial t} \phi^\theta_{s,t}(x_s) - v^\eta_t(\phi_{s,t}^{[\theta]}(x_s)) \bigg\|^2 \bigg],
\end{align*}
where $[\theta]$ is the stop-gradient version of the model parameters.\footnote{The use of the stop-gradient operator is not documented in Boffi \etal's paper. However, according to Tee \etal, it led to much better results \yrcite{Tee:2024:PID}.}
The second is the \emph{Eulerian flow map distillation} (EFMD) loss:
\begin{align*}
\mcal{L}_{\operatorname{EFMD}} = E_{s,t,x_s}\bigg[ \bigg\| \frac{\partial}{\partial s} \phi^\theta_{s,t}(x_s) - \frac{\partial}{\partial x_s} \phi_{s,t}^\theta(x_s)\ v^\eta_s(x_s) \bigg\|^2 \bigg].
\end{align*}
While the use of either loss function should result in the same network due to the theoretical properties of Physics Informed Neural Networks \cite{Raissi:2019:PINN}, Boffi \etal\ observes that LFMD performs much better than EFMD in practice. 

Tee \etal\ proposes the \emph{Physics Informed Distillation} (PID) loss that is similar to LFMD but formulated for a standard diffusion model rather than a flow matching model \yrcite{Tee:2024:PID}. When translated to the flow matching formulation, it becomes
\begin{align*}
\mcal{L}_{\operatorname{PID}} 
= E_{s,t,x_s}\bigg[ \mathfrak{d} \bigg( \frac{\phi^\theta_{s,t}(x_s) - \phi^\theta_{s,t-\tau}(x_s)}{\tau}, v^\eta_t(\phi_{s,t}^{[\theta]}(x_s)) \bigg) \bigg],
\end{align*}
where $\tau$ is a small positive constant, and $\mathfrak{d}$ is a distance function. Note that PID replaces the partial derivative term in LFMD with a numerical differentiation. This makes PID simpler to implement because it does not require advanced automatic differentiation features, which LFMD needs.

\section{Method} \label{sec:method}


{\bf Motivation and intuition.} As noted earlier, a valid TTFM must satisfy the initial condition, the Lagrangian PDE, and consistency. The most important property is the Lagrangian PDE because it defines the sampling trajectory. LFMD and PID directly enforce it. In contrast, CM and SCM enforce consistency instead, but the trained network ultimately also satisfies the PDE due to the following fact. 

\begin{lemma} \label{thm:consistency-extension-1}
Suppose there exists $\tau^* > 0$ such that
\begin{enumerate}
    \item[(a)] $\phi^\theta_{s,s+\tau}(x) = \phi^{\eta}_{s,s+\tau}(x)$, and
    \item[(b)] $\phi^\theta_{s,t}(x) = \phi^\theta_{t-\tau,t}(\phi^\theta_{s,t-\tau}(x))$    
\end{enumerate}
for all $x \in \Real^d$, $0 \leq s < t \leq 1$, and $\tau \leq \min\{ \tau^*, t-s \}$.
Then, $\phi^{\theta}_{s,t}(x) = \phi^{\eta}_{s,t}(x)$ and so satisfies Lagrangian PDE.    
\end{lemma}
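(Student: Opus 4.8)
The plan is to turn $\phi^\theta_{s,t}$ into a long composition of short-time maps using hypothesis (b), replace each short-time factor by the corresponding short-time map of the implicit flow using hypothesis (a), and then collapse that composition back down to $\phi^\eta_{s,t}$ using the semigroup (flow) property of the implicit flow. The only fact not already in the excerpt that I would invoke is this semigroup property: $\phi^\eta_{s,t}(x) = \phi^\eta_{u,t}(\phi^\eta_{s,u}(x))$ for all $0 \le s \le u \le t \le 1$. It is the standard flow identity for the ODE $\dot y(r) = v^\eta_r(y(r))$ and follows from uniqueness of solutions (which I take for granted for the teacher's velocity field on $[0,1]$), since the solution curve through $x$ at time $s$, restarted from its value $\phi^\eta_{s,u}(x)$ at the intermediate time $u$, is the same curve; equivalently it is immediate from the integral definition of $\phi^\eta$ together with uniqueness.

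Concretely, I would fix $0 \le s < t \le 1$, pick an integer $N$ with $\tau := (t-s)/N \le \tau^*$, and set $t_i = s + i\tau$ for $i=0,\dots,N$, so each subinterval $[t_{i-1},t_i]$ has length exactly $\tau \le \tau^*$. Applying hypothesis (b) to the pair $(s,t_N)$ with step $\tau$ gives $\phi^\theta_{s,t_N} = \phi^\theta_{t_{N-1},t_N}\circ \phi^\theta_{s,t_{N-1}}$; applying it again to $(s,t_{N-1})$, and so on down to $(s,t_2)$, unrolls this into
$$\phi^\theta_{s,t} = \phi^\theta_{t_{N-1},t_N}\circ \phi^\theta_{t_{N-2},t_{N-1}}\circ\cdots\circ \phi^\theta_{t_1,t_2}\circ \phi^\theta_{t_0,t_1},$$
where at every step the admissibility condition $\tau \le \min\{\tau^*,\, t_k-s\}$ holds because $t_k - s = k\tau \ge \tau$. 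Since each factor $\phi^\theta_{t_{i-1},t_i}$ spans an interval of length $\tau \le \tau^*$ (and lies in range, as $t_i \le t \le 1$), hypothesis (a) yields $\phi^\theta_{t_{i-1},t_i}(x) = \phi^\eta_{t_{i-1},t_i}(x)$ for all $x$, hence
$$\phi^\theta_{s,t} = \phi^\eta_{t_{N-1},t_N}\circ \phi^\eta_{t_{N-2},t_{N-1}}\circ\cdots\circ \phi^\eta_{t_1,t_2}\circ \phi^\eta_{t_0,t_1} = \phi^\eta_{s,t},$$
the last equality by the semigroup property. Finally, since $\phi^\eta_{s,t}(x) = x + \int_s^t v^\eta_r(\phi^\eta_{s,r}(x))\,\dee r$, the fundamental theorem of calculus gives $\tfrac{\partial}{\partial t}\phi^\eta_{s,t}(x) = v^\eta_t(\phi^\eta_{s,t}(x))$, so $\phi^\theta$ inherits the Lagrangian PDE.

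I do not expect a genuine obstacle: once the semigroup property of the implicit flow is granted, the argument is bookkeeping. The one place to be careful is the range conditions when unrolling (b) and applying (a) — one must check that every intermediate application stays within $\tau \le \min\{\tau^*, t_k - s\}$ and that no step degenerates to an equal-time map, which the uniform partition into pieces of size $(t-s)/N$ is chosen precisely to guarantee (and the $N=1$ case is trivial, with nothing to unroll). A secondary point worth a sentence in the writeup is that ``satisfies the Lagrangian PDE'' should here be read with the teacher field $v^\eta$ in place of the ideal $v$ — they coincide only when the flow matching model is exact — which is exactly what the integral definition of $\phi^\eta$ delivers.
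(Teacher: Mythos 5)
Your proposal is correct and follows essentially the same route as the paper's proof: partition $[s,t]$ into $N$ equal pieces of length at most $\tau^*$, unroll $\phi^\theta_{s,t}$ via hypothesis (b), replace each short-time factor by $\phi^\eta$ via hypothesis (a), and recombine using the semigroup/consistency property of the implicit flow, with the Lagrangian PDE then following from the integral definition of $\phi^\eta$. Your extra care about the admissibility condition $\tau \le \min\{\tau^*, t_k - s\}$ and the trivial $N=1$ case is a welcome refinement of bookkeeping the paper leaves implicit.
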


In other words, if a TTFM agrees with the implicit flow on short intervals, then consistency can be used to extend the interval's length arbitrarily. The use of the teacher model in CM and SCM folds the enforcement of (a) into an expression that enforces (b). Note that in (b), the consistency is required to hold only for $u$ that is close enough to $t$, not for any $u \in [s,t]$.

Taking the limit as $\tau \ra 0$, (a) becomes $v^{\theta}_{s,s}(x) = v^{\eta}_s(x)$. This is a special case of the Lagrangian PDE where $t = s$. Moreover, if it holds together with consistency, then $\phi^{\theta}$ satisfies the Lagrangian PDE.

\begin{lemma} \label{thm:consistency-extension-2}
    Suppose there exists $\tau^* > 0$ such that
    \begin{enumerate}
        \item[(A)] $v^\theta_{s,s}(x) = v^\eta_{s}(x)$, and
        \item[(B)] $\phi^\theta_{s,t}(x) = \phi^\theta_{t-\tau,t}(\phi^\theta_{s,t-\tau}(x))$.
    \end{enumerate}
    for all $x \in \Real^d$, $0 \leq s \leq t \leq 1$, and $\tau \leq \min\{\tau^*, t-s\}$.
    Then, $\phi^\theta$ satisfies the Lagrangian PDE.    
\end{lemma}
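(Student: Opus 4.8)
The plan is to prove directly that $t \mapsto \phi^\theta_{s,t}(x)$ satisfies $\partial_t \phi^\theta_{s,t}(x) = v^\eta_t(\phi^\theta_{s,t}(x))$ by forming a difference quotient in $t$, using (B) to turn it into a short flow step $\phi^\theta_{t,t+\tau}$, and then using the AVM parameterization $\phi^\theta_{s,t}(x) = x + (t-s)v^\theta_{s,t}(x)$ together with (A) to evaluate the limit as $\tau \ra 0$. It is worth noting up front that we cannot simply reduce to Lemma~\ref{thm:consistency-extension-1}: hypothesis (A) only pins down the velocity at coincident times and does \emph{not} supply the exact agreement $\phi^\theta_{s,s+\tau}(x) = \phi^\eta_{s,s+\tau}(x)$ on a whole interval $(0,\tau^*]$ demanded by that lemma's hypothesis (a), so the limiting argument has to be carried out by hand — this is exactly the informal ``taking the limit $\tau\ra 0$'' remark made just before the statement, now made rigorous.

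First, the parameterization gives the initial condition for free: $\phi^\theta_{s,s}(x) = x + 0\cdot v^\theta_{s,s}(x) = x$. Now fix $x \in \Real^d$ and $0 \le s \le t < 1$, and set $y := \phi^\theta_{s,t}(x)$. Applying (B) with terminal time $t+\tau$ in place of $t$ is legitimate once $\tau \le \min\{\tau^*, 1-t\}$, since then $\tau \le (t+\tau)-s$ holds automatically as $t \ge s$; it yields $\phi^\theta_{s,t+\tau}(x) = \phi^\theta_{t,t+\tau}(y)$. Substituting the parameterization on the right, $\phi^\theta_{t,t+\tau}(y) = y + \tau\, v^\theta_{t,t+\tau}(y)$, so
\begin{align*}
\frac{\phi^\theta_{s,t+\tau}(x) - \phi^\theta_{s,t}(x)}{\tau} = \frac{\phi^\theta_{t,t+\tau}(y) - y}{\tau} = v^\theta_{t,t+\tau}(y).
\end{align*}
Letting $\tau \ra 0^+$ and using continuity of $(a,b) \mapsto v^\theta_{a,b}(y)$ shows the right $t$-derivative exists and equals $v^\theta_{t,t}(y)$, which by (A) equals $v^\eta_t(y) = v^\eta_t(\phi^\theta_{s,t}(x))$. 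For $s < t$ a symmetric computation with (B) in its original form, $\phi^\theta_{s,t}(x) = \phi^\theta_{t-\tau,t}(\phi^\theta_{s,t-\tau}(x))$, handles the left derivative: writing $z_\tau := \phi^\theta_{s,t-\tau}(x)$ we get $\big(\phi^\theta_{s,t}(x) - \phi^\theta_{s,t-\tau}(x)\big)/\tau = v^\theta_{t-\tau,t}(z_\tau)$, and since $z_\tau \ra \phi^\theta_{s,t}(x)$ by continuity of $\phi^\theta$ in time, joint continuity of $(a,w)\mapsto v^\theta_{a,t}(w)$ sends the right-hand side to $v^\theta_{t,t}(\phi^\theta_{s,t}(x)) = v^\eta_t(\phi^\theta_{s,t}(x))$ as well. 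Thus the two-sided derivative exists and the Lagrangian PDE holds; at $t = 1$ only the left derivative is required and at $t = s$ only the right, so the argument covers all $0 \le s \le t \le 1$.

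The only real obstacle is regularity: every limit above presupposes that $\phi^\theta$ (equivalently $v^\theta$) is jointly continuous in its time arguments and in the spatial input, so that $v^\theta_{t,t+\tau}(y) \ra v^\theta_{t,t}(y)$ and $\phi^\theta_{s,t-\tau}(x) \ra \phi^\theta_{s,t}(x)$, and — in the left-derivative step — so that the spatial argument $z_\tau$ may be moved to its limit simultaneously with sending the time arguments $t-\tau \ra t$. I would record this joint continuity as a standing assumption (it is immediate for any smooth network parameterization of $v^\theta$), or, to make the simultaneous-limit step entirely mechanical, strengthen it to local Lipschitz continuity of $v^\theta$ in $(a,b,w)$ on compact sets. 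With that in hand, the remainder is just bookkeeping: substituting $\phi^\theta_{s,t}(x) = x + (t-s)v^\theta_{s,t}(x)$ into the consistency identity and applying (A).
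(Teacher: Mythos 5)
Your proposal is correct and follows essentially the same route as the paper's proof: rewrite the difference quotient for $\partial_t \phi^\theta_{s,t}(x)$ using (B), substitute the parameterization $\phi^\theta_{a,b}(y)=y+(b-a)v^\theta_{a,b}(y)$ to obtain $v^\theta_{t-\tau,t}(\phi^\theta_{s,t-\tau}(x))$, pass to the limit by continuity, and apply (A). Your version is slightly more careful than the paper's (treating the left and right difference quotients and the endpoint cases separately, and stating the joint-continuity assumption explicitly, which the paper leaves implicit), but the core argument is the same.
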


Lemma~\ref{thm:consistency-extension-1} and \ref{thm:consistency-extension-2} together indicate that a TTFM may be trained by enforcing consistency and special cases of the Lagrangian PDE, instead of enforcing the PDE itself.

{\bf Our proposal.} Our loss function, the \emph{initial/terminal velocity matching} (ITVM) loss, is designed to enforce properties in Lemma \ref{thm:consistency-extension-1} and \ref{thm:consistency-extension-2}. It has three terms: 
\begin{align*}
\mcal{L}_{\operatorname{ITVM}} = \mcal{L}_{\operatorname{IIVM}}  + \mcal{L}_{\operatorname{IAVM}} + \mcal{L}_{\operatorname{TVM}}.
\end{align*}
The first term, the \emph{initial instantaneous velocity matching} (IIVM) loss, 
aims to enforce Property (A) of Lemma~\ref{thm:consistency-extension-2}:
\begin{align*}
\mcal{L}_{\operatorname{IIVM}} = E_{s,x_s} [ \| v^\theta_{s,s}(x_s) - v^\eta_s(x_s) \|^2 ],
\end{align*}
where $s \sim \mcal{U}[0,1]$ and $x_s \sim p_s$.

The second term, the \emph{initial average velocity matching} (IAVM) loss, aims to enforce Property (b) of Lemma~\ref{thm:consistency-extension-1}.
\begin{align*}
\mcal{L}_{\operatorname{IAVM}} 
= E_{s,x_s} \bigg[ \bigg\| v^\theta_{s,s+\tau}(x_s) - \frac{\phi^{\eta}_{s,s+\tau}(x_s)-x_s}{\tau} \bigg\|^2 \bigg]. 
\end{align*}
Here, $\tau$ is a positive constant, $s \sim \mcal{U}[0,1-\tau]$, and $x_s \sim p_s$. We choose a small $\tau$ so that computing $\phi^{\eta}_{s,s+\tau}(x_s)$ with an ODE solver in 1 step is sufficiently accurate. The expression on the RHS equals the expectation of $\| \phi_{s,t}^\theta(x_s) - \phi_{s,t}^\eta(x_s) \|$ scaled by $1/\tau^2$. Note that both IIVM and IAVM are special cases of LFMD in the sense that, if LFMD is $0$, then they would be $0$ as well.

The third term, the \emph{terminal velocity matching} (TVM) loss, aims to enforce the consistency properties in the two lemmas: (b) and (B). Concretely, it tries to ensure consistency between two ways of expressing the velocity of $\phi^\theta_{s,t}(x)$ at the \emph{terminal} time $t$. Using $\tau$ from ITVM, we have that, for any $x$,
\begin{align}
    \frac{\partial}{\partial t}\phi_{s,t}^\theta(x) &\approx \frac{\phi^\theta_{s,t}(x) - \phi^\theta_{s,t-\tau}(x)}{\tau}, \mbox{ and} \label{tvm-term-1} \\
    \frac{\partial}{\partial t}\phi_{s,t}^\theta(x) &\approx \frac{\phi^\theta_{t-\tau,t}( \phi^\theta_{s,t-\tau}(x) ) - \phi^\theta_{s,t-\tau}(x)}{\tau} \label{tvm-term-2} \\
    &= v_{t-\tau,t}^\theta( \phi^\theta_{s,t-\tau}(x)) \label{tvm-term-3}.
\end{align}
Note that \eqref{tvm-term-1} is the numerical differentiation used by PID. Equation \eqref{tvm-term-2} comes from replacing $\phi^\theta_{s,t}(x)$ in \eqref{tvm-term-1} with $\phi^\theta_{t-\tau,t}( \phi^\theta_{s,t-\tau}(x))$. These two quantities should be equal if $\phi^\theta$ satisfies consistency. Equation $\eqref{tvm-term-3}$ follows from the way we formulate $\phi_{s,t}^\theta(x)$. The TVM loss minimizes the squared L2 distance between the right hand sides of \eqref{tvm-term-1} and \eqref{tvm-term-3}, with stop-gradient operator and EMA applied to some of the terms' parameters:
\begin{align*}
    \mcal{L}_{\operatorname{TVM}} &= E_{s,t,x_s} \bigg[ \bigg\| \frac{\phi^\theta_{s,t}(x_s) - \phi^\theta_{s,t-\tau}(x_s)}{\tau} \\ 
    &\quad - v_{t-\tau,t}^{\langle \theta \rangle}( \phi^{[\theta]}_{s,t-\tau}(x_s)) \bigg\|^2 \bigg],    
\end{align*}
where $s \sim \mcal{U}[0,1-\tau]$, $t \sim \mcal{U}[s+\tau,1]$, and $x_s \sim p_s$. Here, $[\theta]$ denotes the stop-gradient version and $\langle \theta \rangle$ the EMA version of the network parameters. Note that the term before the minus sign depends on the parameters under training $\theta$, whereas the term following it does not involve $\theta$ at all, making it a constant with respect to $\theta$. We use EMA to stabilize training, as is done in CM \cite{Song:CM:2023}. We note that, similar to PID, replacing the partial derivative with a finite difference simplifies the implementation of our loss and speeds up computation compared to LFMD.

{\bf Comparison with LFMD and PID.} The form of TVM resembles that of PID, which is in turn a discretized version of LFMD. The main difference is that TVM uses the EMA version of the model under training $v^{\langle \theta \rangle}$ instead of the teacher model $v^\eta$. So, TVM forces the student model to be consistent with itself rather than following the teacher. 

In ITVM, the use of the teacher is relegated to the initial velocity terms. This has two potential benefits. First, in LFMD and PID, the target of velocity matching is $v_t^{\eta}(\phi_{s,t}^\theta(x))$, but the teacher is trained with the CFM loss, which feeds $x_t \sim p_t$ as the teacher's input. Hence, $\phi_{s,t}^\theta(x)$ may be out of the input distribution, leading to degraded target velocities. On the other hand, inputs to the teacher model in IIVM and IAVM are sampled directly from $p_s$ and so are always in distribution. Second, we may modify IIVM and IAVM to use the noisy velocity estimate $v_s(x_s|x_{\data})$ instead of the teacher model's output $v^\eta_s(x_s)$ to derive a loss for TTFM training without a teacher model. However, this investigation is not in the scope of this paper.

{\bf Comparison with CM and SCM.} If we ``equate'' $\phi_{s,t-\tau}^{\theta}(x_s)$ with $\phi_{s,t-\tau}^{[\theta]}(x_s)$, then we can rewrite the RHS of TVM loss\footnote{The rewritten version is not equivalent to the TVM loss because their gradients are different.} as
\begin{align*}
E_{s,t,x_s} \bigg[ \bigg\| \frac{\phi^\theta_{s,t}(x_s) - \phi_{t-\tau,t}^{\langle \theta \rangle}( \phi^{[\theta]}_{s,t-\tau}(x_s))}{\tau} \bigg\|^2 \bigg].
\end{align*}
This suggests that the TVM loss aims to enforce consistency, similar to CM and SCM. There are two key differences between these losses. The first is how they select the intermediate time $u$. TVM chooses $u = t - \tau$, CM chooses $u = s+\tau$, and SCM samples $u$ from $[s,t]$. In other words, TVM follows Lemma~\ref{thm:consistency-extension-1} and Lemma~\ref{thm:consistency-extension-2} more closely. The second is how the teacher model is used. While CM and SCM use the teacher model to push $x_s$ from time $s$ to time $u$, TVM uses the EMA version of the model under training instead, and the teacher model is used only in the initial velocity terms. Note also that IAVM is a special case of CM where the terminal time $t$ is always set to $s+\tau$. Therefore, ITVM is an untangled version of CM where two of its aspects---learning from the teacher and enforcing consistency---are cleanly separated into different loss terms.

{\bf Characterization of trained models.} The ITVM loss does not perfectly enforce the properties in Lemma~\ref{thm:consistency-extension-1} and \ref{thm:consistency-extension-2}. This is because we use a fixed $\tau$ instead of sampling it from $(0,\tau^*]$. Moreover, we also use an ODE solver to compute $\phi^{\eta}_{s,s+\tau}(x_s)$ instead of using the true value. However, we may characterize the behavior of a model trained to completion with ITVM as follows. Assume that the ODE solver is a one-step numerical method such as Euler's, Heun's or Runge--Kutta, and let $\mcal{S}^{\eta,h}_{s,t}(x)$ denote the result using the solver to compute $\phi^\eta_{s,t}(x)$ with step size $h$. 

\begin{theorem}[informal] \label{thm:itvm-correctness}
    Let $\tau = 1/N$ where $N$ is an integer. Let the $\phi_{s,t}^\eta(x_s)$ term in the IAVM loss be computed as $\mcal{S}^{\eta,\tau}_{s,s+\tau}(x)$ (i.e., 1-step integration with step size $\tau$). Suppose that we train $\phi^{\theta}$ to the point that $\langle \theta \rangle = \theta$ and $\mcal{L}_{\operatorname{ITVM}} = 0.$ Under mild assumptions such as continuity of the functions involved, we have that
    \begin{align*}
        \phi_{m\tau, n\tau}^\theta(x) = \mcal{S}_{m\tau, n\tau}^{\eta,\tau}(x),
    \end{align*}
    for all $x \in \Real^d$ and integers $m, n$ such that $0 \leq m \leq n \leq N$.
\end{theorem}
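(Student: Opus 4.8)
The plan is to reduce the hypothesis $\mcal{L}_{\operatorname{ITVM}} = 0$ to a pair of pointwise functional identities for $\phi^\theta$ and then close the theorem by induction on $n - m$. First I would note that $\mcal{L}_{\operatorname{ITVM}}$ is a sum of expectations of nonnegative quantities, so $\mcal{L}_{\operatorname{ITVM}} = 0$ forces each of $\mcal{L}_{\operatorname{IIVM}}$, $\mcal{L}_{\operatorname{IAVM}}$, $\mcal{L}_{\operatorname{TVM}}$ to vanish. Each $p_s$ has full support on $\Real^d$ — for $s < 1$ it is a Gaussian mixture with variance $\sigma_s^2 > 0$, and $p_1 = p_{\data} * \mcal{N}(0, \sigma_{\min}^2 I)$ — so, under the ``mild assumptions'' (continuity in $x$ of $v^\theta_{s,\cdot}$, $v^\eta$, and of one solver step, plus enough continuity in $(s,t)$ to reach all grid points, including the boundary ones), the integrand of each vanishing term is identically zero on all of $\Real^d$ for every $s,t$ in the stated sampling ranges. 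Using $\langle\theta\rangle = \theta$ and the fact that the stop-gradient operator does not change function values, $\mcal{L}_{\operatorname{IAVM}} = 0$ becomes
\[
  \phi^\theta_{s, s+\tau}(x) = x + \tau\, v^\theta_{s, s+\tau}(x) = \mcal{S}^{\eta,\tau}_{s, s+\tau}(x), \qquad 0 \le s \le 1 - \tau,
\]
and $\mcal{L}_{\operatorname{TVM}} = 0$, after multiplying through by $\tau$, substituting the parameterization $\phi^\theta_{a,b}(y) = y + (b - a) v^\theta_{a,b}(y)$ on both sides, and cancelling the common term $\phi^\theta_{s, t-\tau}(x)$, becomes
\[
  \phi^\theta_{s,t}(x) = \phi^\theta_{t-\tau,\,t}\bigl( \phi^\theta_{s,\,t-\tau}(x) \bigr), \qquad 0 \le s \le 1-\tau,\ \ s + \tau \le t \le 1.
\]
(The IIVM term plays no role here; it only pins down the $\tau \to 0$ behaviour in \cref{thm:consistency-extension-2}.)

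Next I would record the one structural property of the solver in play. Since $\mcal{S}^{\eta,\tau}$ is a fixed-step one-step method with step size $\tau = 1/N$, evaluating $\mcal{S}^{\eta,\tau}_{m\tau,\,n\tau}$ amounts to exactly $n - m$ steps of size $\tau$, and peeling off the last step gives the grid semigroup identity
\[
  \mcal{S}^{\eta,\tau}_{m\tau,\,n\tau}(x) = \mcal{S}^{\eta,\tau}_{(n-1)\tau,\,n\tau}\bigl( \mcal{S}^{\eta,\tau}_{m\tau,\,(n-1)\tau}(x) \bigr)
\]
for all $x$ and all integers $0 \le m \le n \le N$, with zero steps interpreted as the identity map.

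I would then prove $\phi^\theta_{m\tau,\,n\tau}(x) = \mcal{S}^{\eta,\tau}_{m\tau,\,n\tau}(x)$ for all $x$ and all integers $0 \le m \le n \le N$ by induction on $n - m$. The case $n - m = 0$ is the initial condition $\phi^\theta_{s,s}(x) = x$ built into the parameterization, matched against the zero-step solver. The case $n - m = 1$ is precisely the first identity above with $s = m\tau$ (legal since $m \le N - 1$), and it holds for every $x$, which is what allows it to serve as the outer map below. For $n - m \ge 2$, apply the second identity with $s = m\tau$, $t = n\tau$ to write $\phi^\theta_{m\tau,\,n\tau}(x) = \phi^\theta_{(n-1)\tau,\,n\tau}\bigl( \phi^\theta_{m\tau,\,(n-1)\tau}(x) \bigr)$; both the inner pair $(m, n-1)$ and the outer pair $(n-1, n)$ have gap strictly less than $n - m$, so the induction hypothesis replaces them by the corresponding solver maps, and the grid semigroup identity collapses the composition to $\mcal{S}^{\eta,\tau}_{m\tau,\,n\tau}(x)$. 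This closes the induction and the theorem.

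The main obstacle — indeed essentially the only nontrivial point — is the first reduction: upgrading ``a mean is zero'' to ``the integrand is identically zero on all of $\Real^d$, and over the closure of the $(s,t)$ sampling region''. This is exactly where the unstated regularity assumptions enter, and a careful writeup should make them explicit (full support of every $p_s$, joint continuity of the relevant maps, and care at the endpoints $s = 1-\tau$, $t = 1$). A secondary point worth spelling out is that under $\langle\theta\rangle = \theta$ the EMA copy and the stop-gradient copy coincide with $\phi^\theta$ as functions, so the identity obtained from $\mcal{L}_{\operatorname{TVM}} = 0$ is a genuine identity for the single map $\phi^\theta$ — which is what makes the consistency recursion in the induction valid. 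Everything else is elementary bookkeeping.
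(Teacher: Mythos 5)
Your proposal is correct and follows essentially the same route as the paper's proof: force each loss term to vanish, use the full support of $p_s$ together with continuity to upgrade zero expectations to pointwise identities (noting $\langle\theta\rangle=\theta$ and that stop-gradient does not change function values), obtain the single-step agreement $\phi^\theta_{s,s+\tau}=\mcal{S}^{\eta,\tau}_{s,s+\tau}$ from IAVM and the consistency identity from TVM, and then chain single steps along the grid, with IIVM unused. The only cosmetic difference is that you organize the final chaining as an induction on $n-m$, whereas the paper unrolls the consistency recursion explicitly before substituting the solver steps.
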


In other words, if training is successful, then the trained model can simulate the ODE solver on multiples of $\tau$ with one function evaluation. A proof can be found in Appendix~\ref{sec:proofs}. Consequently, if the ODE solver is of order $p$, then the error $\| \phi^\eta_{m\tau, n\tau}(x) - \phi^\theta_{m\tau, n\tau}(x) \|$ is $O(\tau^p)$.

\section{Experiments} \label{sec:experiments}

\subsection{Comparison with Other Losses} \label{sec:comparison-with-other-losses}

\begin{figure}
    \scriptsize
    \begin{tabular}{@{\hskip 0.01\linewidth}c@{\hskip 0.01\linewidth}c@{\hskip 0.01\linewidth}c@{\hskip 0.01\linewidth}c@{\hskip 0.01\linewidth}}
        \includegraphics[width=0.23\linewidth]{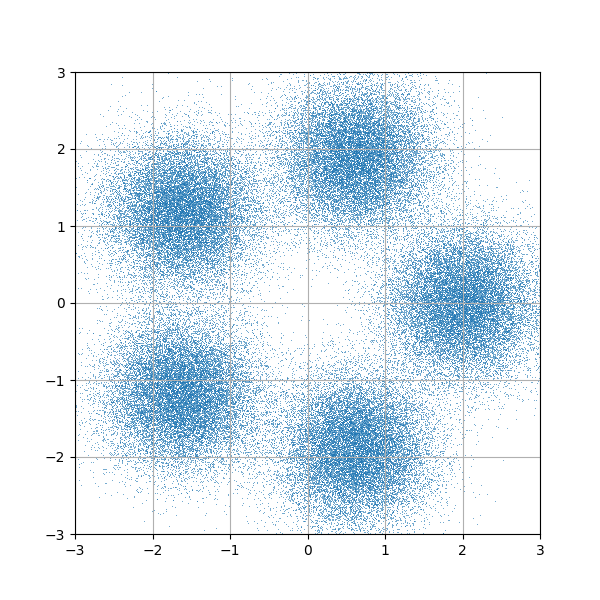} &
        \includegraphics[width=0.23\linewidth]{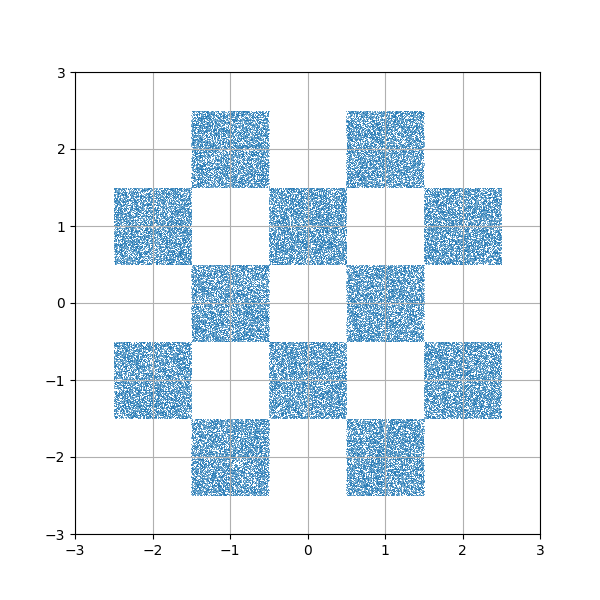} &
        \includegraphics[width=0.23\linewidth]{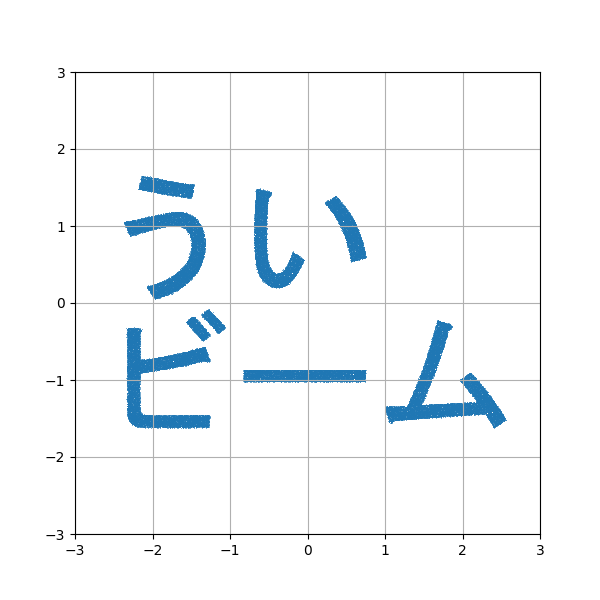} &
        \includegraphics[width=0.23\linewidth]{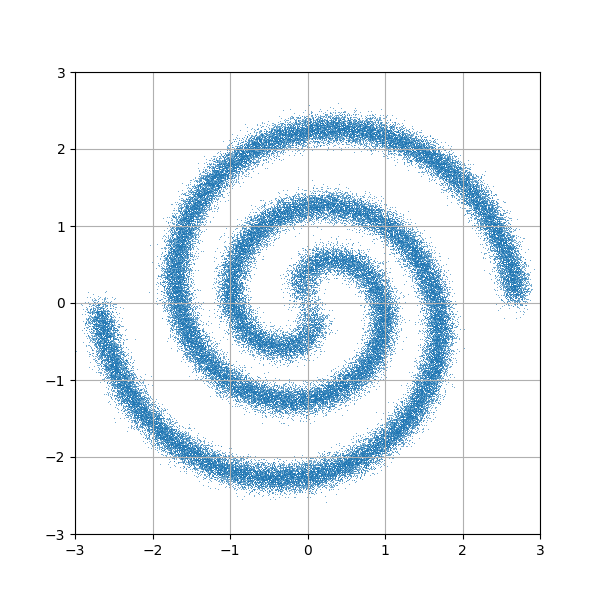} \\
        5LOBES & CHECKER & WORD & SPIRAL
    \end{tabular}    
    \caption{2D datasets. Each has 1M points.}
    \label{fig:2d-datasets}
\end{figure}

We compare ITVM against 3 other losses: EFMD, LFMD, and PID.\footnote{We also experimented with CM and SCM with simple hyperparameter settings but found that the results were worse than most other models we tested. From literature, it seems that one needs to carefully tune the losses' hyperparameters in order for them to be successful \cite{Song:ICM:2023, Kim:CTM:2023}. We decided not to include their results in the paper as we were not able to tune our implementations to the previously reported level. } We use 10 datasets: 4 2D datasets (Figure~\ref{fig:2d-datasets}), 4 tabular datasets, and 2 image datasets. For each dataset, we train a flow matching model and use it as the teacher. For each combination of dataset and loss, we train a number of student models. All student models for the same dataset have the same architecture, but this architecture may be different from that of the teacher's.\footnote{This is natural because the teacher and the student have different function signatures: the teacher's input is of type $\Real \times \Real^d$, but the student's input is of type $\Real \times \Real \times \Real^d$.} Because of this, we initialize the students randomly. 

To simplify experiments, we use $\sigma_{\min} = 0.001$ and $\tau = 0.005$ (obtained through ablation). We use the square of the L2 norm as the distance metric $\mathfrak{d}$ as it applies to all datasets. Times are floating point numbers, and they are sampled uniformly from their respective intervals. We use one step of Heun's method to evaluate the implicit flow in the ITVM loss. We vary only the EMA decay rate $\mu$, which can be either $0, 0.9, 0.99,$ or $0.999$. As a result, for ITVM, there are at most 4 student models. Our experimental setup allows us to compare the losses directly because, if two student models require the same hyperparameters (for example, $\tau$ in case of PID and ITVM), their values would be set identically. Implementation details, including network architectures and training processes, are in Appendix~\ref{sec:comparison-setup-and-data}.

We evaluate each student by assessing its ability to generate samples with few function evaluations. We use it to generate four sets of samples from $p_1 = p_{\data} * \mcal{N}(0,\sigma_{\min}^2I)$. All sets have the same size, and they correspond to sampling with 1, 2, 4, and 8 function evaluations. For each set, we compute a metric, resulting in 4 numbers per student model. The metric used depends on the dataset. For 2D and tabular datasets, we use the samples to stochastically estimate the KL divergence between the student's and teacher's distributions. Details of this process can be found in Appendix~\ref{sec:metrics}. For image datasets, we use the Fr\'{e}chet Inception Distance (FID) \cite{Huesel:2017} computed with the code from Karras \etal's EDM paper \yrcite{Karras:2022}.

Since each dataset has up to seven student models, we condense their performance statistics for conciseness and clarity.
For the ITVM loss, we rank its students according to the 4 scores, resulting in 4 ranks per student. Then, we fuse the ranks with the reciprocal rank fusion algorithm (RRF) \cite{Cormack:2009} with $k = 60$ and show the model with the best fused rank. Full tables and generated samples can be found in Appendix~\ref{sec:comparison-setup-and-data}.

\begin{table}
\caption{Comparison of performance of student models for 2D datasets, showing {\color{Green} 1st}, {\color{Green} 2nd}, and {\color{Blue} 3rd} best scores.}\label{tab:compare-2d-students} \vspace{0.5em}
{\fontsize{7.5pt}{9pt}\selectfont
    \centering
    \begin{tabular}{lrrrr}
        \toprule
        \textbf{Loss}
        & \multicolumn{1}{c}{\textbf{NFE = 1}}
        & \multicolumn{1}{c}{\textbf{NFE = 2}}
        & \multicolumn{1}{c}{\textbf{NFE = 4}}
        & \multicolumn{1}{c}{\textbf{NFE = 8}} \\
\hline
\multicolumn{1}{l}{\mc \textbf{Dataset}: 5LOBES} & \multicolumn{4}{c}{\mc \textbf{Metric}: KL divergence 
$\times 1000$ ($\downarrow$)} \\
EFMD & {\color{Red} 0.1520}& {\color{Green} 0.1410}& {\color{Blue} 0.1300}& {\color{Blue} 0.1220}\\
LFMD & {\color{Green} 0.1800}& {\color{Blue} 0.1430}& {\color{Green} 0.1210}& {\color{Green} 0.1060}\\
PID & 0.3300& 0.3380& 0.3060& 0.2800\\
ITVM ($\mu = 0.99$) & {\color{Blue} 0.1870}& {\color{Red} 0.1380}& {\color{Red} 0.1070}& {\color{Red} 0.0780}\\
\hline
\multicolumn{1}{l}{\mc \textbf{Dataset}: CHECKER} & \multicolumn{4}{c}{\mc \textbf{Metric}: KL divergence $\times 10$ ($\downarrow$)} \\ 
EFMD & 0.6994& 0.5629& 0.3344& 0.1935\\
LFMD & {\color{Blue} 0.2737}& {\color{Blue} 0.1560}& {\color{Blue} 0.1206}& 0.1046\\
PID & {\color{Green} 0.2512}& {\color{Green} 0.1459}& {\color{Green} 0.1084}& {\color{Blue} 0.0898}\\
ITVM ($\mu = 0.9$) & {\color{Red} 0.1856}& {\color{Red} 0.0903}& {\color{Red} 0.0852}& {\color{Red} 0.0713}\\
\hline
\multicolumn{1}{l}{\mc \textbf{Dataset}: WORD} & \multicolumn{4}{c}{\mc \textbf{Metric}: KL divergence ($\downarrow$)}  \\
EFMD & 0.4795& 0.3556& 0.2312& 0.1511\\
LFMD & 0.7343& {\color{Blue} 0.1672}& {\color{Blue} 0.1282}& 0.0930\\
PID & {\color{Green} 0.4191}& {\color{Green} 0.1608}& {\color{Green} 0.1186}& {\color{Green} 0.0763}\\
ITVM ($\mu = 0.9$) & {\color{Red} 0.2475}& {\color{Red} 0.1305}& {\color{Red} 0.0962}& {\color{Blue} 0.0850}\\
\hline
\multicolumn{1}{l}{\mc \textbf{Dataset}: SPIRAL} & \multicolumn{4}{c}{\mc \textbf{Metric}: KL divergence $\times 10$ ($\downarrow$)}  \\
EFMD & 1.4024& 0.9964& 0.6922& 0.4299\\
LFMD & {\color{Green} 0.4254}& {\color{Blue} 0.2898}& {\color{Blue} 0.3064}& 0.2345\\
PID & {\color{Blue} 0.4830}& {\color{Green} 0.2309}& {\color{Green} 0.2547}& 0.1912\\
ITVM ($\mu = 0.999$) & {\color{Red} 0.3407}& {\color{Red} 0.1560}& {\color{Red} 0.1337}& {\color{Red} 0.0990}\\
\bottomrule
    \end{tabular}
    }
    \vspace{-2ex}
\end{table}

{\bf 2D datasets.} The teacher model and the student models are MLPs having ELU \cite{Clevert:2016} activation functions and 8 hidden layers. Each hidden layer of the teacher model has 512 units, and the student 1,024. The performance data of the student models can be found in Table~\ref{tab:compare-2d-students}.
We observe that the ITVM loss consistently achieves the best scores.

\begin{table}[]
\caption{Comparison of performance of student models for tabular datasets, showing {\color{Red} 1st}, {\color{Green} 2nd}, and {\color{Blue} 3rd} best scores.}
    \label{tab:compare-tabular-students}\vspace{0.5em}
    {\fontsize{7.2pt}{9pt}\selectfont
    \centering
    \begin{tabular}{lrrrr}
        \toprule
        \textbf{Loss}
        & \multicolumn{1}{c}{\textbf{NFE = 1}}
        & \multicolumn{1}{c}{\textbf{NFE = 2}}
        & \multicolumn{1}{c}{\textbf{NFE = 4}}
        & \multicolumn{1}{c}{\textbf{NFE = 8}} \\
\hline
\multicolumn{1}{l}{\mc \textbf{Dataset}: GAS} & \multicolumn{4}{c}{\mc \textbf{Metric}: KL divergence 
 ($\downarrow$)} \\
EFMD & 19.034& 27.090& 37.149& 41.112\\
LFMD & {\color{Blue} 3.482}& {\color{Green} 2.194}& {\color{Green} 1.458}& {\color{Green} 0.836}\\
PID & {\color{Green} 2.864}& {\color{Blue} 3.153}& {\color{Blue} 1.727}& {\color{Blue} 1.017}\\
ITVM ($\mu = 0.99$) & {\color{Red} 2.577}& {\color{Red} 1.944}& {\color{Red} 1.300}& {\color{Red} 0.731}\\
\hline 
\multicolumn{1}{l}{\mc \textbf{Dataset}: HEPMASS} & \multicolumn{4}{c}{\mc \textbf{Metric}: KL divergence ($\downarrow$)} \\ 
EFMD & 3.8718& 3.3294& 2.5417& 2.1503\\
LFMD & {\color{Green} 1.1869}& {\color{Green} 0.9390}& {\color{Red} 0.3589}& {\color{Green} 0.2128}\\
PID & {\color{Blue} 1.5424}& {\color{Red} 0.8343}& {\color{Blue} 0.3742}& {\color{Blue} 0.2176}\\
ITVM ($\mu = 0.99$) & {\color{Red} 1.1232}& {\color{Blue} 0.9781}& {\color{Green} 0.3605}& {\color{Red} 0.1987}\\
\hline 
\multicolumn{1}{l}{\mc \textbf{Dataset}: MINIBOONE} & \multicolumn{4}{c}{\mc \textbf{Metric}: KL divergence ($\downarrow$)}  \\
EFMD & 68.0274& 155.2350& 191.7971& 206.9342\\
LFMD & {\color{Green} 7.9273}& {\color{Green} 5.9040}& {\color{Red} 5.5466}& {\color{Red} 5.5661}\\
PID & {\color{Red} 7.9176}& {\color{Red} 5.7900}& {\color{Green} 5.5739}& {\color{Green} 6.0128}\\
ITVM ($\mu = 0.9$) & {\color{Blue} 10.0666}& {\color{Blue} 8.5327}& {\color{Blue} 7.4486}& {\color{Blue} 6.7073}\\
\hline 
\multicolumn{1}{l}{\mc \textbf{Dataset}: POWER} & \multicolumn{4}{c}{\mc \textbf{Metric}: KL divergence ($\downarrow$)}  \\
EFMD & 8.1597& 9.4152& 11.8775& 13.1891\\
LFMD & {\color{Blue} 0.2094}& {\color{Blue} 0.1018}& {\color{Green} 0.0377}& {\color{Green} 0.0205}\\
PID & {\color{Green} 0.1731}& {\color{Green} 0.0906}& {\color{Blue} 0.0485}& {\color{Blue} 0.0271}\\
ITVM ($\mu = 0.99$) & {\color{Red} 0.1160}& {\color{Red} 0.0496}& {\color{Red} 0.0241}& {\color{Red} 0.0135}\\
\bottomrule 
    \end{tabular}
    }
    \vspace{-2ex}
\end{table}

{\bf Tabular datasets.} We use 4 datasets of multi-dimensional vectors previously used by Papamakarios \etal~\yrcite{Papamakarios:2017}: GAS (8D, $n\approx1$M), HEPMASS (21D, $n\approx350$K), MINIBOONE (42D, $n \approx 33$K), and POWER (6D, $n \approx 1.8$M). Teacher and student models are MLPs with skip connections \cite{Preechakul:2022}, which we found to perform better than standard MLPs. Metrics can be found in Table~\ref{tab:compare-tabular-students}. Again, we observe that the ITVM loss yields the best scores in most cases except for MINIBOONE, where it ranks 3rd in all metrics after LFMD and PID.

\begin{table}[]
\caption{Comparison of performance of student models for image datasets, showing {\color{Red} 1st}, {\color{Green} 2nd}, and {\color{Blue} 3rd} best scores.}
    \label{tab:compare-image-students}\vspace{0.5em}
    \centering
    {\fontsize{7.7pt}{9pt}\selectfont
    \centering
    \begin{tabular}{lrrrr}
         \toprule
        \textbf{Loss}
        & \multicolumn{1}{c}{\textbf{NFE = 1}}
        & \multicolumn{1}{c}{\textbf{NFE = 2}}
        & \multicolumn{1}{c}{\textbf{NFE = 4}}
        & \multicolumn{1}{c}{\textbf{NFE = 8}} \\
\hline
\multicolumn{1}{l}{\mc \textbf{Dataset}: MNIST} & \multicolumn{4}{c}{\mc \textbf{Metric}: FID 
 ($\downarrow$) \quad Teacher's FID = 0.309} \\
EFMD & 17.8660& 16.0118& 16.1313& 18.3542\\
LFMD & {\color{Blue} 12.2794}& {\color{Blue} 3.5004}& {\color{Blue} 1.1710}& {\color{Blue} 0.8695}\\
PID & {\color{Green} 2.3384}& {\color{Green} 1.7580}& {\color{Green} 0.8422}& {\color{Red} 0.5864}\\
ITVM ($\mu = 0.99$) & {\color{Red} 2.1745}& {\color{Red} 1.1670}& {\color{Red} 0.8123}& {\color{Green} 0.6039}\\
\hline 
\multicolumn{1}{l}{\mc \textbf{Dataset}: CIFAR-10} & \multicolumn{4}{c}{\mc \textbf{Metric}: FID ($\downarrow$) \quad Teacher's FID = 3.798}  \\
EFMD & 57.0296& 42.5501& 29.9066& 26.7983\\
LFMD & {\color{Blue} 12.8200}& {\color{Green} 6.9392}& {\color{Green} 4.7665}& {\color{Red} 4.1941}\\
PID & {\color{Green} 12.7306}& {\color{Blue} 7.1180}& {\color{Blue} 5.0251}& {\color{Blue} 4.5550}\\
ITVM ($\mu = 0.999$) & {\color{Red} 9.5318}& {\color{Red} 6.2173}& {\color{Red} 4.7510}& {\color{Green} 4.4079}\\
\bottomrule
    \end{tabular}
    }
    \vspace{-2ex}
\end{table}

{\bf Image datasets.} We experimented on MNIST and CIFAR-10. Teacher and student models are U-Nets with attention \cite{Ho:2020}. Using the Euler's method, the teacher models achieve FID scores of 0.309 for MNIST at 100 NFEs and 3.798 for CIFAR-10 at 1,000 NFEs. Most student models were trained under the same batch size. However, for the CIFAR-10 dataset, the GPUs available to us lacked sufficient RAM to train the EFMD and LFMD models, requiring us to reduce the batch size from 80 to 56. 

The FID scores of the student models are reported in Table~\ref{tab:compare-image-students}
We see that ITVM performs better than other losses in most cases. However, the scores for CIFAR-10 that our implementations of ITVM and other losses achieve are not as good as previously reported in other papers: for 1-step generation, CM yields FID of 2.51 \cite{Song:ICM:2023}, SCM 1.73 \cite{Kim:CTM:2023}, PID 3.68 \cite{Tee:2024:PID}. We attribute this discrepancy to several factors. First, we surmise that the noise schedule of our models is not as optimized for image datasets as those of diffusion models, such as those by Karras \etal~\yrcite{Karras:2022}. Second, our CIFAR-10 teacher model does not perform as well as those used by other papers, and we cannot use those models as teacher due to difference in formulation (flow matching vs standard diffusion). Third, we have not extensively tuned network architecture, training process, and hyperparameters. Indeed, to achieve competitive FID scores, CM requires carefully crafted time discretization curriculum, noise schedule, and distance function \cite{Song:ICM:2023}, CTM requires two more auxiliary losses \cite{Kim:CTM:2023}, and PID as implemented in Tee \etal's paper uses LPIPS rather than L2 as the distance metric \yrcite{Tee:2024:PID}. Still, our implementation of LFMD performs better than what is reported in Boffi~\etal's paper: 6.94 vs 7.63 for 2-step generation, and 4.77 vs 6.04 for 4-step generation.

All in all, results in this section show that, when other settings are controlled, ITVM performs better than baselines on multiple dataset types and model architectures. Although SOTA performance was not achieved on CIFAR-10, we believe ITVM has the potential to produce one with a more optimized implementation.

\begin{table}[]
\caption{An ablation study on the two initial velocity terms of the ITVM loss. The EMA decay rates for the WORD and MNIST datasets are $0.99$, and the CIFAR-10 dataset $0.999$. Colors indicate {\color{Red} 1st}, {\color{Green} 2nd}, and {\color{Blue} 3rd} best scores within the table.}
    \label{tab:iivm-iiam-ablation}\vspace{0.5em}
    \centering
    {\fontsize{7.6pt}{9pt}\selectfont
    \centering
    \begin{tabular}{lrrrr}
        \toprule
        \textbf{Loss}
        & \multicolumn{1}{c}{\textbf{NFE = 1}}
        & \multicolumn{1}{c}{\textbf{NFE = 2}}
        & \multicolumn{1}{c}{\textbf{NFE = 4}}
        & \multicolumn{1}{c}{\textbf{NFE = 8}} \\
\hline
\multicolumn{1}{l}{\mc \textbf{Dataset}: WORD} & \multicolumn{4}{c}{\mc \textbf{Metric}: KL divergence
 ($\downarrow$)} \\
LFMD & 0.7343& 0.1672& {\color{Blue} 0.1282}& {\color{Green} 0.0930}\\
PID & 0.4191& 0.1608& {\color{Green} 0.1186}& {\color{Red} 0.0763}\\
\arrayrulecolor{grayline}\hline
\noalign{\vskip 1pt}
ITVM: IIVM only & {\color{Green} 0.2482}& {\color{Green} 0.1480}& 0.1405& 0.1384\\
ITVM: IAVM only & {\color{Red} 0.2454}& {\color{Blue} 0.1483}& 0.1392& 0.1380\\
ITVM: IIVM+IAVM & {\color{Blue} 0.2646}& {\color{Red} 0.1298}& {\color{Red} 0.1111}& {\color{Blue} 0.1073}\\
\arrayrulecolor{black}\hline
\multicolumn{1}{l}{\mc \textbf{Dataset}: HEPMASS} & \multicolumn{4}{c}{\mc \textbf{Metric}: KL divergence
 ($\downarrow$)} \\
LFMD & {\color{Blue} 1.1869}& {\color{Green} 0.9390}& {\color{Red} 0.3589}& {\color{Green} 0.2128}\\
PID & 1.5424& {\color{Red} 0.8343}& {\color{Blue} 0.3742}& {\color{Blue} 0.2176}\\
\arrayrulecolor{grayline}\hline
\noalign{\vskip 1pt}
ITVM: IIVM only & 1.2296& {\color{Blue} 0.9514}& 0.3989& 0.2715\\
ITVM: IAVM only & {\color{Green} 1.1240}& 1.2420& 0.4003& 0.2718\\
ITVM: IIVM+IAVM & {\color{Red} 1.1232}& 0.9781& {\color{Green} 0.3605}& {\color{Red} 0.1987}\\
\arrayrulecolor{black}\hline
\multicolumn{1}{l}{\mc \textbf{Dataset}: MNIST} & \multicolumn{4}{c}{\mc \textbf{Metric}: FID 
 ($\downarrow$) \quad Teacher's FID = 0.309} \\
LFMD & 12.2794& 3.5004& 1.1710& 0.8695\\
PID & {\color{Green} 2.3384}& {\color{Blue} 1.7580}& {\color{Blue} 0.8422}& {\color{Red} 0.5864}\\
\arrayrulecolor{grayline}\hline
\noalign{\vskip 1pt}
ITVM: IIVM only & 2.5118& 2.7000& 2.3260& 1.9718\\
ITVM: IAVM only & {\color{Blue} 2.4394}& {\color{Green} 1.2965}& {\color{Green} 0.8344}& {\color{Blue} 0.6548}\\
ITVM: IIVM+IAVM & {\color{Red} 2.1745}& {\color{Red} 1.1670}& {\color{Red} 0.8123}& {\color{Green} 0.6039}\\
\arrayrulecolor{black}\hline
\multicolumn{1}{l}{\mc \textbf{Dataset}: CIFAR} & \multicolumn{4}{c}{\mc \textbf{Metric}: FID 
 ($\downarrow$) \quad Teacher's FID = 3.798} \\
LFMD & 12.8200& 6.9392& {\color{Blue} 4.7665}& {\color{Red} 4.1941}\\
PID & 12.7306& 7.1180& 5.0251& 4.5550\\
\arrayrulecolor{grayline}\hline
\noalign{\vskip 1pt}
ITVM: IIVM only & {\color{Blue} 9.7885}& {\color{Blue} 6.7481}& 5.4656& 5.0921\\
ITVM: IAVM only & {\color{Red} 8.4243}& {\color{Red} 5.7748}& {\color{Red} 4.6409}& {\color{Green} 4.2399}\\
ITVM: IIVM+IAVM & {\color{Green} 9.5318}& {\color{Green} 6.2173}& {\color{Green} 4.7510}& {\color{Blue} 4.4079}\\
\arrayrulecolor{black}\bottomrule
    \end{tabular}
    }
    \vspace{-2ex}
\end{table}

\begin{figure}
    \scriptsize
    \centering
    \begin{tabular}{@{\hskip 0.01\linewidth}c@{\hskip 0.01\linewidth}c@{\hskip 0.01\linewidth}}
        \includegraphics[width=0.5\linewidth]{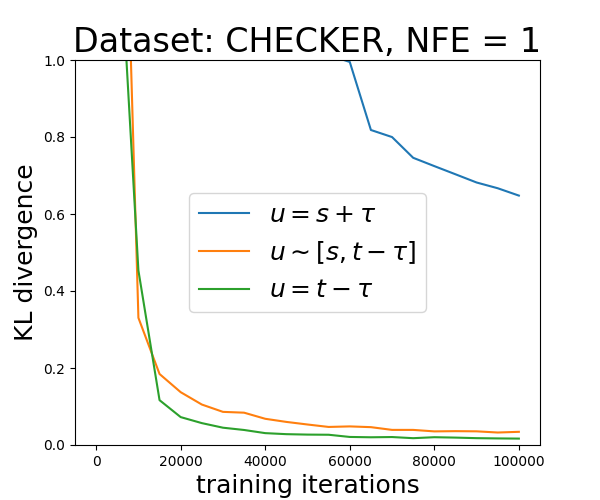} &
        \includegraphics[width=0.5\linewidth]{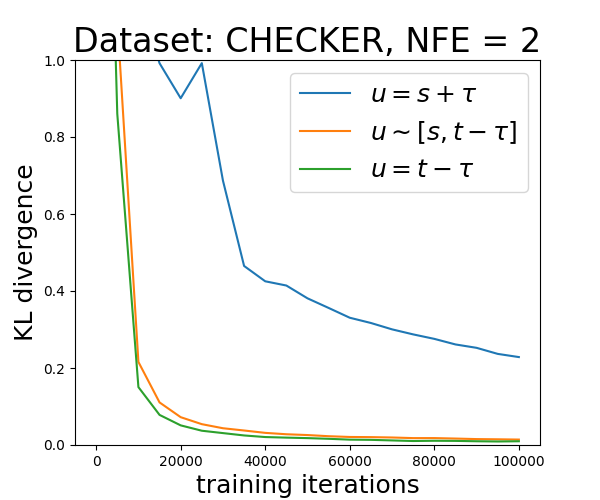} \\
        \includegraphics[width=0.5\linewidth]{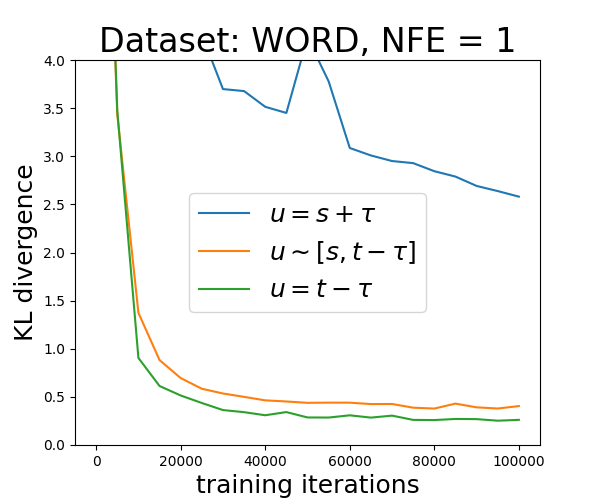} &
        \includegraphics[width=0.5\linewidth]{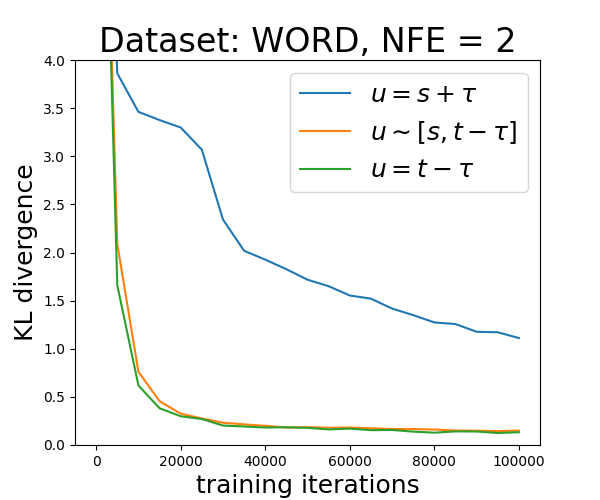} 
    \end{tabular}    
    \vspace{-0.7em}
    \caption{The effect of how the intermediate time $u$ is chosen on the evolution of the KL divergence between distributions of student models and teacher models during training. For the strategy that samples from $[s,t]$, we sample from $[s,t-\tau]$ instead to avoid the denominator in the TVM loss being too small.}
    \label{fig:u-ablation}
\end{figure}

\subsection{Analysis of Design Choices and Hyperparameters}

{\bf Initial velocity terms.} The ITVM loss has two initial velocity terms: IIVM and IAVM. However, their motivating insights, Lemma~\ref{thm:consistency-extension-1} and Lemma~\ref{thm:consistency-extension-2}, are completely independent. As such, one may think that only one term should be enough. This is mostly true. In Table~\ref{tab:iivm-iiam-ablation}, we show the performance of student models on four datasets. They were trained with TVM and either one of or both initial velocity terms. We also include LFMD and PID as baselines. We can see that all versions of IIVM are better or competitive with the baselines, especially on 1- and 2-step generations. However, for the MNIST and CIFAR-10 datasets, not using IAVM degrades performance on 4- and 8-step generations, implying that the term is important for FID when NFE is higher. IIVM, on the other hand, seems to have less impact: using it together with IAVM tends to improve overall performance, but not significantly. Moreover, for CIFAR-10, IAVM alone beats the full loss by about 1 point on 1-step and 2-steps generation. As a result, while using both terms is likely to yield the best student model, one may drop IIVM if training budget is constrained.

{\bf Intermediate time in TVM.} As noted in Section~\ref{sec:method}, the TVM term enforces a version of the consistency property that fixes the intermediate time $u$  to be $t-\tau$. We present evidence that this strategy of picking $u$ is better than those employed by CM (picking $u = s + \tau$) and SCM (sampling $u$ from $[s,t]$) in Figure~\ref{fig:u-ablation}. We observe that, for 1-step generation, ITVM's strategy (green curves) consistently performs the best, and there are visible gaps with the second best strategy. Picking $u = s+\tau$, on the other hand, leads to significantly worse performance. For 2-step generation, the gap between $u=t-\tau$ and $u \sim [s,t]$ narrows down, while $u = s+\tau$ still performs much worse. Plots for other NFEs are included in Appendix~\ref{sec:full-u-ablation}. We conclude that, compared to the other two strategies, picking $u = t-\tau$ results in better 1-step generation performance and the same level of performance at higher NFEs.

{\bf Step size $\tau$.} To study the effect of $\tau$, we trained student TTFMs for the CHECKER and WORD datasets, setting $\tau$ to $0.1$, $0.05$, $0.01$, $0.005$, and $0.001$. 
Figure~\ref{fig:tau-ablation} shows the KL divergence between the student's and teacher's $p_1$ distributions during training for NFE = 1 and 2.
Most $\tau$ values yield similar KL curves and final performance, except the highest $\tau = 0.1$ (blue curves), which performs noticeably worse in three plots.
As a result, we think it suffices to pick a $\tau$ value that is low enough, and we recommend $0.005$. Plots for other NFEs are available in Appendix~\ref{sec:ful-tau-ablation}.

\begin{figure}[]
    \scriptsize
    \centering
    \begin{tabular}{@{\hskip 0.01\linewidth}c@{\hskip 0.01\linewidth}c@{\hskip 0.01\linewidth}}
        \includegraphics[width=0.5\linewidth]{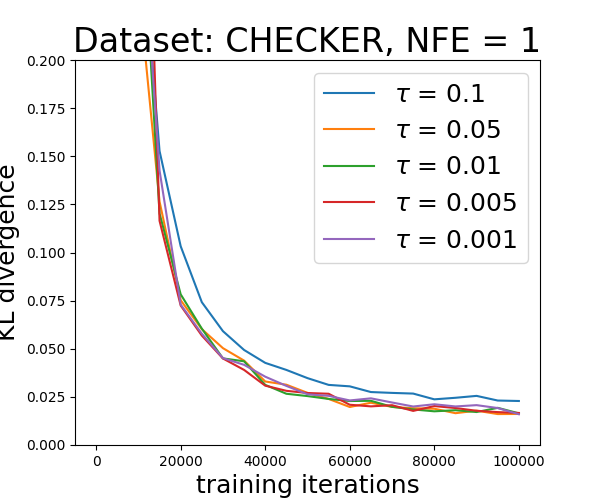} &
        \includegraphics[width=0.5\linewidth]{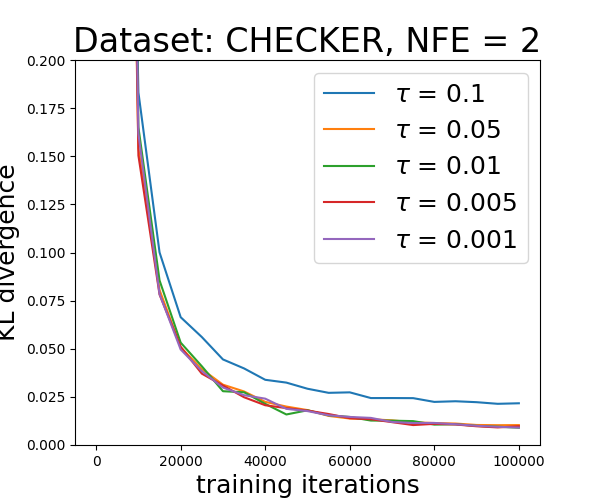} \\
        \includegraphics[width=0.5\linewidth]{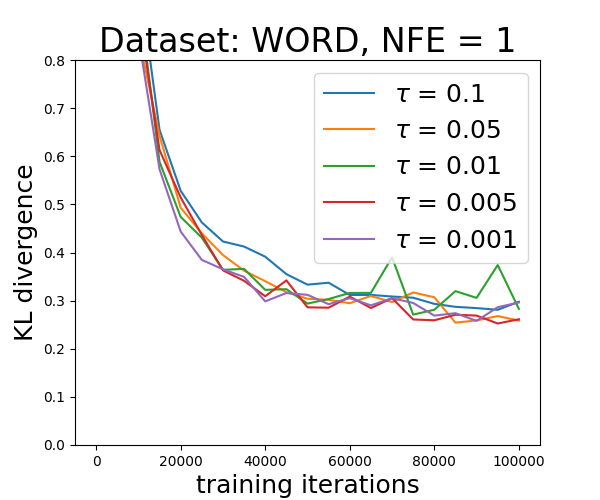} &
        \includegraphics[width=0.5\linewidth]{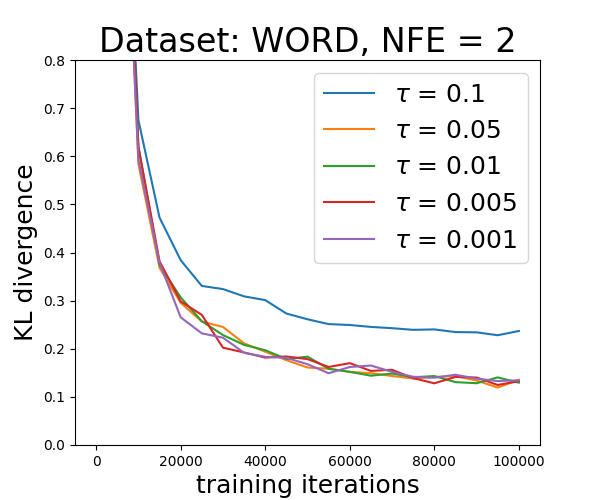} 
    \end{tabular}    
    \caption{The effect of the hyperparameter $\tau$ on the evolution of the KL divergence between distributions of student models and teacher models during training.}
    \label{fig:tau-ablation}
\end{figure}

{\bf EMA decay rate $\mu$.} As stated in Section~\ref{sec:comparison-with-other-losses}, when we trained student TTFMs, we set the EMA decay rate $\mu$ to $0$, $0.9$, $0.99$, and $0.999$ and trained a model for each value. The metrics for these models are available in Appendix~\ref{sec:comparison-setup-and-data}, but we reproduce them for some datasets in Table~\ref{tab:mu-ablation} for easier reference. We observe that, while the optimal $\mu$ varies from dataset to dataset, $\mu = 0$ (i.e., using no EMA at all) tends to rank lower, but $\mu = 0.99$ tends to rank higher. As a result, we recommend using $\mu = 0.99$ and avoiding $\mu = 0$.

\begin{table}[]
\caption{Effects of the EMA decay parameter $\mu$ on student TTFM performance for 4 datasets. The rightmost column shows fused ranks computed with RRF \cite{Cormack:2009}, using $k = 60$.}
    \label{tab:mu-ablation}\vspace{0.5em}
    \centering
    {\fontsize{7.6pt}{9pt}\selectfont
    \begin{tabular}{lrrrrc}
        \toprule
        \textbf{Loss}
        & \multicolumn{1}{c}{\textbf{NFE = 1}}
        & \multicolumn{1}{c}{\textbf{NFE = 2}} 
        & \multicolumn{1}{c}{\textbf{NFE = 4}}
        & \multicolumn{1}{c}{\textbf{NFE = 8}}
        & \multicolumn{1}{c}{\textbf{Rank}} \\
\hline
\multicolumn{2}{l}{\mc \textbf{Dataset}: WORD} & \multicolumn{4}{c}{\mc \textbf{Metric}: KL divergence 
 ($\downarrow$)} \\
$\mu = 0.0$ & 0.5638& 0.3561& 0.3185& 0.2920& 4\\
$\mu = 0.9$ & {\color{Red} 0.2475}& {\color{Green} 0.1305}& {\color{Red} 0.0962}& {\color{Red} 0.0850}& {\color{Red} 1}\\
$\mu = 0.99$ & {\color{Green} 0.2646}& {\color{Red} 0.1298}& {\color{Green} 0.1111}& {\color{Green} 0.1073}& {\color{Green} 2}\\
$\mu = 0.999$ & {\color{Blue} 0.2677}& {\color{Blue} 0.1383}& {\color{Blue} 0.1190}& {\color{Blue} 0.1088}& {\color{Blue} 3}\\
\hline
\multicolumn{2}{l}{\mc \textbf{Dataset}: HEPMASS} & \multicolumn{4}{c}{\mc \textbf{Metric}: KL divergence 
 ($\downarrow$)} \\
$\mu = 0.0$ & 1.3484& {\color{Blue} 1.0173}& 0.4186& 0.2351& 4\\
$\mu = 0.9$ & {\color{Red} 1.0920}& 1.0249& {\color{Blue} 0.3747}& {\color{Blue} 0.2056}& {\color{Blue} 3}\\
$\mu = 0.99$ & {\color{Green} 1.1232}& {\color{Red} 0.9781}& {\color{Red} 0.3605}& {\color{Green} 0.1987}& {\color{Red} 1}\\
$\mu = 0.999$ & {\color{Blue} 1.1846}& {\color{Green} 0.9983}& {\color{Green} 0.3621}& {\color{Red} 0.1897}& {\color{Green} 2}\\
\hline
\multicolumn{2}{l}{\mc \textbf{Dataset}: MNIST} & \multicolumn{4}{c}{\mc \textbf{Metric}: FID 
 ($\downarrow$)} \\
 $\mu = 0.0$ & 2.6642& 1.4715& {\color{Green} 0.8203}& {\color{Green} 0.6232}& {\color{Blue} 3}\\
$\mu = 0.9$ & {\color{Blue} 2.5324}& {\color{Blue} 1.4619}& 0.9594& {\color{Blue} 0.6377}& 4\\
$\mu = 0.99$ & {\color{Red} 2.1745}& {\color{Red} 1.1670}& {\color{Red} 0.8123}& {\color{Red} 0.6039}& {\color{Red} 1}\\
$\mu = 0.999$ & {\color{Green} 2.3011}& {\color{Green} 1.1849}& {\color{Blue} 0.8691}& 0.6705& {\color{Green} 2}\\
\hline
\multicolumn{2}{l}{\mc \textbf{Dataset}: CIFAR-10} & \multicolumn{4}{c}{\mc \textbf{Metric}: FID ($\downarrow$) } \\
$\mu = 0.0$ & 9.9793& 6.9036& {\color{Blue} 5.1475}& 4.6902& 4\\
$\mu = 0.9$ & {\color{Red} 9.3762}& {\color{Blue} 6.5876}& 5.1617& {\color{Blue} 4.6007}& {\color{Blue} 3}\\
$\mu = 0.99$ & {\color{Blue} 9.7344}& {\color{Green} 6.2690}& {\color{Green} 4.8295}& {\color{Green} 4.4825}& {\color{Green} 2}\\
$\mu = 0.999$ & {\color{Green} 9.5318}& {\color{Red} 6.2173}& {\color{Red} 4.7510}& {\color{Red} 4.4079}& {\color{Red} 1}\\
\bottomrule
    \end{tabular}
    }
\end{table}
\vspace{-1em}

\section{Conclusion}

We present initial/terminal velocity matching (ITVM), a new loss function for distilling a flow matching model to a two-time flow model (TTFM) to allow fast and flexible sample generation. Our loss is motivated by the fact that one can ensure that the network satisfies the Lagrangian PDE by forcing it (1) to copy the teacher's model velocities at the initial time $s$ and (2) to be consistent with itself near the terminal time $t$. The loss terms include two that are special cases of LFMD, and one that rewrites it by replacing the teacher model with the EMA version of the model under training. Experiments using simple hyperparameter settings show that ITVM outperforms several baselines including LFMD and that its design choices, including using two initial terms and the strategy of picking the intermediate time in the terminal term, have merit. 
With systematic and dataset-specific tuning of key hyperparameters---the noise schedule, time discretization, loss term weights, distance function, and training curriculum---we anticipate ITVM to achieve competitive performance on heavily optimized benchmarks like CIFAR-10 and ImageNet.
We believe our exploration contributes new insights to the problem of distillation of diffusion-based generative models.


\bibliography{main}

\begin{thebibliography}{48}
\providecommand{\natexlab}[1]{#1}
\providecommand{\url}[1]{\texttt{#1}}
\expandafter\ifx\csname urlstyle\endcsname\relax
  \providecommand{\doi}[1]{doi: #1}\else
  \providecommand{\doi}{doi: \begingroup \urlstyle{rm}\Url}\fi

\bibitem[Albergo \& Vanden-Eijnden(2023)Albergo and
  Vanden-Eijnden]{Albergo:interp:2023}
Albergo, M.~S. and Vanden-Eijnden, E.
\newblock Building {Normalizing} {Flows} with {Stochastic} {Interpolants}.
\newblock In \emph{International {Conference} on {Learning} {Representations}
  ({ICLR})}, 2023.

\bibitem[Ba et~al.(2016)Ba, Kiros, and Hinton]{Ba:2016}
Ba, J.~L., Kiros, J.~R., and Hinton, G.~E.
\newblock Layer normalization, 2016.
\newblock URL \url{https://arxiv.org/abs/1607.06450}.

\bibitem[Berthelot et~al.(2023)Berthelot, Autef, Lin, Yap, Zhai, Hu, Zheng,
  Talbott, and Gu]{Berthelot:TRACT:2023}
Berthelot, D., Autef, A., Lin, J., Yap, D.~A., Zhai, S., Hu, S., Zheng, D.,
  Talbott, W., and Gu, E.
\newblock {TRACT}: Denoising diffusion models with transitive closure
  time-distillation, 2023.
\newblock URL \url{https://arxiv.org/abs/2303.04248}.

\bibitem[Boffi et~al.(2024)Boffi, Albergo, and Vanden-Eijnden]{Boffi:FMM:2024}
Boffi, N.~M., Albergo, M.~S., and Vanden-Eijnden, E.
\newblock Flow map matching, 2024.
\newblock URL \url{https://arxiv.org/abs/2406.07507}.

\bibitem[Chen et~al.(2018)Chen, Rubanova, Bettencourt, and
  Duvenaud]{Chen:NODE:2018}
Chen, R. T.~Q., Rubanova, Y., Bettencourt, J., and Duvenaud, D.
\newblock Neural ordinary differential equations.
\newblock In \emph{Proceedings of the 32nd International Conference on Neural
  Information Processing Systems}, NIPS'18, pp.\  6572–6583, Red Hook, NY,
  USA, 2018. Curran Associates Inc.

\bibitem[Chong \& Forsyth(2020)Chong and Forsyth]{Chong:2020}
Chong, M.~J. and Forsyth, D.~A.
\newblock Effectively {Unbiased} {FID} and {Inception} {Score} and {Where} to
  {Find} {Them}.
\newblock In \emph{Computer {Vision} and {Pattern} {Recognition} ({CVPR})},
  pp.\  6069--6078, 2020.

\bibitem[Clevert et~al.(2016)Clevert, Unterthiner, and
  Hochreiter]{Clevert:2016}
Clevert, D., Unterthiner, T., and Hochreiter, S.
\newblock Fast and accurate deep network learning by exponential linear units
  (elus).
\newblock In Bengio, Y. and LeCun, Y. (eds.), \emph{4th International
  Conference on Learning Representations, {ICLR} 2016, San Juan, Puerto Rico,
  May 2-4, 2016, Conference Track Proceedings}, 2016.
\newblock URL \url{http://arxiv.org/abs/1511.07289}.

\bibitem[Cormack et~al.(2009)Cormack, Clarke, and Buettcher]{Cormack:2009}
Cormack, G.~V., Clarke, C. L.~A., and Buettcher, S.
\newblock Reciprocal rank fusion outperforms condorcet and individual rank
  learning methods.
\newblock In \emph{Proceedings of the 32nd International ACM SIGIR Conference
  on Research and Development in Information Retrieval}, SIGIR '09, pp.\
  758–759, New York, NY, USA, 2009. Association for Computing Machinery.
\newblock ISBN 9781605584836.
\newblock \doi{10.1145/1571941.1572114}.
\newblock URL \url{https://doi.org/10.1145/1571941.1572114}.

\bibitem[Dhariwal \& Nichol(2021)Dhariwal and Nichol]{Dhariwal:2021}
Dhariwal, P. and Nichol, A.
\newblock Diffusion models beat gans on image synthesis.
\newblock In Ranzato, M., Beygelzimer, A., Dauphin, Y., Liang, P., and Vaughan,
  J.~W. (eds.), \emph{Advances in Neural Information Processing Systems},
  volume~34, pp.\  8780--8794. Curran Associates, Inc., 2021.
\newblock URL
  \url{https://proceedings.neurips.cc/paper/2021/file/49ad23d1ec9fa4bd8d77d02681df5cfa-Paper.pdf}.

\bibitem[Goodfellow et~al.(2014)Goodfellow, Pouget-Abadie, Mirza, Xu,
  Warde-Farley, Ozair, Courville, and Bengio]{Goodfellow:2014}
Goodfellow, I., Pouget-Abadie, J., Mirza, M., Xu, B., Warde-Farley, D., Ozair,
  S., Courville, A., and Bengio, Y.
\newblock Generative adversarial nets.
\newblock In Ghahramani, Z., Welling, M., Cortes, C., Lawrence, N., and
  Weinberger, K. (eds.), \emph{Advances in Neural Information Processing
  Systems}, volume~27. Curran Associates, Inc., 2014.
\newblock URL
  \url{https://proceedings.neurips.cc/paper_files/paper/2014/file/5ca3e9b122f61f8f06494c97b1afccf3-Paper.pdf}.

\bibitem[{He} et~al.(2016){He}, {Zhang}, {Ren}, and {Sun}]{He:2016}
{He}, K., {Zhang}, X., {Ren}, S., and {Sun}, J.
\newblock Deep residual learning for image recognition.
\newblock In \emph{2016 IEEE Conference on Computer Vision and Pattern
  Recognition (CVPR)}, pp.\  770--778, 2016.
\newblock \doi{10.1109/CVPR.2016.90}.

\bibitem[Heusel et~al.(2017)Heusel, Ramsauer, Unterthiner, Nessler, and
  Hochreiter]{Huesel:2017}
Heusel, M., Ramsauer, H., Unterthiner, T., Nessler, B., and Hochreiter, S.
\newblock Gans trained by a two time-scale update rule converge to a local nash
  equilibrium.
\newblock In \emph{Proceedings of the 31st International Conference on Neural
  Information Processing Systems}, NIPS'17, pp.\  6629–6640, Red Hook, NY,
  USA, 2017. Curran Associates Inc.
\newblock ISBN 9781510860964.

\bibitem[Hinton et~al.(2015)Hinton, Vinyals, and Dean]{Hinton:2015}
Hinton, G., Vinyals, O., and Dean, J.
\newblock Distilling the knowledge in a neural network, 2015.

\bibitem[Ho et~al.(2020)Ho, Jain, and Abbeel]{Ho:2020}
Ho, J., Jain, A., and Abbeel, P.
\newblock Denoising {Diffusion} {Probabilistic} {Models}.
\newblock In \emph{Conference on {Neural} {Information} {Processing} {Systems}
  ({NeurIPS})}, 2020.

\bibitem[Huang \& Belongie(2017)Huang and Belongie]{Huang:2017:AdaIN}
Huang, X. and Belongie, S.
\newblock Arbitrary style transfer in real-time with adaptive instance
  normalization.
\newblock In \emph{ICCV}, 2017.

\bibitem[Kang et~al.(2024)Kang, Zhang, Barnes, Paris, Kwak, Park, Shechtman,
  Zhu, and Park]{Kang:Diffusion2GAN:2024}
Kang, M., Zhang, R., Barnes, C., Paris, S., Kwak, S., Park, J., Shechtman, E.,
  Zhu, J.-Y., and Park, T.
\newblock {Distilling Diffusion Models into Conditional GANs}.
\newblock In \emph{European Conference on Computer Vision (ECCV)}, 2024.

\bibitem[Karras et~al.(2022)Karras, Aittala, Aila, and Laine]{Karras:2022}
Karras, T., Aittala, M., Aila, T., and Laine, S.
\newblock Elucidating the design space of diffusion-based generative models,
  2022.
\newblock URL \url{https://arxiv.org/abs/2206.00364}.

\bibitem[Kim et~al.(2023)Kim, Lai, Liao, Murata, Takida, Uesaka, He, Mitsufuji,
  and Ermon]{Kim:CTM:2023}
Kim, D., Lai, C.-H., Liao, W.-H., Murata, N., Takida, Y., Uesaka, T., He, Y.,
  Mitsufuji, Y., and Ermon, S.
\newblock Consistency trajectory models: Learning probability flow ode
  trajectory of diffusion.
\newblock \emph{arXiv preprint arXiv:2310.02279}, 2023.

\bibitem[Kingma \& Ba(2015)Kingma and Ba]{Kingma:2015}
Kingma, D.~P. and Ba, J.
\newblock Adam: {A} method for stochastic optimization.
\newblock In Bengio, Y. and LeCun, Y. (eds.), \emph{3rd International
  Conference on Learning Representations, {ICLR} 2015, San Diego, CA, USA, May
  7-9, 2015, Conference Track Proceedings}, 2015.
\newblock URL \url{http://arxiv.org/abs/1412.6980}.

\bibitem[Kingma \& Welling(2014)Kingma and Welling]{Kingma:2014}
Kingma, D.~P. and Welling, M.
\newblock {Auto-Encoding Variational Bayes}.
\newblock In \emph{2nd International Conference on Learning Representations,
  {ICLR} 2014, Banff, AB, Canada, April 14-16, 2014, Conference Track
  Proceedings}, 2014.

\bibitem[Lipman et~al.(2023)Lipman, Chen, Ben-Hamu, Nickel, and
  Le]{Lipman:2023}
Lipman, Y., Chen, R. T.~Q., Ben-Hamu, H., Nickel, M., and Le, M.
\newblock Flow {Matching} for {Generative} {Modeling}.
\newblock In \emph{International {Conference} on {Learning} {Representations}
  ({ICLR})}, 2023.

\bibitem[Liu et~al.(2023)Liu, Gong, and Liu]{Liu:2022}
Liu, X., Gong, C., and Liu, Q.
\newblock Flow {Straight} and {Fast}: Learning to {Generate} and {Transfer}
  {Data} with {Rectified} {Flow}.
\newblock In \emph{International {Conference} on {Learning} {Representations}
  ({ICLR})}, 2023.

\bibitem[Lu et~al.(2022)Lu, Zhou, Bao, Chen, Li, and Zhu]{Lu:dpmsolver:2022}
Lu, C., Zhou, Y., Bao, F., Chen, J., Li, C., and Zhu, J.
\newblock Dpm-{Solver}: A {Fast} {ODE} {Solver} for {Diffusion} {Probabilistic}
  {Model} {Sampling} in {Around} 10 {Steps}.
\newblock In \emph{Advances in {Neural} {Information} {Processing} {Systems}},
  volume abs/2211.01095, 2022.

\bibitem[Luhman \& Luhman(2021)Luhman and Luhman]{Luhman:2021}
Luhman, E. and Luhman, T.
\newblock Knowledge distillation in iterative generative models for improved
  sampling speed, 2021.
\newblock URL \url{https://arxiv.org/abs/2101.02388}.

\bibitem[Luo et~al.(2023)Luo, Hu, Zhang, Sun, Li, and
  Zhang]{Luo:DiffInstruct:2023}
Luo, W., Hu, T., Zhang, S., Sun, J., Li, Z., and Zhang, Z.
\newblock Diff-{Instruct}: A {Universal} {Approach} for {Transferring}
  {Knowledge} {From} {Pre}-trained {Diffusion} {Models}.
\newblock In \emph{Conference on {Neural} {Information} {Processing} {Systems}
  ({NeurIPS})}, 2023.

\bibitem[Meng et~al.(2022)Meng, He, Song, Song, Wu, Zhu, and
  Ermon]{Meng:SDEdit:2022}
Meng, C., He, Y., Song, Y., Song, J., Wu, J., Zhu, J.-Y., and Ermon, S.
\newblock {SDE}dit: Guided image synthesis and editing with stochastic
  differential equations.
\newblock In \emph{International Conference on Learning Representations}, 2022.

\bibitem[Nguyen \& Tran(2024)Nguyen and Tran]{Nguyen:SwiftBrush:2024}
Nguyen, T.~H. and Tran, A.
\newblock Swiftbrush: One-step text-to-image diffusion model with variational
  score distillation.
\newblock In \emph{Proceedings of the IEEE/CVF Conference on Computer Vision
  and Pattern Recognition (CVPR)}, 2024.

\bibitem[Papamakarios et~al.(2017)Papamakarios, Pavlakou, and
  Murray]{Papamakarios:2017}
Papamakarios, G., Pavlakou, T., and Murray, I.
\newblock Masked autoregressive flow for density estimation.
\newblock In Guyon, I., Luxburg, U.~V., Bengio, S., Wallach, H., Fergus, R.,
  Vishwanathan, S., and Garnett, R. (eds.), \emph{Advances in Neural
  Information Processing Systems}, volume~30. Curran Associates, Inc., 2017.
\newblock URL
  \url{https://proceedings.neurips.cc/paper_files/paper/2017/file/6c1da886822c67822bcf3679d04369fa-Paper.pdf}.

\bibitem[Parmar et~al.(2022)Parmar, Zhang, and Zhu]{Parmar:2021}
Parmar, G., Zhang, R., and Zhu, J.-Y.
\newblock On aliased resizing and surprising subtleties in {GAN} evaluation.
\newblock In \emph{CVPR}, 2022.

\bibitem[Preechakul et~al.(2022)Preechakul, Chatthee, Wizadwongsa, and
  Suwajanakorn]{Preechakul:2022}
Preechakul, K., Chatthee, N., Wizadwongsa, S., and Suwajanakorn, S.
\newblock Diffusion autoencoders: Toward a meaningful and decodable
  representation.
\newblock In \emph{IEEE Conference on Computer Vision and Pattern Recognition
  (CVPR)}, 2022.

\bibitem[Raissi et~al.(2019)Raissi, Perdikaris, and
  Karniadakis]{Raissi:2019:PINN}
Raissi, M., Perdikaris, P., and Karniadakis, G.
\newblock Physics-informed neural networks: A deep learning framework for
  solving forward and inverse problems involving nonlinear partial differential
  equations.
\newblock \emph{Journal of Computational Physics}, 378:\penalty0 686--707,
  2019.
\newblock ISSN 0021-9991.
\newblock \doi{https://doi.org/10.1016/j.jcp.2018.10.045}.
\newblock URL
  \url{https://www.sciencedirect.com/science/article/pii/S0021999118307125}.

\bibitem[Rezende \& Mohamed(2016)Rezende and Mohamed]{Rezende:2016}
Rezende, D.~J. and Mohamed, S.
\newblock Variational inference with normalizing flows, 2016.
\newblock URL \url{https://arxiv.org/abs/1505.05770}.

\bibitem[Salimans \& Ho(2022)Salimans and Ho]{Salimans:2022}
Salimans, T. and Ho, J.
\newblock Progressive {Distillation} for {Fast} {Sampling} of {Diffusion}
  {Models}.
\newblock In \emph{International {Conference} on {Learning} {Representations}
  ({ICLR})}, 2022.

\bibitem[Salimans et~al.(2024)Salimans, Mensink, Heek, and
  Hoogeboom]{Salimans:MM:2024}
Salimans, T., Mensink, T., Heek, J., and Hoogeboom, E.
\newblock Multistep distillation of diffusion models via moment matching, 2024.
\newblock URL \url{https://arxiv.org/abs/2406.04103}.

\bibitem[Sauer et~al.(2023)Sauer, Lorenz, Blattmann, and
  Rombach]{Sauer:ADD:2023}
Sauer, A., Lorenz, D., Blattmann, A., and Rombach, R.
\newblock Adversarial diffusion distillation, 2023.
\newblock URL \url{https://arxiv.org/abs/2311.17042}.

\bibitem[Schulman(2020)]{Schulman:2020}
Schulman, J.
\newblock Approximating {KL} divergence.
\newblock \url{http://joschu.net/blog/kl-approx.html}, 2020.
\newblock Accessed: 2025-01-28.

\bibitem[Shaul et~al.(2023)Shaul, P{\' e}rez, Chen, Thabet, Pumarola, and
  Lipman]{Shaul:bespoke:2023}
Shaul, N., P{\' e}rez, J.~C., Chen, R. T.~Q., Thabet, A.~K., Pumarola, A., and
  Lipman, Y.
\newblock Bespoke {Solvers} for {Generative} {Flow} {Models}.
\newblock In \emph{International {Conference} on {Learning} {Representations}},
  volume abs/2310.19075, 2023.

\bibitem[Sohl-Dickstein et~al.(2015)Sohl-Dickstein, Weiss, Maheswaranathan, and
  Ganguli]{SohlDickstein:2015}
Sohl-Dickstein, J., Weiss, E., Maheswaranathan, N., and Ganguli, S.
\newblock Deep unsupervised learning using nonequilibrium thermodynamics.
\newblock In Bach, F. and Blei, D. (eds.), \emph{Proceedings of the 32nd
  International Conference on Machine Learning}, volume~37 of \emph{Proceedings
  of Machine Learning Research}, pp.\  2256--2265, Lille, France, 07--09 Jul
  2015. PMLR.
\newblock URL \url{https://proceedings.mlr.press/v37/sohl-dickstein15.html}.

\bibitem[Song \& Dhariwal(2024)Song and Dhariwal]{Song:ICM:2023}
Song, Y. and Dhariwal, P.
\newblock Improved {Techniques} for {Training} {Consistency} {Models}.
\newblock In \emph{The {Twelfth} {International} {Conference} on {Learning}
  {Representations}}, volume abs/2310.14189, 2024.

\bibitem[Song et~al.(2021)Song, Sohl-Dickstein, Kingma, Kumar, Ermon, and
  Poole]{Song:Score:2021}
Song, Y., Sohl-Dickstein, J.~N., Kingma, D.~P., Kumar, A., Ermon, S., and
  Poole, B.
\newblock Score-{Based} {Generative} {Modeling} through {Stochastic}
  {Differential} {Equations}.
\newblock In \emph{International {Conference} on {Learning} {Representations}},
  volume abs/2011.13456, 2021.

\bibitem[Song et~al.(2023)Song, Dhariwal, Chen, and Sutskever]{Song:CM:2023}
Song, Y., Dhariwal, P., Chen, M., and Sutskever, I.
\newblock Consistency {Models}.
\newblock In \emph{International {Conference} on {Machine} {Learning}
  ({ICML})}, pp.\  32211--32252, 2023.

\bibitem[Srivastava et~al.(2014)Srivastava, Hinton, Krizhevsky, Sutskever, and
  Salakhutdinov]{Srivastava:2014}
Srivastava, N., Hinton, G., Krizhevsky, A., Sutskever, I., and Salakhutdinov,
  R.
\newblock Dropout: A simple way to prevent neural networks from overfitting.
\newblock \emph{Journal of Machine Learning Research}, 15\penalty0
  (56):\penalty0 1929--1958, 2014.
\newblock URL \url{http://jmlr.org/papers/v15/srivastava14a.html}.

\bibitem[Tee et~al.(2024)Tee, Zhang, Yoon, Gowda, Kim, and Yoo]{Tee:2024:PID}
Tee, J. T.~J., Zhang, K., Yoon, H.~S., Gowda, D.~N., Kim, C., and Yoo, C.~D.
\newblock Physics informed distillation for diffusion models, 2024.
\newblock URL \url{https://arxiv.org/abs/2411.08378}.

\bibitem[Vaswani et~al.(2017)Vaswani, Shazeer, Parmar, Uszkoreit, Jones, Gomez,
  Kaiser, and Polosukhin]{Vaswani:2017}
Vaswani, A., Shazeer, N., Parmar, N., Uszkoreit, J., Jones, L., Gomez, A.~N.,
  Kaiser, L.~u., and Polosukhin, I.
\newblock Attention is all you need.
\newblock In Guyon, I., Luxburg, U.~V., Bengio, S., Wallach, H., Fergus, R.,
  Vishwanathan, S., and Garnett, R. (eds.), \emph{Advances in Neural
  Information Processing Systems}, volume~30. Curran Associates, Inc., 2017.
\newblock URL
  \url{https://proceedings.neurips.cc/paper_files/paper/2017/file/3f5ee243547dee91fbd053c1c4a845aa-Paper.pdf}.

\bibitem[Xie et~al.(2024)Xie, Xiao, Kingma, Hou, Wu, Murphy, Salimans, Poole,
  and Gao]{Xie:EM:2024}
Xie, S., Xiao, Z., Kingma, D.~P., Hou, T., Wu, Y.~N., Murphy, K.~P., Salimans,
  T., Poole, B., and Gao, R.
\newblock {EM} distillation for one-step diffusion models, 2024.
\newblock URL \url{https://arxiv.org/abs/2405.16852}.

\bibitem[Yin et~al.(2024{\natexlab{a}})Yin, Gharbi, Park, Zhang, Shechtman,
  Durand, and Freeman]{yin:dmd2:2024}
Yin, T., Gharbi, M., Park, T., Zhang, R., Shechtman, E., Durand, F., and
  Freeman, W.~T.
\newblock Improved {Distribution} {Matching} {Distillation} for {Fast} {Image}
  {Synthesis}.
\newblock In \emph{The {Thirty}-eighth {Annual} {Conference} on {Neural}
  {Information} {Processing} {Systems}}, volume abs/2405.14867,
  2024{\natexlab{a}}.

\bibitem[Yin et~al.(2024{\natexlab{b}})Yin, Gharbi, Zhang, Shechtman, Durand,
  Freeman, and Park]{yin:dmd:2024}
Yin, T., Gharbi, M., Zhang, R., Shechtman, E., Durand, F., Freeman, W., and
  Park, T.
\newblock One-step {Diffusion} with {Distribution} {Matching} {Distillation}.
\newblock In \emph{IEEE/{CVF} {Conference} on {Computer} {Vision} and {Pattern}
  {Recognition} ({CVPR})}, 2024{\natexlab{b}}.

\bibitem[Zheng et~al.(2023)Zheng, Nie, Vahdat, Azizzadenesheli, and
  Anandkumar]{Zheng:DFNO:2023}
Zheng, H., Nie, W., Vahdat, A., Azizzadenesheli, K., and Anandkumar, A.
\newblock Fast {Sampling} of {Diffusion} {Models} via {Operator} {Learning}.
\newblock In \emph{International {Conference} on {Machine} {Learning}
  ({ICML})}, pp.\  42390--42402, 2023.

\end{thebibliography}
\bibliographystyle{icml2025}

\newpage
\appendix
\onecolumn

\section{Proofs} \label{sec:proofs}

We use the same notations as those in Section~\ref{sec:preliminary}. We repeat some definitions  here for easy reference. Let $v^{\eta}: \Real \times \Real^d \ra \Real^d$ denote the flow matching model with parameters $\eta$ that serves as the teacher. The implicit flow $\phi^\eta_{s,t}$ is defined as follows.
\begin{align*}
\phi_{s,s}^\eta(x) &= x, \mbox{ and }, \\
\phi^\eta_{s,t}(x) &= x + \int_{s}^t v^{\eta}(\phi_{s,u}^\eta(x))\, \dee u\mbox{ for } s < t.
\end{align*}
We note that $\phi^{\eta}$ satisfies the Lagrangian PDE, and this can be shown by just differentiating the second equation with respect to $t$. Let $v^\theta: \Real \times \Real \times \Real^d \ra \Real^d$ denote the average velocity model with parameters $\theta$. Lastly, let $\phi_{s,t}^\theta(x) = x + (t-s)v^{\theta}_{s,t}(x)$ denote a TTFM.

\begin{lemma} 
Suppose there exists $\tau^* > 0$ such that---for all $x \in \Real^d$, $0 \leq s < t \leq 1$, and $\tau < \min\{ \tau^*, t-s \}$---it is true that
\begin{enumerate}
    \item[(a)] $\phi^\theta_{s,s+\tau}(x) = \phi^{\eta}_{s,s+\tau}(x)$.
    \item[(b)] $\phi^\theta_{s,t}(x) = \phi^\theta_{t-\tau,t}(\phi^\theta_{s,t-\tau}(x))$.
\end{enumerate}
Then, $\phi^{\theta}_{s,t}(x) = \phi^{\eta}_{s,t}(x)$ for all $x$ and $0 \leq s \leq t \leq 1$.
\end{lemma}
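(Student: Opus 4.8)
The plan is to prove the stronger statement by induction on the interval length, measured in units of $\tau$ which is not quite possible since $\tau$ ranges over a continuum; so instead I would argue by a continuity/bootstrapping argument on the gap $t - s$. First, fix any $\tau < \tau^*$ and observe that hypotheses (a) and (b) together give, for any $s$ and any integer $k \geq 1$ with $s + k\tau \leq 1$,
\begin{align*}
\phi^\theta_{s, s+k\tau}(x) = \phi^\theta_{s+(k-1)\tau,\, s+k\tau}\bigl( \phi^\theta_{s, s+(k-1)\tau}(x) \bigr),
\end{align*}
so by induction on $k$ we can write $\phi^\theta_{s,s+k\tau}$ as a $k$-fold composition of maps of the form $\phi^\theta_{r, r+\tau}$, each of which equals $\phi^\eta_{r,r+\tau}$ by (a). Since the implicit flow $\phi^\eta$ satisfies its own semigroup/consistency property $\phi^\eta_{s,t} = \phi^\eta_{u,t}\circ\phi^\eta_{s,u}$ (this follows directly from the integral definition and uniqueness of ODE solutions), the same $k$-fold composition collapses to $\phi^\eta_{s,s+k\tau}$. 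Hence $\phi^\theta_{s,t}(x) = \phi^\eta_{s,t}(x)$ whenever $t - s$ is an integer multiple of $\tau$.

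The second step is to remove the ``integer multiple of $\tau$'' restriction. Here I would use the freedom to choose $\tau$: given arbitrary $s < t \leq 1$, pick any small $\tau < \min\{\tau^*, t-s\}$ and write $t - s = k\tau + \rho$ with $0 \leq \rho < \tau$ and $k \geq 1$ an integer. Applying (b) once at the top with step $\rho$ would require $\rho$ to be a legal step size, which it is as long as $\rho > 0$ and $\rho \leq \min\{\tau^*, t - s\}$; combined with the already-established equality on $[s, s + k\tau]$ (an interval of $\tau$-multiple length, using step size $\tau$) and on the short piece $[s+k\tau, t]$ of length $\rho < \tau^*$ via (a), we get $\phi^\theta_{s,t} = \phi^\theta_{s+k\tau, t}\circ \phi^\theta_{s, s+k\tau} = \phi^\eta_{s+k\tau,t}\circ\phi^\eta_{s,s+k\tau} = \phi^\eta_{s,t}$. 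The case $\rho = 0$ is already covered. One subtlety: hypothesis (a) is stated for step $\tau$ ranging over $(0, \min\{\tau^*, t-s\})$, so the short piece of length $\rho$ is directly an instance of (a) applied at the point $s + k\tau$ with step $\rho$; and hypothesis (b) likewise applies with $t - \tau$ replaced by $s + k\tau = t - \rho$, so no limiting argument is even needed.

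I expect the main obstacle to be bookkeeping about which step sizes are admissible: (b) is stated as $\phi^\theta_{s,t}(x) = \phi^\theta_{t-\tau,t}(\phi^\theta_{s,t-\tau}(x))$ for $\tau < \min\{\tau^*, t-s\}$, so it is really a single-step-peel-off from the \emph{right} end, and I must make sure every decomposition I use is of exactly that shape with a valid $\tau$. A clean way to organize this is: (1) prove $\phi^\theta_{a,b} = \phi^\eta_{a,b}$ whenever $b - a \leq \tau^*$ — immediate from (a) if $b-a>0$, trivial if $b=a$; (2) prove the semigroup law for $\phi^\eta$; (3) for general $s < t$, peel off from the right end in steps each of size $\leq \tau^*$ (the last one possibly shorter) using (b) repeatedly, reaching $\phi^\theta_{s,t}$ as a finite composition of short-interval maps, each of which is the corresponding $\phi^\eta$ map by step (1), then recompose via step (2). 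Continuity of the functions involved is not actually needed for this argument since everything is a finite composition; it would only enter if one insisted on taking $\tau \to 0$, which the above avoids.
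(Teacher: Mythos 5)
Your proposal is correct and follows essentially the same route as the paper's proof: split $[s,t]$ into subintervals shorter than $\tau^*$, peel them off from the right end with (b), replace each short piece by $\phi^\eta$ via (a), and recompose using the semigroup/consistency property of the implicit flow $\phi^\eta$. The only differences are cosmetic (you use steps of length $\tau$ plus a remainder $\rho$, while the paper uses $N$ equal pieces of length $(t-s)/N < \tau^*$), and your care about strict-versus-nonstrict step-size admissibility matches an inconsistency already present between the paper's main-text and appendix statements.
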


\begin{proof}
If $s = t$, then $\phi_{s,t}^\theta(x) = x = \phi_{s,t}^\eta(x).$

Assume that $s < t$. Let $N$ be a positive integer such that $\tau^* > (t-s)/N$. For $n = 0, 1, \dotsc, N$, let $u_n = s + n(t-s)/N$. We have that $\phi_{s,t}^\theta(x) =  \phi_{u_0, u_N}^{\theta}(x)$. According to Property (b), we have that
\begin{align*}
    \phi_{u_0, u_n}^{\theta}(x) = \phi_{u_{n-1}, u_n}^{\theta}(\phi^\theta_{u_0, u_{n-1}}(x))
\end{align*}
for any $1 \leq n \leq N$ because $u_{n} - u_{n-1} = (t-s)/N < \tau^*$. Using the above equation, we have that
\begin{align*}
    \phi_{u_0, u_N}^{\theta}(x) = \phi_{u_{N-1}, u_N}^{\theta}(\phi^\theta_{u_{N-2}, u_{N-2}}(\dotsb \phi^\theta_{u_1,u_2}(\phi^{\theta}_{u_0,u_1}(x)) \dotsb))
\end{align*}
Using Property (a), we can change each application of $\phi^\theta$ on the right hand side to $\phi^{\eta}$ because, again, $u_n-u_{n-1}<\tau^*$.
\begin{align*}
    \phi_{u_0, u_N}^{\theta}(x) &= \phi_{u_{N-1}, u_N}^{\eta}(\phi^\eta_{u_{N-2}, u_{N-2}}(\dotsb \phi^\eta_{u_1,u_2}(\phi^{\eta}_{u_0,u_1}(x)) \dotsb)) \\
    &= \phi_{u_0, u_N}^{\eta}(x).
\end{align*}
The last line follows from the fact that $\phi^\eta$ automatically satisfies the consistency property.
\end{proof}

\begin{lemma}
    Suppose that $v^\theta$ and $\phi^\theta$ satisfies the following properties.
    \begin{enumerate}
        \item[(A)] $v^\theta_{t,t}(x) = v^\eta_{t}(x)$ for all $t \in [0,1]$ and $x \in \Real^d$.
        \item[(B)] There exists $\tau^* > 0$ such that $\phi^\theta_{s,t}(x) = \phi^\theta_{t-\tau,t}(\phi^\theta_{s,t-\tau}(x))$ for all $x \in \Real^d$, $0 \leq s < t \leq 1$ and $\tau \leq \min\{\tau^*, t-s\}$.
    \end{enumerate}
    Then, $\phi^\theta$ satisfies the Lagrangian PDE:
    \begin{align*}
        \frac{\partial}{\partial t} \phi^\theta_{s,t}(x) = v^{\eta}_t(\phi^\theta_{s,t}(x))
    \end{align*}
\end{lemma}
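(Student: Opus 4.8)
The plan is to use the terminal consistency property (B) to rewrite the backward difference quotient of $t \mapsto \phi^\theta_{s,t}(x)$ as an average velocity $v^\theta$ evaluated at a pair of times collapsing onto the diagonal, and then to pass to the limit, invoking Property (A) at the last step to identify the limiting instantaneous velocity with that of the teacher. This mirrors the structure of the preceding lemma's proof, but carried out infinitesimally rather than over a finite subdivision.

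Concretely, fix $s$ and $t$ with $0 \le s < t \le 1$, fix $x \in \Real^d$, and take any $0 < \tau \le \min\{\tau^*, t-s\}$. Applying (B) and then unfolding the parameterization $\phi^\theta_{a,b}(y) = y + (b-a)v^\theta_{a,b}(y)$ with $a = t-\tau$, $b = t$, and $y = \phi^\theta_{s,t-\tau}(x)$ gives
$$\phi^\theta_{s,t}(x) = \phi^\theta_{t-\tau,t}\big(\phi^\theta_{s,t-\tau}(x)\big) = \phi^\theta_{s,t-\tau}(x) + \tau\, v^\theta_{t-\tau,t}\big(\phi^\theta_{s,t-\tau}(x)\big),$$
hence
$$\frac{\phi^\theta_{s,t}(x) - \phi^\theta_{s,t-\tau}(x)}{\tau} = v^\theta_{t-\tau,t}\big(\phi^\theta_{s,t-\tau}(x)\big).$$
Now let $\tau \to 0^+$. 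Under the mild assumptions—continuity of $v^\theta$ in its three arguments, which in particular makes $\phi^\theta$ continuous (indeed $C^1$) in $t$—the left-hand side converges to the left time derivative $\tfrac{\partial}{\partial t}\phi^\theta_{s,t}(x)$, while on the right-hand side $\phi^\theta_{s,t-\tau}(x) \to \phi^\theta_{s,t}(x)$ and $v^\theta_{t-\tau,t}(\cdot) \to v^\theta_{t,t}(\cdot)$, so the right-hand side converges to $v^\theta_{t,t}\big(\phi^\theta_{s,t}(x)\big)$. Property (A) then yields $v^\theta_{t,t}\big(\phi^\theta_{s,t}(x)\big) = v^\eta_t\big(\phi^\theta_{s,t}(x)\big)$, which is exactly the Lagrangian PDE.

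To be fully rigorous I would also run the symmetric forward computation: applying (B) with terminal time $t+\tau$ gives $\phi^\theta_{s,t+\tau}(x) = \phi^\theta_{t,t+\tau}\big(\phi^\theta_{s,t}(x)\big) = \phi^\theta_{s,t}(x) + \tau\, v^\theta_{t,t+\tau}\big(\phi^\theta_{s,t}(x)\big)$, so the forward difference quotient tends to the same limit; thus both one-sided derivatives exist and agree, and the derivative genuinely exists. Alternatively one just notes that $t \mapsto x + (t-s)v^\theta_{s,t}(x)$ is differentiable when $v^\theta$ is $C^1$, so the one-sided limit already equals the derivative. The degenerate case $s=t$ is handled directly from the parameterization together with (A) and needs neither argument. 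I do not anticipate any genuine obstacle here; the only points requiring care are (i) verifying that the parameter constraints in (B) remain satisfied when the terminal time is shifted by $\pm\tau$, and (ii) making the vague phrase ``mild assumptions'' precise—stating exactly which continuity/differentiability hypotheses on $v^\theta$ license the limit exchange—since the whole argument hinges on that interchange.
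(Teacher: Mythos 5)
Your proposal is correct and follows essentially the same route as the paper's own proof: both use Property (B) to replace $\phi^\theta_{s,t}(x)$ in the backward difference quotient by $\phi^\theta_{t-\tau,t}(\phi^\theta_{s,t-\tau}(x))$, unfold the parameterization $\phi^\theta_{a,b}(y) = y + (b-a)v^\theta_{a,b}(y)$ to obtain $v^\theta_{t-\tau,t}(\phi^\theta_{s,t-\tau}(x))$, pass to the limit $\tau \to 0$ by continuity, and invoke Property (A) at the end. Your additional remarks about one-sided derivatives and the degenerate case $s=t$ only add rigor that the paper glosses over; no substantive difference.
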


\begin{proof} 
We have that
\begin{align*}
    \frac{\partial}{\partial t} \phi^\theta_{s,t}(x) = \lim_{\tau \ra 0} \frac{\phi^\theta_{s,t}(x) - \phi^{\theta}_{s,t-\tau}(x) }{\tau}.
\end{align*}
By Property (B), we know that $\phi^\theta_{s,t}(x) = \phi^\theta_{t-\tau,t}(\phi^\theta_{s,t-\tau}(x))$ when $\tau$ is small enough. Hence,
\begin{align*}
    \frac{\partial}{\partial t} \phi^\theta_{s,t}(x) 
    &= \lim_{\tau \ra 0} \frac{\phi^\theta_{t-\tau,t}(\phi^{\theta}_{s,t-\tau}(x)) - \phi^{\theta}_{s,t-\tau}(x) }{\tau} \\
    &= \lim_{\tau \ra 0} \frac{\phi^{\theta}_{s,t-\tau}(x) + \tau v_{t-\tau,t}^\theta(\phi^{\theta}_{s,t-\tau}(x)) - \phi^{\theta}_{s,t-\tau}(x) }{\tau} \\
    &= \lim_{\tau \ra 0} v_{t-\tau,t}^\theta(\phi^{\theta}_{s,t-\tau}(x)) \\
    &= v_{t,t}^\theta(\phi^{\theta}_{s,t}(x)) \\
    &= v_{t}^\eta(\phi^{\theta}_{s,t}(x)).
\end{align*}
The last line follows from Property (A).
\end{proof}

The next theorem characterizes the output of a model successfully trained with the ITVM loss. We assume that $\phi^\eta_{s,s-\tau}(x_s)$ is computed with a one-step ODE solver such as Euler's, Heun's, or Runge-Kutta. Let $\mcal{S}^{\eta,h}_{s,t}(x)$ denotes the result of using the solver to compute $\phi_{s,t}^\eta(x)$ with step size $h$. 

\begin{theorem}
Assume the following statements are true.
\begin{enumerate}
    \item[(a)] $\tau = 1/N$ where $N$ is positive integer. 
    \item[(b)] In the IAVM loss, we compute $\phi^\eta_{s,t}(x_s)$ by running the ODE solver for one step with step size $\tau$. In other words, the IAVM loss is given by
    \begin{align*}
        \mcal{L}_{\operatorname{IAVM}}(x) = E_{\substack{s \sim \mcal{U}[0,1-\tau],\\x_s \sim p_s}} \bigg[ \bigg\| v^\theta_{s,s+\tau}(x_s) - \frac{\mcal{S}_{s,s+\tau}^{\eta,\tau}(x_s) - x_s}{\tau} \bigg\|^2 \bigg]
    \end{align*}
    \item[(b)] $v^\theta$, $v^\eta$, $\phi^\theta$ and $\mcal{S}^{\eta,\tau}$, as functions, are continuous.         
    \item[(c)] The TTFM $\phi^{\theta}$ has been trained to the point that $\langle \theta \rangle = \theta$. In other words, the parameters have reached a steady state.
    \item[(d)] $\mcal{L}_{\operatorname{ITVM}} = 0$.
\end{enumerate}
Then,
\begin{align*}
    \phi_{m\tau, n\tau}^{\theta}(x) = \mcal{S}_{m\tau, n\tau}^{\eta,\tau}(x).
\end{align*}
for all $x \in \Real^d$ and integers $m, n$ such that $0 \leq m \leq n \leq N$.
\end{theorem}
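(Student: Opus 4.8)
The plan is to first convert the hypothesis $\mcal{L}_{\operatorname{ITVM}} = 0$ into exact pointwise functional identities, and then run a (strong) induction on the grid-interval length $\ell = n - m$ to obtain $\phi^\theta_{m\tau,n\tau} = \mcal{S}^{\eta,\tau}_{m\tau,n\tau}$. Since $\mcal{L}_{\operatorname{ITVM}}$ is a sum of non-negative expectations, each of $\mcal{L}_{\operatorname{IIVM}}$, $\mcal{L}_{\operatorname{IAVM}}$, $\mcal{L}_{\operatorname{TVM}}$ must vanish. The key enabling observation is that $p_t$ has a strictly positive density on all of $\Real^d$ for every $t \in [0,1]$ (its conditional $\mcal{N}(t x_{\data}, \sigma_t^2 I)$ has $\sigma_t \ge \sigma_{\min} > 0$), so a vanishing non-negative continuous integrand taken over $x_t \sim p_t$ --- together with an average over times drawn from intervals of positive Lebesgue measure --- must be identically zero. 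Using the stated joint continuity of $v^\theta$, $v^\eta$, $\phi^\theta$ and $\mcal{S}^{\eta,\tau}$ in all arguments, I would then upgrade ``vanishes almost everywhere'' to ``vanishes everywhere.''

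From $\mcal{L}_{\operatorname{IAVM}} = 0$ this yields, for all $s \in [0,1-\tau]$ and all $x$, $v^\theta_{s,s+\tau}(x) = (\mcal{S}^{\eta,\tau}_{s,s+\tau}(x) - x)/\tau$, hence $\phi^\theta_{s,s+\tau}(x) = x + \tau v^\theta_{s,s+\tau}(x) = \mcal{S}^{\eta,\tau}_{s,s+\tau}(x)$; specializing to $s = m\tau$ with $0 \le m \le N-1$ gives the length-$1$ base case. From $\mcal{L}_{\operatorname{TVM}} = 0$, together with $\langle\theta\rangle = \theta$ and the fact that the stop-gradient operator leaves forward values unchanged, I get for all $0 \le s \le 1-\tau$, $s + \tau \le t \le 1$, and all $x$ that $\tfrac{1}{\tau}\big(\phi^\theta_{s,t}(x) - \phi^\theta_{s,t-\tau}(x)\big) = v^\theta_{t-\tau,t}(\phi^\theta_{s,t-\tau}(x))$; since the parameterization gives $\phi^\theta_{t-\tau,t}(y) = y + \tau v^\theta_{t-\tau,t}(y)$, this rearranges to the exact \emph{consistency} identity $\phi^\theta_{s,t}(x) = \phi^\theta_{t-\tau,t}(\phi^\theta_{s,t-\tau}(x))$ on the grid. (The $\mcal{L}_{\operatorname{IIVM}}$ term plays no role in this theorem: IAVM supplies the base case and TVM the inductive step.) I would also record the matching property on the solver side: because $\mcal{S}^{\eta,\tau}$ comes from a one-step method, running it over $[m\tau, n\tau]$ with step $\tau$ is the composition of running it over $[m\tau,(n-1)\tau]$ and then taking one further step, i.e.\ $\mcal{S}^{\eta,\tau}_{m\tau,n\tau} = \mcal{S}^{\eta,\tau}_{(n-1)\tau,n\tau} \circ \mcal{S}^{\eta,\tau}_{m\tau,(n-1)\tau}$.

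The induction on $\ell = n-m$ then goes: $\ell = 0$ is trivial (both sides are $x$), $\ell = 1$ is the IAVM consequence above, and for $\ell \ge 1$, assuming the claim for all grid intervals of length at most $\ell$, take $m,n$ with $n - m = \ell+1$ (so $0 \le m \le N-1$ and $m+1 \le n \le N$). The consistency identity with $s = m\tau$, $t = n\tau$ gives
\begin{align*}
\phi^\theta_{m\tau,n\tau}(x) = \phi^\theta_{(n-1)\tau,n\tau}\big(\phi^\theta_{m\tau,(n-1)\tau}(x)\big) = \mcal{S}^{\eta,\tau}_{(n-1)\tau,n\tau}\big(\mcal{S}^{\eta,\tau}_{m\tau,(n-1)\tau}(x)\big) = \mcal{S}^{\eta,\tau}_{m\tau,n\tau}(x),
\end{align*}
where the middle step applies the induction hypothesis to the inner map (length $\ell$) and the outer map (length $1$), and the last step is the solver composition property. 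This is exactly the claimed identity.

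The main obstacle I anticipate is Step 1 --- the passage from ``$\mcal{L}_{\operatorname{ITVM}} = 0$'' to genuinely pointwise identities --- since everything downstream is exact algebra only because of it. This is where the full-support property of the $p_t$'s and the joint continuity hypothesis are essential; without joint continuity in the time variables one would control the identities only for almost every $(s,t)$, which is not enough to evaluate them at the dyadic grid points $m\tau,\, n\tau$. A minor additional point to handle carefully is the degenerate case $N = 1$ (then $\tau = 1$, the TVM sampling region is a single point, and only $\ell \le 1$ occurs, so IAVM alone suffices), along with checking that all index ranges invoked ($s = m\tau \in [0,1-\tau]$, $t = n\tau \in [s+\tau,1]$) lie inside the domains over which the loss terms are defined.
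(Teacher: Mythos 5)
Your proposal is correct and follows essentially the same route as the paper's own proof: split $\mcal{L}_{\operatorname{ITVM}}=0$ into its three non-negative terms, use the full support of $p_s$ together with continuity (and $\langle\theta\rangle=\theta$) to turn the vanishing of IAVM and TVM into the pointwise identities $\phi^\theta_{s,s+\tau}=\mcal{S}^{\eta,\tau}_{s,s+\tau}$ and $\phi^\theta_{s,t}=\phi^\theta_{t-\tau,t}\circ\phi^\theta_{s,t-\tau}$, and then compose one-step maps across the grid using the one-step solver's composition property. Your formal induction on $n-m$ is just a tidier packaging of the paper's repeated unrolling of the consistency identity, and your observations that IIVM is not needed and that $N=1$ is degenerate are consistent with the paper's argument.
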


\begin{proof}
We have that
\begin{align*}
    \mcal{L}_{\operatorname{ITVM}} = \mcal{L}_{\operatorname{IIVM}} + \mcal{L}_{\operatorname{IAVM}} + \mcal{L}_{\operatorname{TVM}} = 0.
\end{align*}
Because all terms are expectations of non-negative random variables, it must be the case that the individual terms must also be zero.

Consider the TVM term.
\begin{align*}
    0 
    = \mcal{L}_{\operatorname{TVM}} 
    &= E_{\substack{s \sim \mcal{U}[0,1-\tau],\\ t \sim \mcal{U}[s+\tau,1] ,\\x_s \sim p_s}}\bigg[\bigg\| \frac{\phi_{s,t}^\theta(x_s) - \phi_{s,t-\tau}^\theta(x_s)}{\tau} - v^{\langle \theta \rangle}_{t-\tau,t}(\phi_{s,t-\tau}^{[\theta]}(x_s))  \bigg\|^2\bigg] \\
    &= E_{\substack{s \sim \mcal{U}[0,1-\tau],\\ t \sim \mcal{U}[s+\tau,1] ,\\x_s \sim p_s}}\bigg[\bigg\| \frac{\phi_{s,t}^\theta(x_s) - \phi_{s,t-\tau}^\theta(x_s)}{\tau} - v^{ \theta }_{t-\tau,t}(\phi_{s,t-\tau}^{\theta }(x_s))  \bigg\|^2\bigg].
\end{align*}
As $p_s = p_{\data} * \mcal{N}(0, \sigma_s^2 I)$ where $\sigma_s = 1 - (1 - \sigma_{\min})s > 0$, it follows that $p_s$ has infinite support like $\mcal{N}(0, \sigma_s^2 I)$. Moreover, because $\phi^\theta$ and $v^\theta$ are continuous, it follows that the expression inside the expectation bracket is also continuous. It follows that
\begin{align*}
    \bigg\| \frac{\phi_{s,t}^\theta(x_s) - \phi_{s,t-\tau}^\theta(x_s)}{\tau} - v^{\theta}_{t-\tau,t}(\phi_{s,t-\tau}^\theta(x_s)) \bigg\|^2
\end{align*}
must be zero for any $x_s \in \Real^d$ and $0 \leq s < s+\tau \leq t \leq 1$. Otherwise, the expectation would not be zero. As a result, the consistency property
\begin{align}
    \frac{\phi_{s,t}^\theta(x_s) - \phi_{s,t-\tau}^\theta(x_s)}{\tau} &= v^{\theta}_{t-\tau,t}(\phi_{s,t-\tau}^\theta(x_s)) \notag \\    
    \phi_{s,t}^\theta(x_s) &= \phi_{s,t-\tau}^\theta(x_s) + \tau v^{\theta}_{t-\tau,t}(\phi_{s,t-\tau}^\theta(x_s)) \notag \\
    \phi_{s,t}^\theta(x_s) &= \phi_{t-\tau,t}^\theta(\phi_{s,t-\tau}^\theta(x_s)) \label{eqn:tvm-consistency}
\end{align}
holds for all $x_s$, $s$, and $t$ in the same ranges as above.

Next, consider the IAVM term. We have that it must be zero. So,
\begin{align*}
    \mcal{L}_{\operatorname{IAVM}} 
    = E_{\substack{s \sim \mcal{U}[0,1-\tau],\\x_s \sim p_s}} \bigg[ \bigg\| v^\theta_{s,s+\tau}(x_s) - \frac{\mcal{S}^{\eta,\tau}_{s,s+\tau}(x_s)-x_s}{\tau} \bigg\|^2 \bigg] = 0.
\end{align*}
Using an argument similar to what we applied to the TVM loss, we can deduce that
\begin{align}
v^\theta_{s,s+\tau}(x_s) &= \frac{\mcal{S}^{\eta,\tau}_{s,s+\tau}(x_s)-x_s}{\tau} \notag \\
x_s + \tau v^\theta_{s,s+\tau}(x_s) &= \mcal{S}^{\eta,\tau}_{s,s+\tau}(x_s) \notag \\
\phi^\theta_{s,s+\tau}(x_s) &= \mcal{S}^{\eta,\tau}_{s,s+\tau}(x_s) \label{eqn:iavm-agreement}
\end{align}
for all $s \in [0,1-\tau]$ and $x_s \in \Real^d$.

Let $m$ and $n$ be integers such that $0 \leq m \leq n \leq N$. With \eqref{eqn:tvm-consistency}, we can show that
\begin{align*}
    \phi^{\theta}_{m\tau, n\tau}(x) 
    &= \phi^{\theta}_{(n-1)\tau, n\tau}(\phi^{\theta}_{m\tau, (n-1)\tau}(x)) \\
    &= \phi^{\theta}_{(n-1)\tau, n\tau}( \phi^{\theta}_{(n-2)\tau, (n-1)\tau}(\phi^{\theta}_{m\tau, (n-2)\tau}(x)) ) \\
    &\ \ \vdots \\
    &= \phi^{\theta}_{(n-1)\tau, n\tau}( \phi^{\theta}_{(n-2)\tau, (n-1)\tau}(\dotsb (\phi^{\theta}_{(m+1)\tau, (m+2)\tau} (\phi^{\theta}_{m\tau, (m+1)\tau}(x))) \dotsb) )
\end{align*}
With \eqref{eqn:iavm-agreement}, we have that 
\begin{align*}
    \phi^{\theta}_{m\tau, n\tau}(x) 
    &= \mcal{S}^{\eta,\tau}_{(n-1)\tau, n\tau}( \mcal{S}^{\eta,\tau}_{(n-2)\tau, (n-1)\tau}(\dotsb (\mcal{S}^{\eta,\tau}_{(m+1)\tau, (m+2)\tau} ( \mcal{S}^{\eta,\tau}_{m\tau, (m+1)\tau}(x))) \dotsb) ) \\
    &= \mcal{S}^{\eta,\tau}_{m\tau, n\tau}(x).
\end{align*}
The last line follows from the fact that $\mcal{S}$ is a one-step ODE solver. We are done.
\end{proof}

\section{Metrics Used for Performance Comparison} \label{sec:metrics}

\subsection{KL Divergence}

We use the Kullback--Leibler (KL) divergence to measure performance of trained student TTFMs for 2D and tabular datasets. In this section, we explain how the metric is computed.

\subsubsection{Preliminary}

Let $\varphi: \Real^d \ra \Real^d$ be a diffeomorphism; i.e., a differentiable bijection. Given a probability distribution $p$ on $\Real^d$, the \emph{push-forward} of $p$ by $\varphi$ is new probability distribution $q$ that arises from the following process.
\begin{enumerate}
    \item Sample $x \sim p$.
    \item Compute $\varphi(x)$ and return the result.
\end{enumerate}
We denote the push-forward with $[\varphi]_*\, p$. It is well-known that,
\begin{align}
    q(y) = \big( [\varphi]_*\, p \big)(x) = \frac{p(\varphi^{-1}(y))}{| \det \nabla \varphi(\varphi^{-1}(y)) |} \label{eqn:push-forward-formula}
\end{align}
for any $y \in \Real^d$. Here, $\nabla \varphi(x)$ denotes the Jacobian matrix of $\varphi$ at point $x$.

\subsubsection{Problem setup} \label{sec:kl-problem-setup}

Recall from Section~\ref{sec:preliminary}, we want to model a probability path $\{ p_t: t \in [0,1] \}$. To do so, We first train a flow matching model $v^\eta$ such that its implicit flow $\phi^\eta$ satisfies
\begin{align*}
    p_t \approx [\phi^\eta_{0,t}]_*\ p_0 = [\phi^\eta_{0,t}]_*\ \mcal{N}(0,I).
\end{align*}
Notice that, while the implicit flow $\phi^\eta$ is a function of signature $\Real \times \Real \times \Real^d \ra \Real^d$, we view $\phi^\eta_{0,t}(\cdot)$ (i.e., $\phi^\eta$ after the initial and terminal time have been fixed) as a diffeomorphism of signature $\Real^d \ra \Real^d$. Let $p^\eta_t$ denote the approximation of $p_t$ created with a push forward by the implicit flow $\phi^{\eta}_{0,t}$. In other words,
\begin{align*}
    p^\eta_t = [\phi^\eta_{0,t}]_*\, p_0.
\end{align*}
In Section~\ref{sec:experiments}, we want to create samples from $p_1$. So, we only care about $p^\eta_1$.

To speed up sampling, we train a student TTFM $\phi^\theta$ using $v^\eta$ as its teacher. Using the student model, we can generate data samples by evaluating the student models $K \in \mathbb{N}$ times as follows.
\begin{itemize}
    \item Sample $x \sim p_0$.
    \item Compute $y = \phi^{\theta}_{\frac{K-1}{K},1}(\phi^{\theta}_{\frac{K-2}{K},\frac{K-1}{K}}(\dotsb (\phi^{\theta}_{\frac{2}{K}, \frac{3}{K}}(\phi^{\theta}_{\frac{1}{K},\frac{2}{K}}(\phi^{\theta}_{0,\frac{1}{K}}(x)))) \dotsb )) $
    \item Return $y$.
\end{itemize}
Using the push-forward notation, the probability distribution of $y$ generated this way is given by
\begin{align*}
    \Big[\phi^{\theta}_{\frac{K-1}{K},1}\Big]_*\,  \Big[\phi^{\theta}_{\frac{K-2}{K},\frac{K-1}{K}}\Big]_*\,  \dotsb \, \Big[\phi^{\theta}_{\frac{2}{K},\frac{3}{K}}\Big]_*\, \Big[\phi^{\theta}_{\frac{1}{K},\frac{2}{K}}\Big]_*\, \Big[\phi^{\theta}_{0,\frac{1}{K}}\Big]_*\, p_0.
\end{align*}
For brevity, let us denote it as $p^\theta_{(K)}$. In Section~\ref{sec:experiments}, we created samples using 1, 2, 4, and 8 function evaluations. So, we must work with the four distributions: $p^\theta_{(1)}$, $p^\theta_{(2)}$, $p^\theta_{(4)}$, and $p^\theta_{(8)}$.

\subsubsection{KL Divergence as Performance Indicator}

We may assess how well the student TTFM performs by measuring how much the probability distributions $p^\theta_{(1)}$, $p^\theta_{(2)}$, $p^\theta_{(4)}$, and $p^\theta_{(8)}$ differ from the teacher's distribution $p^\eta_1$. The closer the student distributions are to the teacher distribution, the better the student performs.

A well-known way to measure the difference between two probability distributions is the Kullback--Leibler (KL) divergence. Given two probability distributions $p$ and $q$ over the same domain, the KL divergence of $p$ with respect to $q$ is defined as
\begin{align*}
    \operatorname{KL}(p\|q) = E_{x \sim p} \bigg[ \log \frac{p(x)}{q(x)} \bigg] = E_{x \sim p} [ \log p(x) - \log q(x) ].
\end{align*}
Here, $q$ acts as a reference distribution that we want to measure how different $p$ is from it. As a result, we can measure the student's performance by computing $\operatorname{KL}\big(p^{\theta}_{(1)}\|p^\eta_1\big)$, $\operatorname{KL}\big(p^{\theta}_{(2)}\|p^\eta_1\big)$, $\operatorname{KL}\big(p^{\theta}_{(4)}\|p^\eta_1\big)$, and $\operatorname{KL}\big(p^{\theta}_{(8)}\|p^\eta_1\big)$.

Let us think about how to compute $\operatorname{KL}\big(p^{\theta}_{(8)}\|p^\eta_1\big)$. We have that
\begin{align*}
    \operatorname{KL}(p^\theta_{(K)}\|p^\eta_1) = E_{y \sim p^\theta_{(K)}} \big[ \log p^\theta_{(K)}(y) - \log p^\eta_1(y) \big].
\end{align*}
In order to be able to compute what is on the RHS, we must be able to do 3 things:
\begin{enumerate}
    \item[(a)] sampling $y$ from $p^\theta_{(K)}$,
    \item[(b)] evaluating $\log p^\theta_{(K)}(y)$, and
    \item[(c)] evaluating $\log p^\eta_1(y)$.
\end{enumerate}
Performing (a) is straightforward, and the process is described in Section~\ref{sec:kl-problem-setup}. We shall discuss (b) and (c) in the next sections.

\subsubsection{Evaluating $\log p^\theta_{(K)}(y)$}

Suppose that $y$ is a sample from $p^\theta_{(K)}(y)$ using the process we described in the last section. Then,
\begin{align}
y &= \phi^{\theta}_{\frac{K-1}{K},1}(\phi^{\theta}_{\frac{K-2}{K},\frac{K-1}{K}}(\dotsb (\phi^{\theta}_{\frac{2}{K}, \frac{3}{K}}(\phi^{\theta}_{\frac{1}{K},\frac{2}{K}}(\phi^{\theta}_{0,\frac{1}{K}}(x)))) \dotsb )) \label{eqn:ttfm-sampling}
\end{align}
where $x \sim p_0 = \mcal{N}(0,I)$. For $k = 1,2,\dotsc,K$, define
\begin{align*}
    y^{(k)} &= \phi^{\theta}_{\frac{k-1}{K},\frac{k}{K}}(\phi^{\theta}_{\frac{k-2}{K},\frac{k-1}{K}}(\dotsb (\phi^{\theta}_{\frac{2}{K}, \frac{3}{K}}(\phi^{\theta}_{\frac{1}{K},\frac{2}{K}}(\phi^{\theta}_{0,\frac{1}{K}}(x)))) \dotsb )),
\end{align*}
and let $y^{(0)} = x$ so that $y^{(K)}=y$ and $y^{(k)}=\phi^{\theta}_{\frac{k-1}{K}, \frac{k}{K}}(y^{(k-1)})$. Then, applying \eqref{eqn:push-forward-formula} repeatedly, we have that
\begin{align*}
    p^\theta_{(K)}(y) = p_0(x) \prod_{k=1}^K \frac{1}{\Big| \det \nabla \phi_{\frac{k-1}{K}, \frac{k}{K}}^\theta( y^{(k-1)}) \Big|}.
\end{align*}
Hence,
\begin{align*}
    \log p^\theta_{(K)}(y) = \log p_0(x) - \sum_{k=1}^{K} \log \Big| \det \nabla \phi_{\frac{k-1}{K}, \frac{k}{K}}^\theta( y^{(k-1)}) \Big|.
\end{align*}
The above expression is easy to compute. First, $p_0$ is the standard Gaussian function, and its logarithm is simply $-\|x\|^2/2$ plus a constant. Second, we can compute each $y^{(k)}$ by saving the output of the TTFM each time we apply it in \eqref{eqn:ttfm-sampling}. Third, we can compute the Jacobian matrix and its determinant because we have direct access to the TTFM. In our implementation, we use PyTorch's \texttt{torch.autograd.functional.jacobian} to do so.

\subsubsection{Evaluating $\log p^\eta_1(y)$}

In this section, we treat $y$ as an arbitrary point in $\Real^d$ given to us by the user. We have that $p^\eta_1(y)$ is the probability of generating $y$ using the flow matching model $v^\eta$. The generation process involves sampling $x \sim p_0$ and then computing $\phi_{0,1}^\eta(x)$ by solving the initial value problem
\begin{align*}
    \frac{\partial}{\partial t} \phi^\eta_{0,t}(x) = v^\eta_t(\phi_{0,t}^\eta(x))
\end{align*}
with initial condition $\phi^\eta_{0,0}(x) = x$ using an ODE solver. In other words, we compute the integral
\begin{align*}
    \phi_{0,1}^{\eta}(x) =  x + \int_{0}^1 v^\eta_u(\phi_{0,u}^\eta(x))\, \dee u.
\end{align*}
Given $y$, we can find an $x$ such that $\phi_{0,1}^\eta(x) \approx y$ by starting from $y$ at time $t = 1$ and running the ODE solver backward in time until we reach $t = 0$.

With such an $x$, we must compute $\log p^\eta_1(y) \approx \log p_1(\phi^\eta_{0,1}(x))$. Here, we may think of the flow matching model $v^\eta$ as a \emph{continuous normalizing flow model} \cite{Chen:NODE:2018} and use the following \emph{instantaneous range of change formula}.

\begin{theorem}[instantaneous change of variable formula]
Let $v^\eta_t(\cdot)$ be a vector field that generates the marginal probability path $p_t(\cdot)$. Then,
\begin{align*}
    \frac{\partial}{\partial t} \Big( \log p_t\big(\phi_{0,t}^\eta(x)\big) \Big) = -\tr\Big( \nabla v^\eta_t \big(\phi_{0,t}^\eta(x)\big) \Big).
\end{align*}
Equivalently, 
\begin{align}
    \log p_t(\phi^\eta_{0,t}(x)) = \log p_0(x) - \int_{0}^t \tr\Big( \nabla v^\eta_u\big(\phi_{0,u}^\eta(x)\big) \Big)\, \dee u. \label{eqn:log-prob-integral}
\end{align}
\end{theorem}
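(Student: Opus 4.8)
The plan is to get the differential form first and then integrate. I would start from the push-forward identity already recorded as \eqref{eqn:push-forward-formula}: since $v^\eta_t$ generates $p_t$, we have $p_t = [\phi^\eta_{0,t}]_*\, p_0$, so applying that formula with $\varphi = \phi^\eta_{0,t}(\cdot)$ and $y = \phi^\eta_{0,t}(x)$ gives
\begin{align*}
\log p_t\bigl(\phi^\eta_{0,t}(x)\bigr) = \log p_0(x) - \log \bigl| \det J(t) \bigr|, \qquad J(t) := \nabla_x \phi^\eta_{0,t}(x).
\end{align*}
Hence it suffices to prove $\tfrac{\partial}{\partial t}\log|\det J(t)| = \tr\bigl(\nabla v^\eta_t(\phi^\eta_{0,t}(x))\bigr)$, and then the two displayed claims follow, the integral version by integrating from $0$ to $t$ and using $\phi^\eta_{0,0}(x)=x$ (so $J(0)=I$).

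The next step is to set up the variational (matrix) ODE for $J(t)$. Differentiating the Lagrangian PDE $\tfrac{\partial}{\partial t}\phi^\eta_{0,t}(x) = v^\eta_t(\phi^\eta_{0,t}(x))$ with respect to the spatial argument $x$ and using the chain rule yields $\dot J(t) = \bigl(\nabla v^\eta_t\bigr)\bigl(\phi^\eta_{0,t}(x)\bigr)\, J(t)$ with $J(0)=I$. Since $\det J(0)=1>0$ and $t\mapsto \det J(t)$ is continuous and never vanishes (a flow map is a diffeomorphism), $\det J(t)>0$ throughout, so the absolute value can be dropped. Then Jacobi's formula gives $\tfrac{\partial}{\partial t}\log\det J(t) = \tr\bigl(J(t)^{-1}\dot J(t)\bigr) = \tr\bigl(J(t)^{-1}(\nabla v^\eta_t)(\phi^\eta_{0,t}(x))\,J(t)\bigr)$, and cyclic invariance of the trace collapses this to $\tr\bigl((\nabla v^\eta_t)(\phi^\eta_{0,t}(x))\bigr)$. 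Substituting back into the displayed log-identity proves the derivative formula, and integration yields \eqref{eqn:log-prob-integral}.

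As an alternative I would keep in reserve the route that avoids Jacobians: since $v^\eta_t$ generates $p_t$, the pair satisfies the continuity equation $\partial_t p_t + \nabla\cdot(p_t v^\eta_t) = 0$; expanding the divergence, dividing by $p_t>0$, and then applying the chain rule to $t\mapsto \log p_t(\phi^\eta_{0,t}(x))$ along the trajectory, the transport term $\nabla\log p_t\cdot v^\eta_t$ cancels and leaves $-\nabla\cdot v^\eta_t = -\tr(\nabla v^\eta_t)$ evaluated along the flow. Both routes need $p_t$ smooth and strictly positive; here $p_t = p_{\data} * \mcal{N}(0,\sigma_t^2 I)$ with $\sigma_t>0$, which is smooth and everywhere positive, so the regularity hypotheses cause no trouble.

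The main obstacle I expect is the bookkeeping around Jacobi's formula: justifying differentiation through the determinant, deriving the variational equation $\dot J = (\nabla v^\eta_t)(\phi^\eta_{0,t}(x))\,J$ cleanly, and making the sign argument that keeps $\det J(t)$ positive (so the modulus can be removed). None of this is deep, but it is where a careless argument would slip; the trace-cyclicity step and the final integration are routine.
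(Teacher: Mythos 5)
Your proof is correct, but there is no in-paper argument to compare it against: the paper does not prove this theorem, it imports it as a known fact about continuous normalizing flows with a citation to Chen et al.\ (2018) and immediately uses it to evaluate $\log p^\eta_1$. Your route --- the push-forward formula \eqref{eqn:push-forward-formula} with $\varphi = \phi^\eta_{0,t}$, the variational equation $\dot J(t) = (\nabla v^\eta_t)(\phi^\eta_{0,t}(x))\,J(t)$ obtained by differentiating the Lagrangian PDE in $x$, positivity of $\det J(t)$, and Jacobi's formula together with cyclicity of the trace --- is the classical Liouville-type derivation and is sound; integrating from $0$ to $t$ with $J(0)=I$ (so $\log|\det J(0)|=0$) indeed yields \eqref{eqn:log-prob-integral}. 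It also fits the paper's conventions well, since the paper formalizes ``generates'' via $p_t = [\phi^\eta_{0,t}]_*\,p_0$, which is exactly the identity your first step consumes; the continuity-equation route you keep in reserve is equally valid and is the one more commonly used when ``generates'' is defined by $\partial_t p_t + \nabla\cdot(p_t v^\eta_t)=0$. Two small points to tidy if you write this out in full: exchanging $\partial_t$ and $\nabla_x$ to get the variational ODE needs $v^\eta$ to be $C^1$ in $x$ (satisfied by the smooth architectures considered, and anyway a standard regularity hypothesis), and $p_t$ is the $t$-scaled data distribution convolved with $\mcal{N}(0,\sigma_t^2 I)$ rather than literally $p_{\data} * \mcal{N}(0,\sigma_t^2 I)$ --- the paper uses the same shorthand elsewhere, and either way $p_t>0$ everywhere, so the logarithms and the division by $p_t$ in your alternative route are unproblematic.
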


In other words, we may compute $\log p_1(\phi^\eta_{0,1}(x))$ by computing the integral \eqref{eqn:log-prob-integral} with an ODE solver. We can run this ODE solver in parallel with another ODE solver that computes $\phi_{0,u}^\eta(x)$ and use the value to compute the Jacobian matrix $\nabla v^\eta_u(\phi_{0,u}^\eta(x))$, which we can then compute its trace and use it to step the ODE solver that computes the logarithm of the probability.

\subsubsection{Computing the KL divergence}

Having established how to compute both $\log p^\theta_{(K)}(y)$ and $\log p^\eta_1(y)$ given $y \sim p_{(K)}^\theta$, we now proceed to computing the KL divergence
\begin{align*}
\operatorname{KL}(p^\theta_{(K)}\|p^\eta_1) = E_{y \sim p^\theta_{(K)}} \big[ \log p^\theta_{(K)}(y) - \log p^\eta_1(y) \big].
\end{align*}
We do so with Monte Carlo integration: we sample $N$ samples $y_1$, $y_2$, $\dotsc$, $y_N$ from $p_{(K)}^\theta$ and then estimate the KL divergence with
\begin{align*}
\operatorname{KL}(p^\theta_{(K)}\|p^\eta_1) \approx \frac{1}{N} \sum_{i=1}^N \Big( \log p^\theta_{(K)}(y_i) - \log p^\eta_1(y_i) \Big).
\end{align*}
Theoretically, KL divergence is non-negative, but the estimate above might be negative. To ensure that the estimate is always non-negative, we use a method by Schulman~\yrcite{Schulman:2020}. Let
\begin{align*}
    r(y) = \frac{p^\eta_1(y)}{p^\theta_{(K)}(y)}.
\end{align*}
It follows that $r(y) -1 - \log r(y) \geq 0$ because $e^{x} \geq 1 + x$ for all $x \in \Real$. Now,
\begin{align*}
    E_{y \sim p^\theta_{(K)}} [r(y) - 1 - \log r(y)] 
    &= E_{y \sim p^\theta_{(K)}}\bigg[ \frac{p^\eta_1(y)}{p^\theta_{(K)}(y) }  \bigg] - E_{y \sim p^\theta_{(K)}}[1] - E_{y \sim p^\theta_{(K)}} \bigg[ \log \frac{p^\eta_1(y)}{p^\theta_{(K)}(y)} \bigg] \\
    &= \int p^\theta_{(K)}(y) \frac{p^\eta_1(y)}{p^\theta_{(K)}(y)}\, \dee x - 1 + E_{y \sim p^\theta_{(K)}} \bigg[ \log \frac{p^\theta_{(K)}(y)}{p^\eta_1(y)} \bigg] \\
    &= \int p^\eta_1(y)\, \dee x - 1 + \operatorname{KL}(p^\theta_{(K)}\|p^\eta_1) \\
    &= \operatorname{KL}(p^\theta_{(K)}\|p^\eta_1).
\end{align*}
As a result,
\begin{align*}
\frac{1}{N} \sum_{i=1}^N \big( r(y_i) - 1 - \log r(y_i) \big)
\end{align*}
is an unbiased estimate of $\operatorname{KL}(p^\theta_{(K)}\|p^\eta_1)$ that is always non-negative. We use the above expression as an estimate of the KL divergence.

\subsubsection{Implementation Details}

When evaluating TTFMs for 2D and tabular datasets, we sample $N = 50,000$ samples from $p^\theta_{(K)}$. For each $y$, we compute $\log p^\eta_1(y)$ with Heun's method using 100 integration steps.

\subsection{Fr\'{e}chet Inception Distance}

For image datasets, it is not practical to compute the KL divergence using the method in the last section because the Jacobian matrix would be very large, making computation extremely time-consuming. We instead use the Fr\'{e}chet Inception Distance (FID), which is a standard metric for assessing image generation models. 

It is well known that FID varies with the number of samples used to compute it \cite{Chong:2020}. We generated as many samples as the number of training samples in the dataset. That is, we generated 60,000 samples for MNIST and 50,000 samples for CIFAR-10.

There are many implementations of FID computation, and subtle details such as how images are resized can affect the final score \cite{Parmar:2021}. We used an implementation by Karras \etal~\yrcite{Karras:2022}, available in the Github repository \url{https://github.com/NVlabs/edm}, as it seems to be widely used by previous works.

\section{Experimental Setup and Data for ``Comparison with Other Losses'' (Section~\ref{sec:comparison-with-other-losses})} \label{sec:comparison-setup-and-data}

\subsection{2D Datasets}

\subsubsection{Model Architectures} \label{sec:2d-model-architecture}

Both the teacher and student models are MLPs with ELU \cite{Clevert:2016} activation functions. 

A teacher model accepts a time $t \in [0,1]$ and a point $x \in \Real^2$. The time is converted to a 256-dimensional vector via positional encoding \cite{Vaswani:2017} and then concatenated with $x$ to form a 258-dimensional vector. This vector is then sequentially processed by 8 hidden blocks. Each block has a fully-connected layer with 512 units followed by the ELU activation function. After the last hidden block, the 512-dimensional vector is fed to a fully-connected unit that converts it to a point in $\Real^2$, which becomes the output of the network.

\begin{figure}[h]
    \centering
    \includegraphics[width=\linewidth]{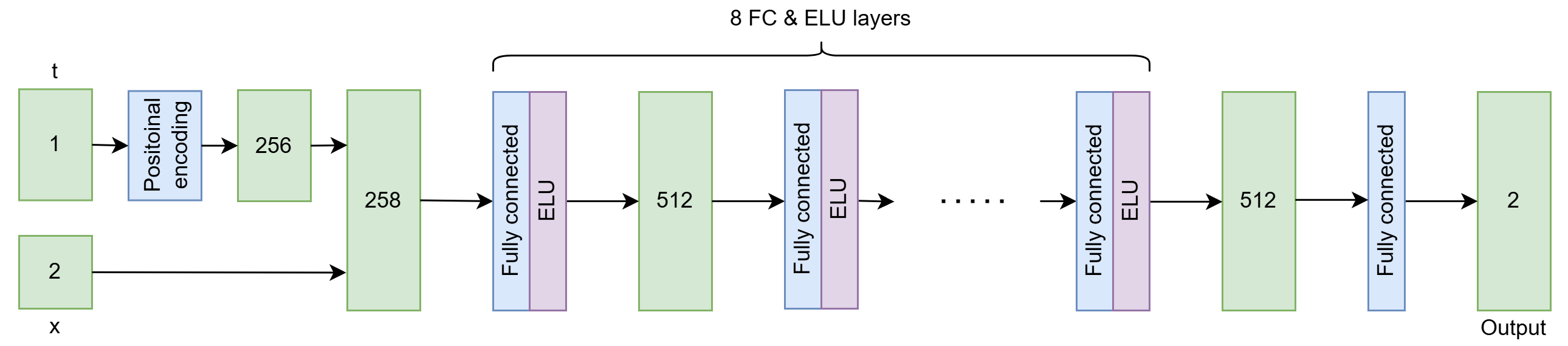}
    \caption{Architecture of the teacher flow matching model for 2D datasets.}
    \label{fig:2d-teacher-architecture}
\end{figure}

A student model's main body is an MLP like that of the teacher's, but it takes as input two times $s, t \in \Real$ in addition to a point $x \in \Real^2$. The two times are converted to two 256-dimensional positional encodings. They are then concatenated with the input point $x$ to form a vector with $256+256+2 = 514$ dimensions. This vector is then processed by 8 hidden blocks and a final fully-connected layer like what is done in the teacher model. However, each hidden blocks now has 1,024 units instead of 512. We note that the main body represents the average velocity model (AVM) $v^\theta_{s,t}(x)$. The TTFM is $\phi^{\theta}_{s,t}(x) = x + (t-s)v^\theta_{s,t}(x)$, and we convert an AVM to a TTFM with a wrapper class. 

\begin{figure}[h]
    \centering
    \includegraphics[width=\linewidth]{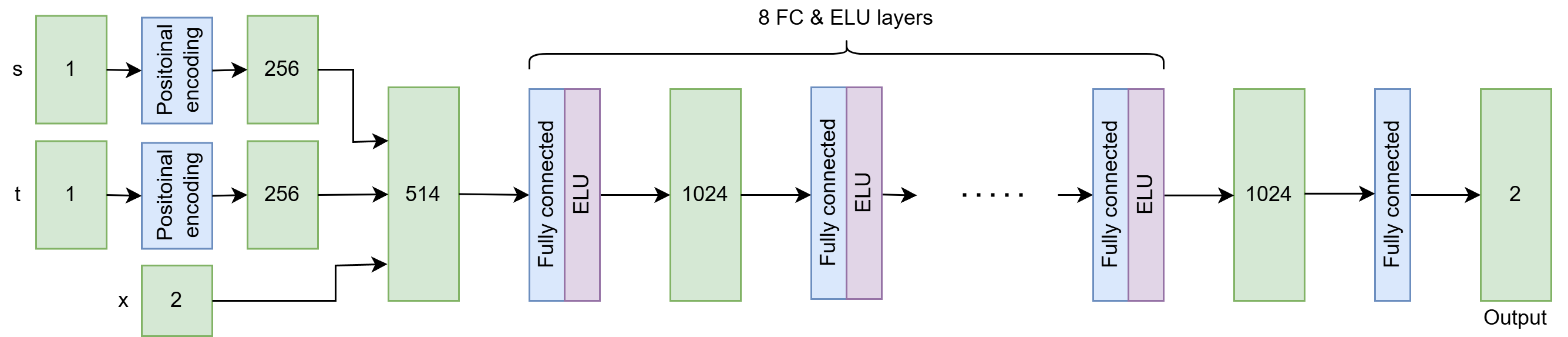}
    \caption{Architecture of the student TTFM for 2D datasets.}
    \label{fig:2d-student-architecture}
\end{figure}

\subsubsection{Training Process} \label{sec:2d-model-training-process}

Both the teacher and student models are trained with the same process. We use the Adam optimizer \cite{Kingma:2015} with $\beta_1 = 0.9$ and $\beta_2 = 0.999$. The batch size is $1,000$. Learning rate increases linearly from $0$ to $10^{-4}$ in the first 10 training iterations and remains at that value until training finishes. Training lasts until the model under training has been shown 100M training examples, which amounts to 100,000 training iterations. Independent of the EMA of the trained model parameters used in some loss functions, we maintain another EMA with a decay rate of $0.999$, and we use this EMA at test time to generate samples and compute performance statistics.

In the next 4 sessions, we show KL divergence metrics for student models trained with our ITVM loss and other baselines. We also show samples generated by the models with NFE = 1.

\newpage

\subsubsection{2D Dataset Results: 5LOBES}

\begin{center}
    \begin{tabular}{|l|r|r|r|r|r|}
        \hline
        \multirow{2}{*}{Loss} 
        & \multicolumn{4}{c|}{KL Divergence ($\downarrow$)} 
        & \multirow{2}{*}{Fused rank}\\
        \cline{2-5}
        & \multicolumn{1}{c|}{NFE = 1} & \multicolumn{1}{c|}{NFE = 2} & \multicolumn{1}{c|}{NFE = 4} & \multicolumn{1}{c|}{NFE = 8} & \\
        \hline
EFMD & {\color{Red} 0.000152 (01)}& {\color{Blue} 0.000141 (03)}& 0.000130 (06)& 0.000122 (06)& 04\\
LFMD & {\color{Green} 0.000180 (02)}& 0.000143 (04)& {\color{Blue} 0.000121 (03)}& 0.000106 (05)& {\color{Blue} 03}\\
PID & 0.000330 (07)& 0.000338 (07)& 0.000306 (07)& 0.000280 (07)& 07\\
\hline
ITVM ($\mu = 0.0$) & 0.000232 (06)& 0.000160 (06)& 0.000128 (05)& 0.000094 (04)& 06\\
ITVM ($\mu = 0.9$) & {\color{Blue} 0.000183 (03)}& {\color{Green} 0.000140 (02)}& {\color{Green} 0.000111 (02)}& {\color{Green} 0.000083 (02)}& {\color{Green} 02}\\
ITVM ($\mu = 0.99$) & 0.000187 (04)& {\color{Red} 0.000138 (01)}& {\color{Red} 0.000107 (01)}& {\color{Red} 0.000078 (01)}& {\color{Red} 01}\\
ITVM ($\mu = 0.999$) & 0.000199 (05)& 0.000150 (05)& 0.000122 (04)& {\color{Blue} 0.000085 (03)}& 05\\
\hline
    \end{tabular}
\end{center}

\begin{center}
    \small
\begin{tabular}{@{\hskip 0.01\linewidth}c@{\hskip 0.01\linewidth}c@{\hskip 0.01\linewidth}c@{\hskip 0.01\linewidth}c@{\hskip 0.01\linewidth}}
    \includegraphics[width=0.23\linewidth]{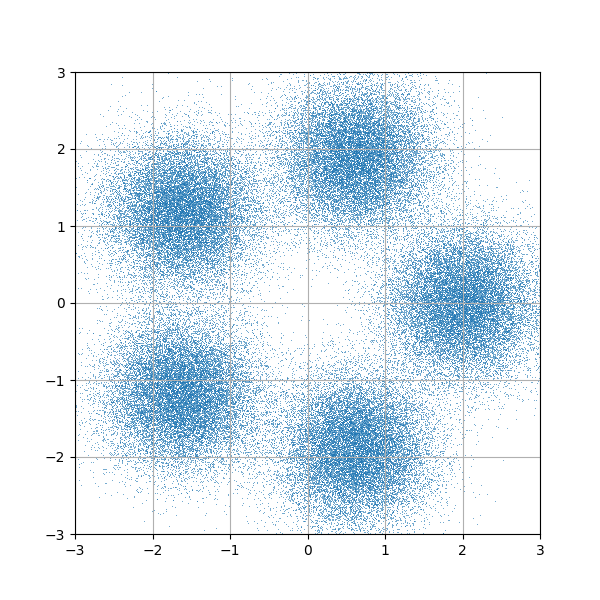} &
    \includegraphics[width=0.23\linewidth]{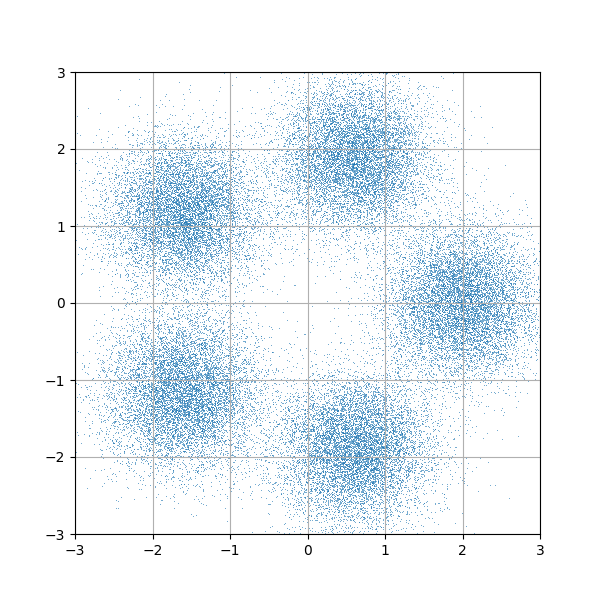} &
    &
    \\
    Dataset &
    Teacher (NFE = 100) &
    &
    \\    
    \includegraphics[width=0.23\linewidth]{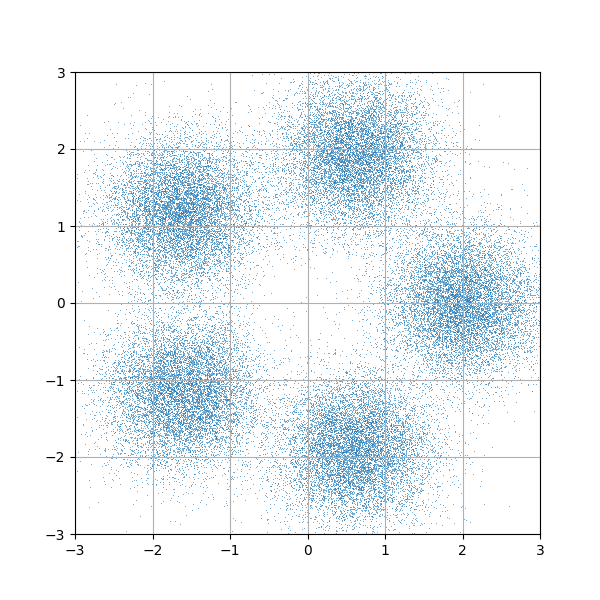} &
    \includegraphics[width=0.23\linewidth]{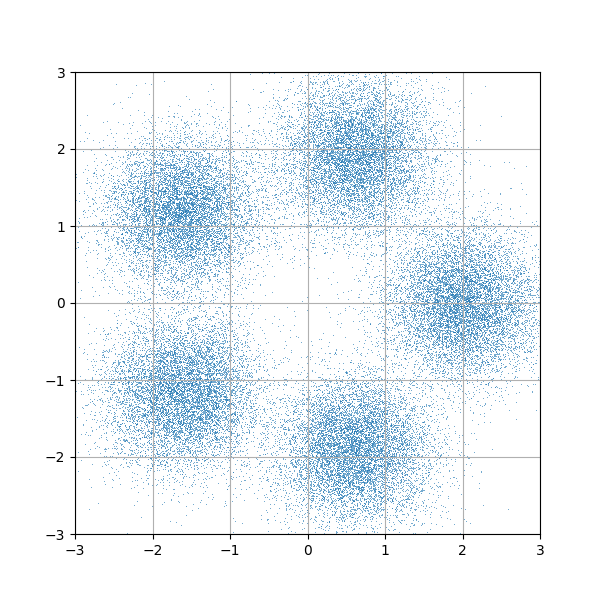} &
    \includegraphics[width=0.23\linewidth]{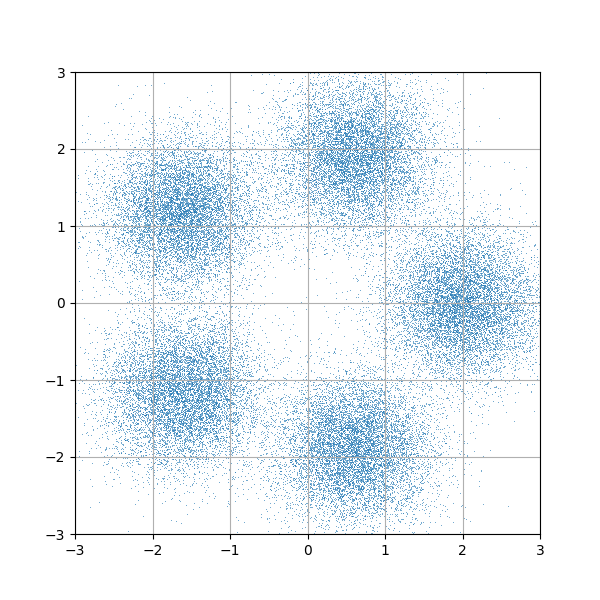} &    
    \\
    EFMD &
    LFMD &
    PID &
    \\
    \includegraphics[width=0.23\linewidth]{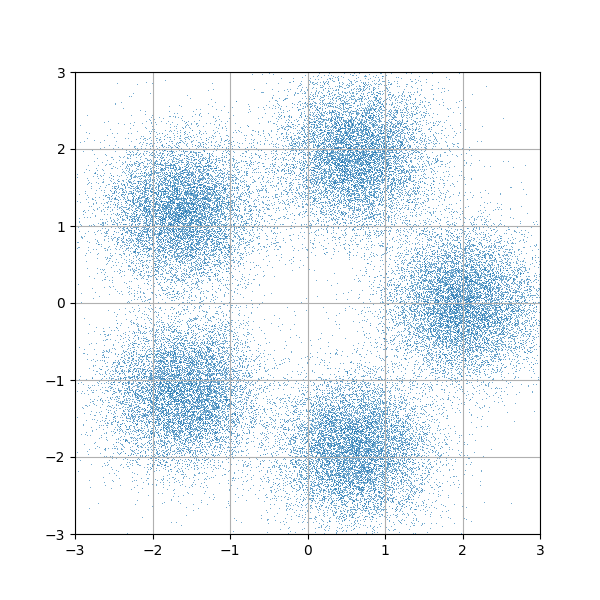} &
    \includegraphics[width=0.23\linewidth]{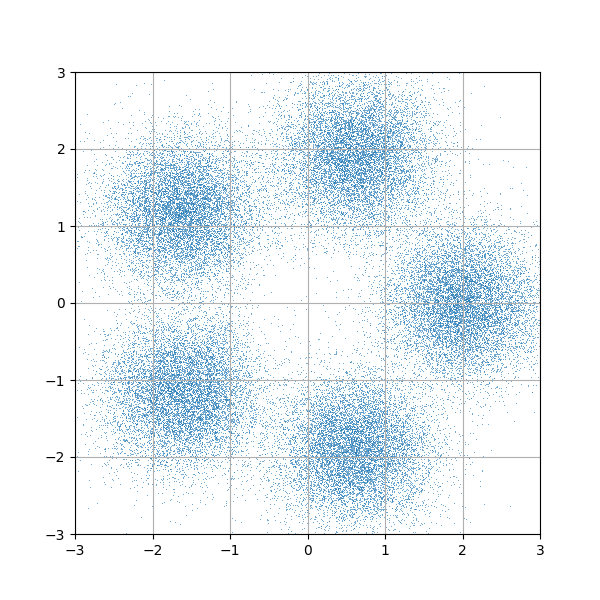} &
    \includegraphics[width=0.23\linewidth]{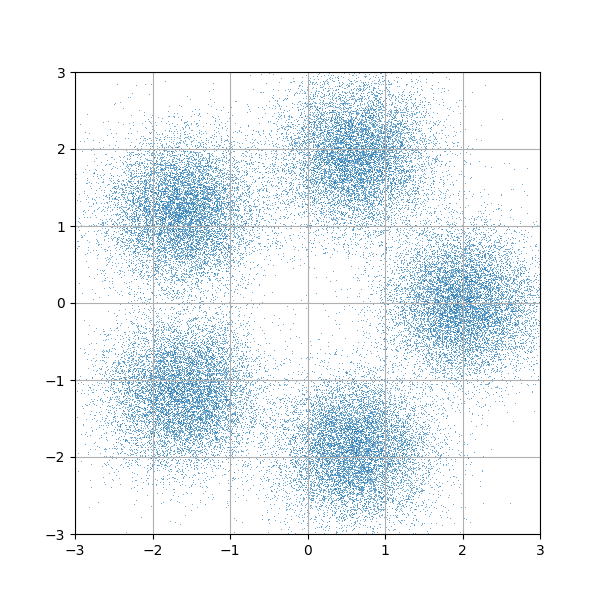} &
    \includegraphics[width=0.23\linewidth]{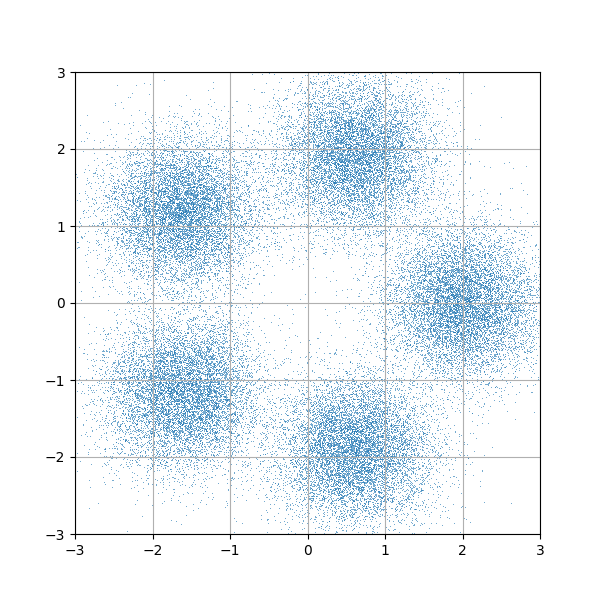} \\
    ITVM $(\mu = 0)$ &
    ITVM $(\mu = 0.9)$ &
    ITVM $(\mu = 0.99)$ &
    ITVM $(\mu = 0.999)$ \\
    & & & \\
    \multicolumn{4}{c}{Samples of student models were generated with 1 function evaluation.} 
\end{tabular}        
\end{center}

\newpage 

\subsubsection{2D Dataset Results: CHECKER}

\begin{center}
    \centering
    \begin{tabular}{|l|r|r|r|r|r|}
        \hline
        \multirow{2}{*}{Loss} 
        & \multicolumn{4}{c|}{KL Divergence ($\downarrow$)} 
        & \multirow{2}{*}{Fused rank}\\
        \cline{2-5}
        & \multicolumn{1}{c|}{NFE = 1} & \multicolumn{1}{c|}{NFE = 2} & \multicolumn{1}{c|}{NFE = 4} & \multicolumn{1}{c|}{NFE = 8} & \\
        \hline
EFMD & 0.069939 (07)& 0.056285 (07)& 0.033442 (07)& 0.019348 (06)& 07\\
LFMD & 0.027366 (05)& 0.015596 (05)& 0.012062 (05)& 0.010455 (05)& 05\\
PID & 0.025123 (04)& 0.014586 (04)& 0.010842 (04)& {\color{Blue} 0.008984 (03)}& 04\\
\hline
ITVM ($\mu = 0.0$) & 0.030792 (06)& 0.024484 (06)& 0.022426 (06)& 0.021067 (07)& 06\\
ITVM ($\mu = 0.9$) & {\color{Blue} 0.018558 (03)}& {\color{Red} 0.009030 (01)}& {\color{Red} 0.008515 (01)}& {\color{Red} 0.007128 (01)}& {\color{Red} 01}\\
ITVM ($\mu = 0.99$) & {\color{Green} 0.017441 (02)}& {\color{Blue} 0.010424 (03)}& {\color{Blue} 0.009856 (03)}& 0.009202 (04)& {\color{Blue} 03}\\
ITVM ($\mu = 0.999$) & {\color{Red} 0.014908 (01)}& {\color{Green} 0.009812 (02)}& {\color{Green} 0.009107 (02)}& {\color{Green} 0.008500 (02)}& {\color{Green} 02}\\
\hline
    \end{tabular}
\end{center}

\begin{center}
    \small
\begin{tabular}{@{\hskip 0.01\linewidth}c@{\hskip 0.01\linewidth}c@{\hskip 0.01\linewidth}c@{\hskip 0.01\linewidth}c@{\hskip 0.01\linewidth}}
    \includegraphics[width=0.23\linewidth]{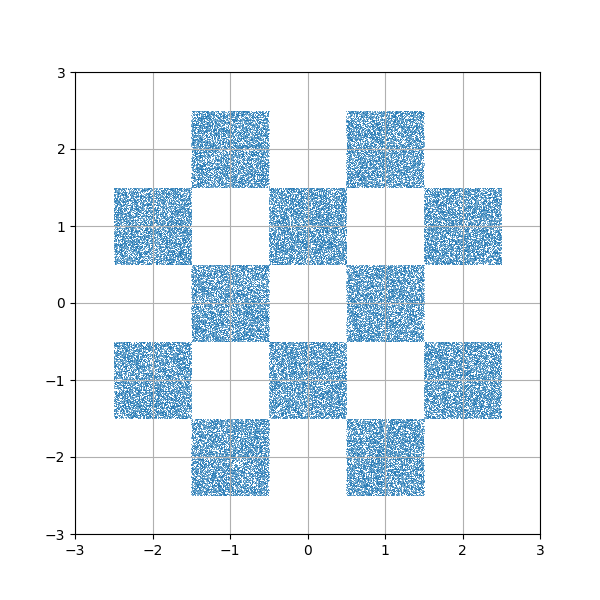} &
    \includegraphics[width=0.23\linewidth]{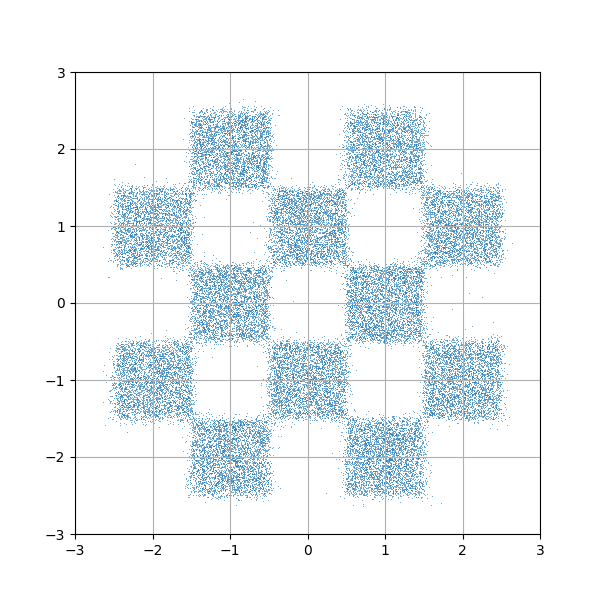} &
    &
    \\
    Dataset &
    Teacher (NFE = 100) &
    &
    \\    
    \includegraphics[width=0.23\linewidth]{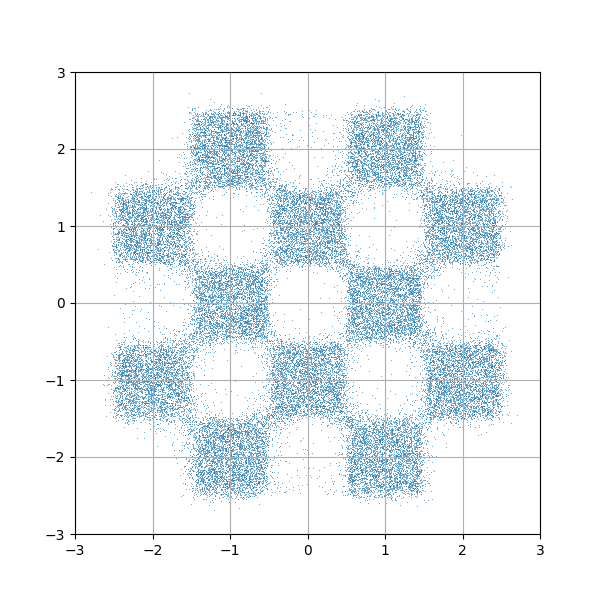} &
    \includegraphics[width=0.23\linewidth]{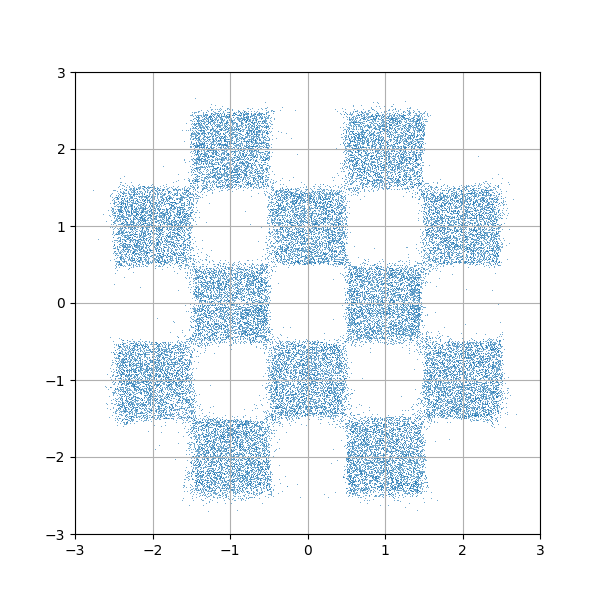} &
    \includegraphics[width=0.23\linewidth]{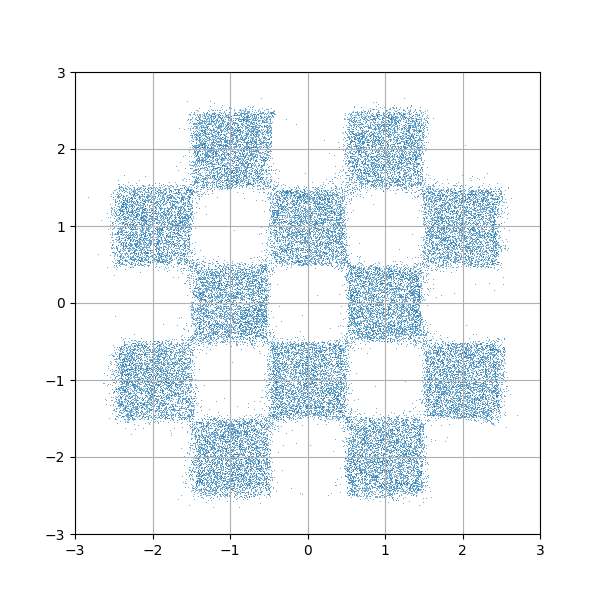} &    
    \\
    EFMD &
    LFMD &
    PID &
    \\
    \includegraphics[width=0.23\linewidth]{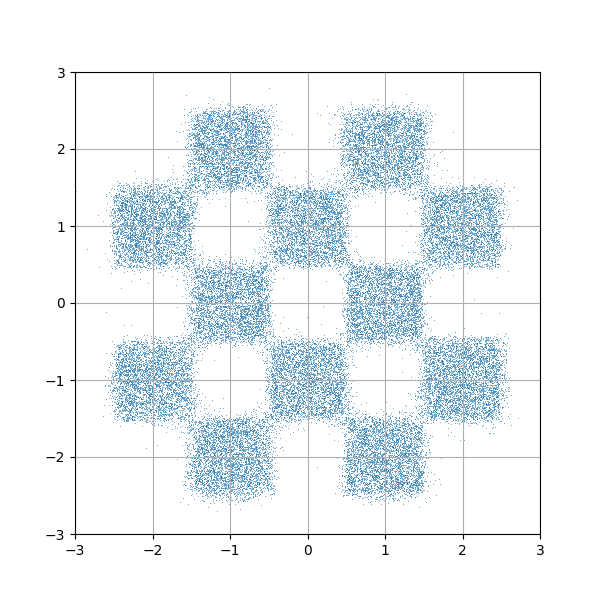} &
    \includegraphics[width=0.23\linewidth]{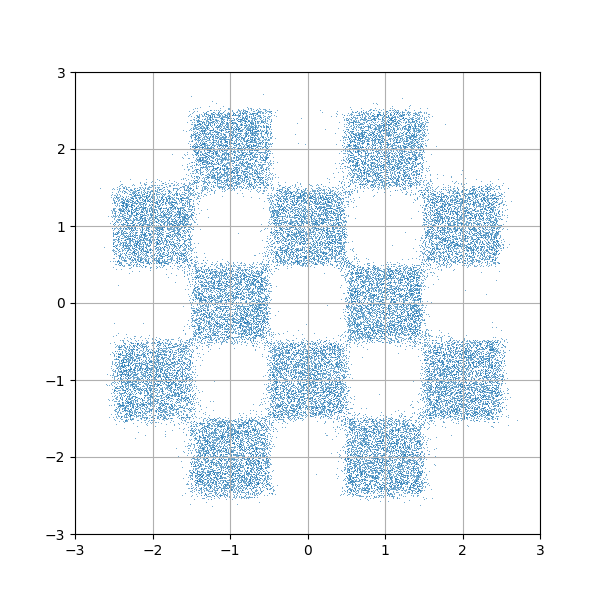} &
    \includegraphics[width=0.23\linewidth]{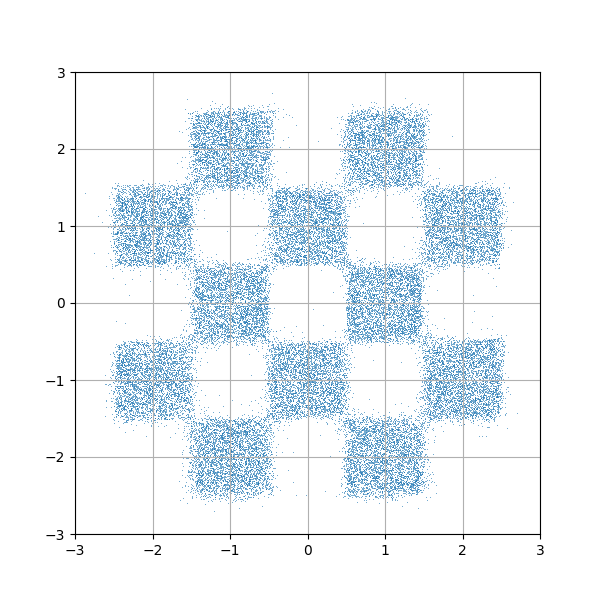} &
    \includegraphics[width=0.23\linewidth]{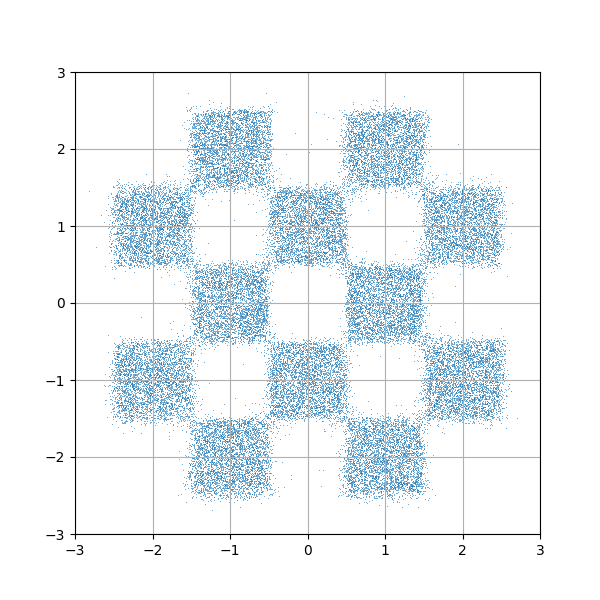} \\
    ITVM $(\mu = 0)$ &
    ITVM $(\mu = 0.9)$ &
    ITVM $(\mu = 0.99)$ &
    ITVM $(\mu = 0.999)$ \\
    & & & \\
    \multicolumn{4}{c}{Samples of student models were generated with 1 function evaluation.}
    \end{tabular}        
\end{center}

\newpage

\subsubsection{2D Dataset Results: WORD}

\begin{center}
    \centering
    \begin{tabular}{|l|r|r|r|r|r|}
        \hline
        \multirow{2}{*}{Loss} 
        & \multicolumn{4}{c|}{KL Divergence ($\downarrow$)} 
        & \multirow{2}{*}{Fused rank}\\
        \cline{2-5}
        & \multicolumn{1}{c|}{NFE = 1} & \multicolumn{1}{c|}{NFE = 2} & \multicolumn{1}{c|}{NFE = 4} & \multicolumn{1}{c|}{NFE = 8} & \\
        \hline
EFMD & 0.479471 (05)& 0.355560 (06)& 0.231161 (06)& 0.151074 (06)& 06\\
LFMD & 0.734259 (07)& 0.167223 (05)& 0.128198 (05)& {\color{Blue} 0.092998 (03)}& 05\\
PID & 0.419129 (04)& 0.160760 (04)& {\color{Blue} 0.118621 (03)}& {\color{Red} 0.076293 (01)}& {\color{Blue} 03}\\
\hline
ITVM ($\mu = 0.0$) & 0.563841 (06)& 0.356131 (07)& 0.318504 (07)& 0.292005 (07)& 07\\
ITVM ($\mu = 0.9$) & {\color{Red} 0.247495 (01)}& {\color{Green} 0.130479 (02)}& {\color{Red} 0.096200 (01)}& {\color{Green} 0.085024 (02)}& {\color{Red} 01}\\
ITVM ($\mu = 0.99$) & {\color{Green} 0.264598 (02)}& {\color{Red} 0.129752 (01)}& {\color{Green} 0.111104 (02)}& 0.107310 (04)& {\color{Green} 02}\\
ITVM ($\mu = 0.999$) & {\color{Blue} 0.267679 (03)}& {\color{Blue} 0.138303 (03)}& 0.119014 (04)& 0.108817 (05)& 04\\
\hline
    \end{tabular}
\end{center}

\begin{center}
    \small
\begin{tabular}{@{\hskip 0.01\linewidth}c@{\hskip 0.01\linewidth}c@{\hskip 0.01\linewidth}c@{\hskip 0.01\linewidth}c@{\hskip 0.01\linewidth}}
    \includegraphics[width=0.23\linewidth]{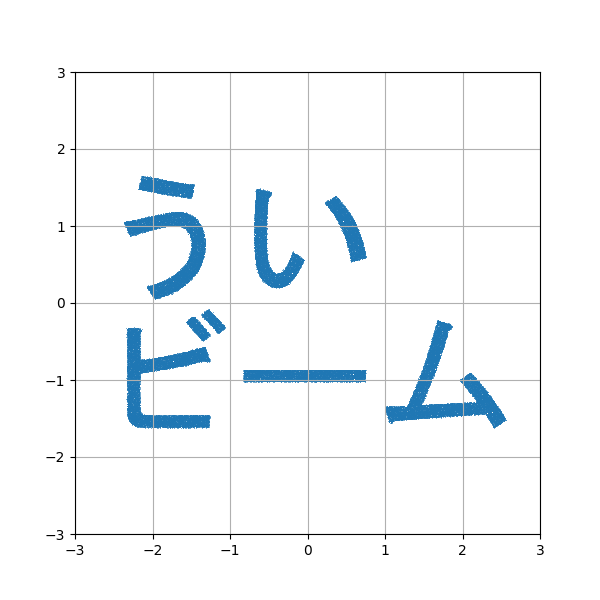} &
    \includegraphics[width=0.23\linewidth]{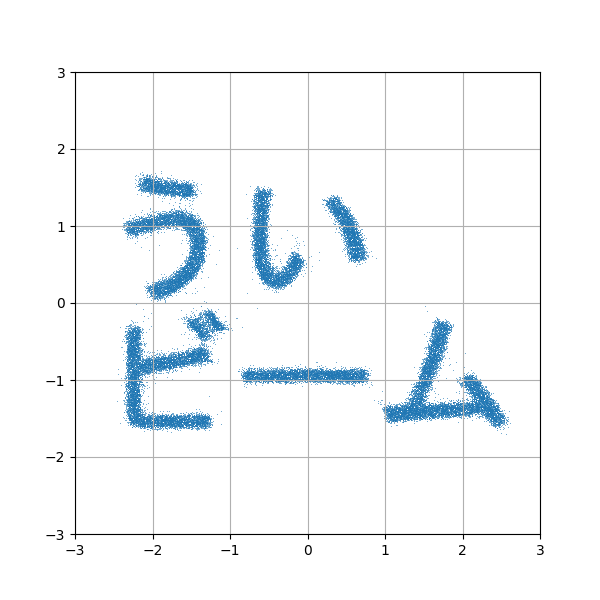} &
    &
    \\
    Dataset &
    Teacher (NFE = 100) &
    &
    \\    
    \includegraphics[width=0.23\linewidth]{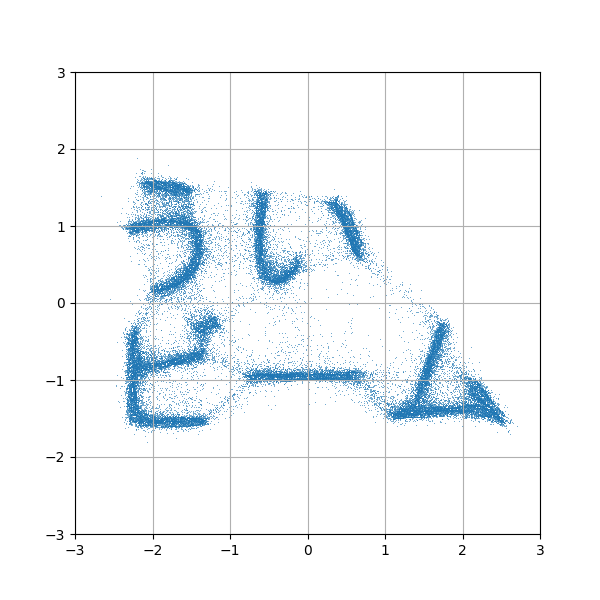} &
    \includegraphics[width=0.23\linewidth]{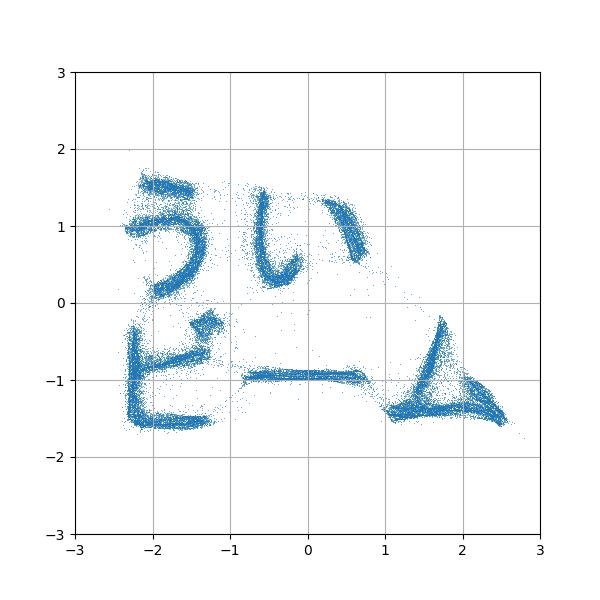} &
    \includegraphics[width=0.23\linewidth]{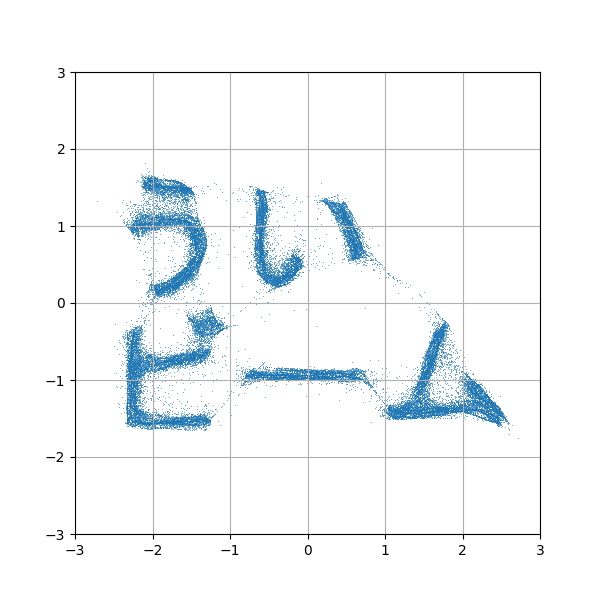} &    
    \\
    EFMD &
    LFMD &
    PID &
    \\
    \includegraphics[width=0.23\linewidth]{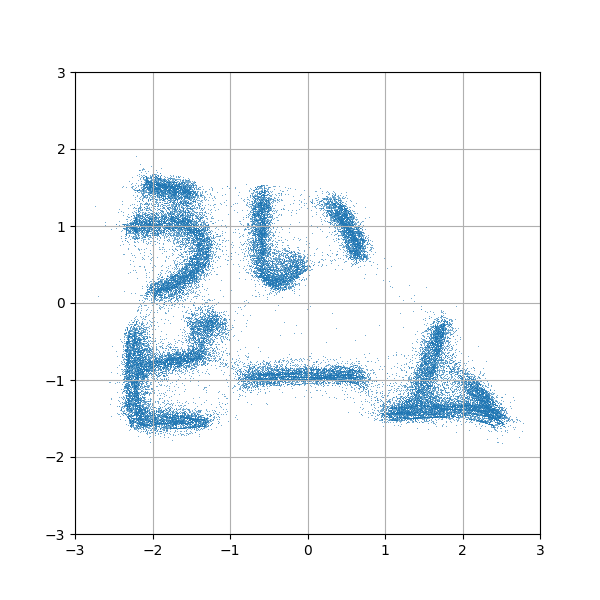} &
    \includegraphics[width=0.23\linewidth]{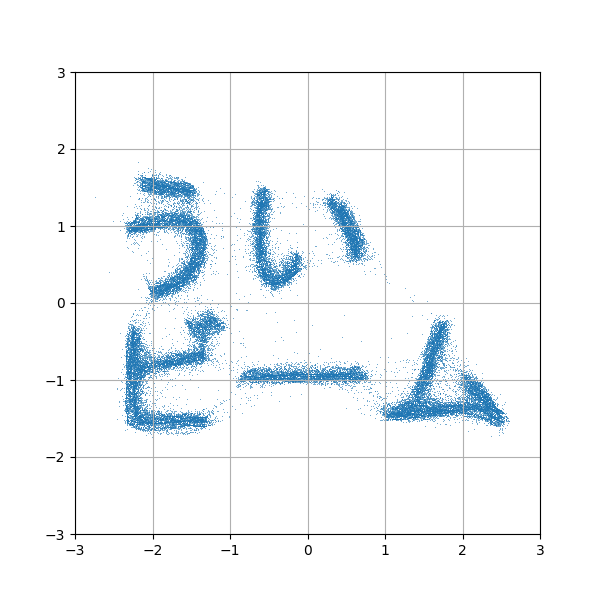} &
    \includegraphics[width=0.23\linewidth]{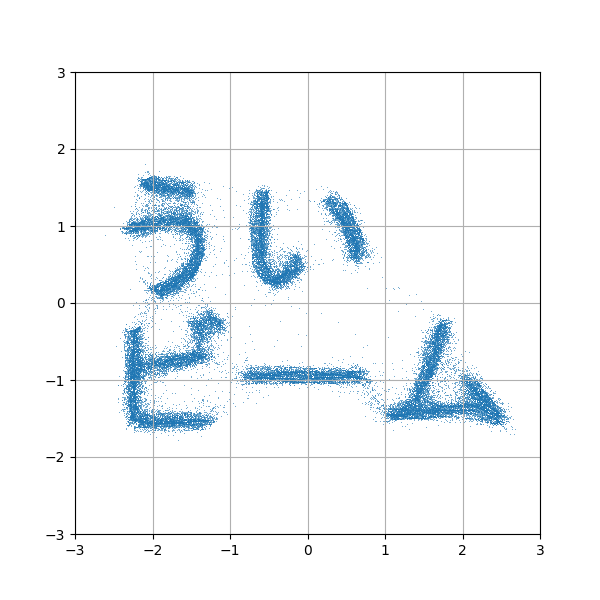} &
    \includegraphics[width=0.23\linewidth]{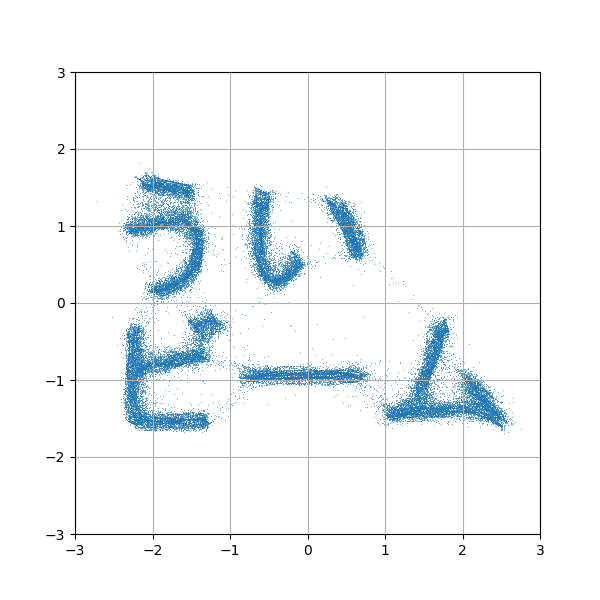} \\
    ITVM $(\mu = 0)$ &
    ITVM $(\mu = 0.9)$ &
    ITVM $(\mu = 0.99)$ &
    ITVM $(\mu = 0.999)$ \\
    & & & \\
    \multicolumn{4}{c}{Samples of student models were generated with 1 function evaluation.}
    \end{tabular}        
\end{center}

\newpage

\subsubsection{2D Dataset Results: SPIRAL}

\begin{center}
    \centering
    \begin{tabular}{|l|r|r|r|r|r|}
        \hline
        \multirow{2}{*}{Loss} 
        & \multicolumn{4}{c|}{KL Divergence ($\downarrow$)} 
        & \multirow{2}{*}{Fused rank}\\
        \cline{2-5}
        & \multicolumn{1}{c|}{NFE = 1} & \multicolumn{1}{c|}{NFE = 2} & \multicolumn{1}{c|}{NFE = 4} & \multicolumn{1}{c|}{NFE = 8} & \\
        \hline
EFMD & 0.140238 (07)& 0.099635 (07)& 0.069224 (07)& 0.042994 (07)& 07\\
LFMD & 0.042539 (04)& 0.028981 (05)& 0.030644 (06)& 0.023447 (06)& 05\\
PID & 0.048297 (05)& 0.023088 (04)& 0.025469 (04)& 0.019121 (04)& 04\\
\hline
ITVM ($\mu = 0.0$) & 0.105905 (06)& 0.038632 (06)& 0.027042 (05)& 0.022703 (05)& 06\\
ITVM ($\mu = 0.9$) & {\color{Blue} 0.036392 (03)}& {\color{Blue} 0.019368 (03)}& {\color{Green} 0.014638 (02)}& {\color{Red} 0.009251 (01)}& {\color{Green} 02}\\
ITVM ($\mu = 0.99$) & {\color{Green} 0.035553 (02)}& {\color{Green} 0.018907 (02)}& {\color{Blue} 0.015143 (03)}& {\color{Blue} 0.010459 (03)}& {\color{Blue} 03}\\
ITVM ($\mu = 0.999$) & {\color{Red} 0.034073 (01)}& {\color{Red} 0.015597 (01)}& {\color{Red} 0.013374 (01)}& {\color{Green} 0.009900 (02)}& {\color{Red} 01}\\
\hline
    \end{tabular}
\end{center}

\begin{center}
    \small
\begin{tabular}{@{\hskip 0.01\linewidth}c@{\hskip 0.01\linewidth}c@{\hskip 0.01\linewidth}c@{\hskip 0.01\linewidth}c@{\hskip 0.01\linewidth}}
    \includegraphics[width=0.23\linewidth]{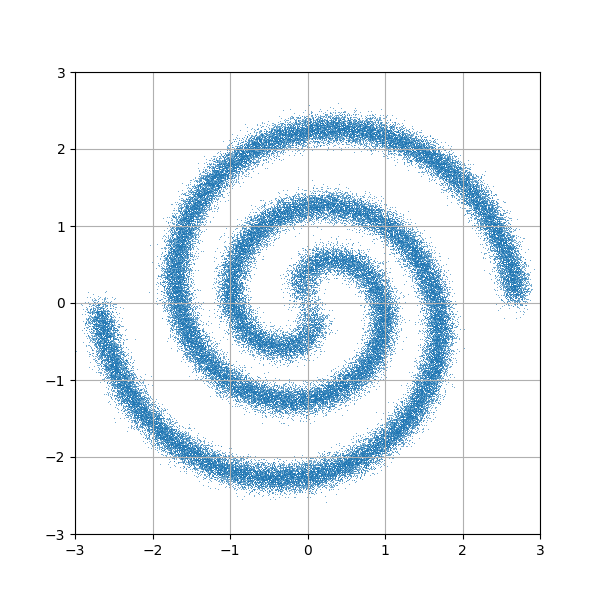} &
    \includegraphics[width=0.23\linewidth]{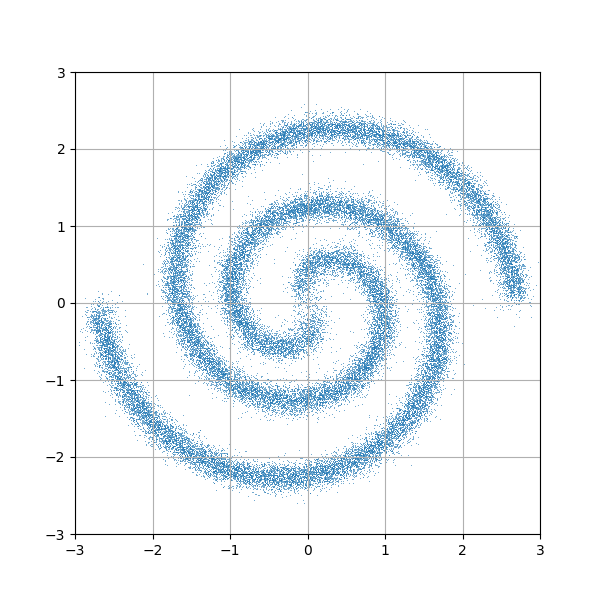} &
    &
    \\
    Dataset &
    Teacher (NFE = 100) &
    &
    \\    
    \includegraphics[width=0.23\linewidth]{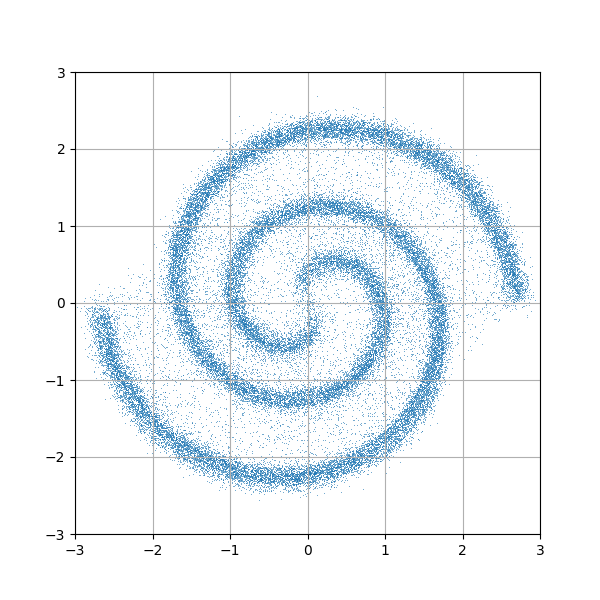} &
    \includegraphics[width=0.23\linewidth]{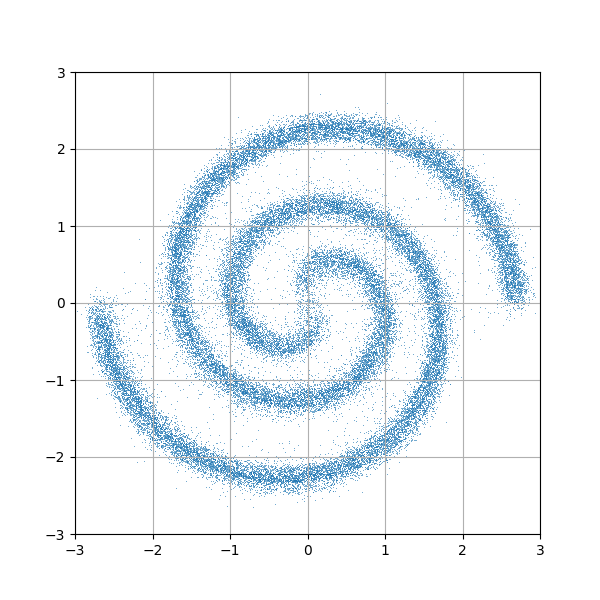} &
    \includegraphics[width=0.23\linewidth]{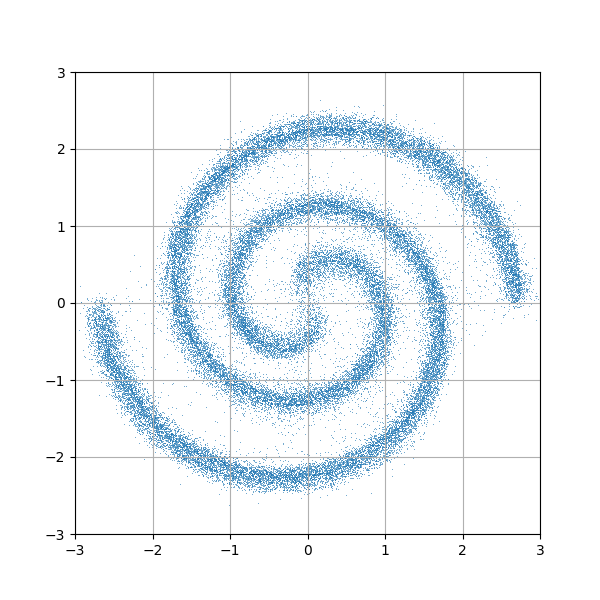} &    
    \\
    EFMD &
    LFMD &
    PID &
    \\
    \includegraphics[width=0.23\linewidth]{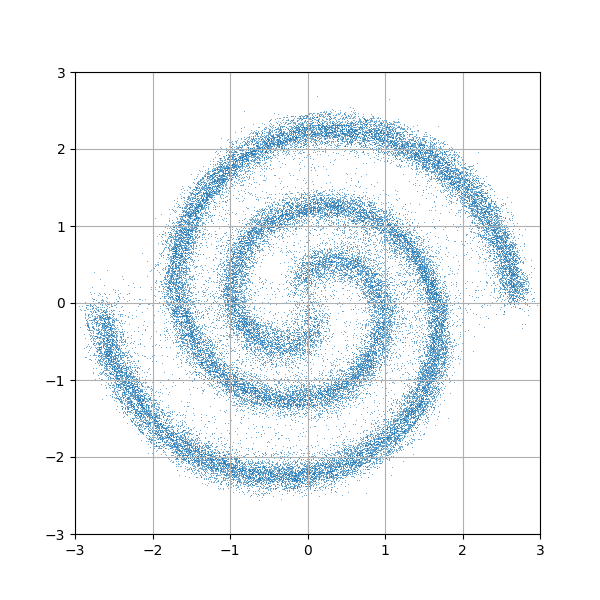} &
    \includegraphics[width=0.23\linewidth]{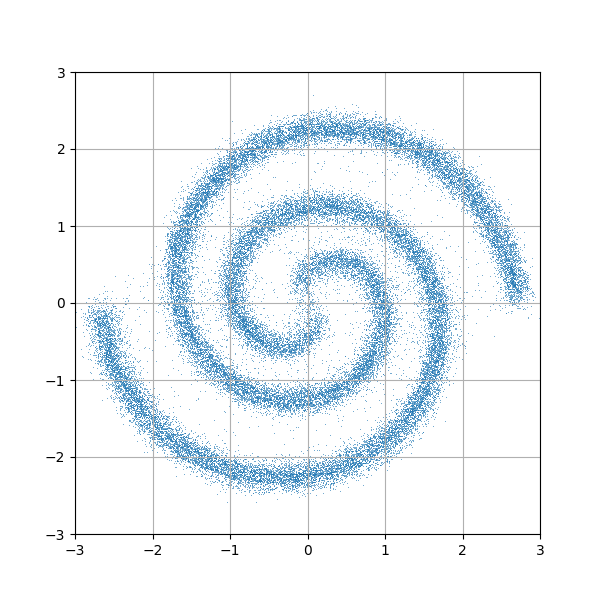} &
    \includegraphics[width=0.23\linewidth]{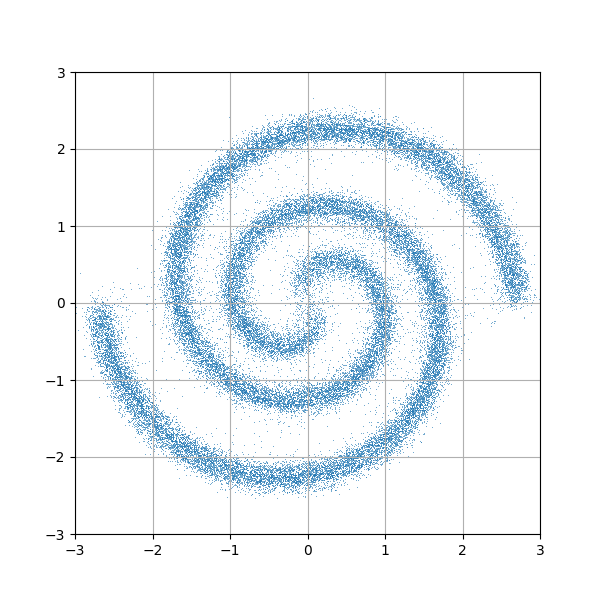} &
    \includegraphics[width=0.23\linewidth]{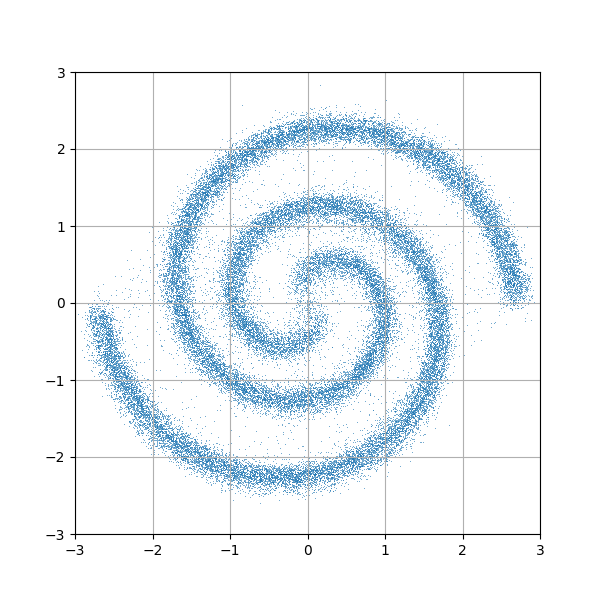} \\
    ITVM $(\mu = 0)$ &
    ITVM $(\mu = 0.9)$ &
    ITVM $(\mu = 0.99)$ &
    ITVM $(\mu = 0.999)$ \\
    & & & \\
    \multicolumn{4}{c}{Samples of student models were generated with 1 function evaluation.}
    \end{tabular}        
\end{center}

\newpage

\subsection{Tabular Datasets}

\subsubsection{Network Architecture}

For tabular datasets, student and teacher models have the exact same architecture for their main bodies, but the main bodies are wrapped differently to conform to the respective models' interfaces. The main body takes in two times $s, t \in \Real$ and a point $x \in \Real^d$ where $d$ is the number of dimensions of each vector in the dataset (8 for GAS, 21 for HEPMASS, and so on). We use an architecture called MLPSkipNet, which we take from the diffusion autoencoder paper by Preechakul \etal~\yrcite{Preechakul:2022}. 

The architecture mainly consists of blocks called ``conditional linear blocks'' (CLB), which is shown in Figure~\ref{fig:conditonal-linear-block}. Such a block takes as input two vectors: a ``feature'' vector with $m$ dimensions, and a ``condition'' vector with $c$ dimensions. The feature vector is transformed by a fully-connected layer to an $n$-dimensional vector. The condition vector is transformed into 2 vectors which are used to scale and shift the transformed feature vector. The output of the block has $n$ dimensions and is obtained by processing the transformed feature vector after scaling and shifting with layer normalization \cite{Ba:2016} and then ELU non-linearity.

\begin{figure}[h]
    \centering
    \includegraphics[width=0.5\linewidth]{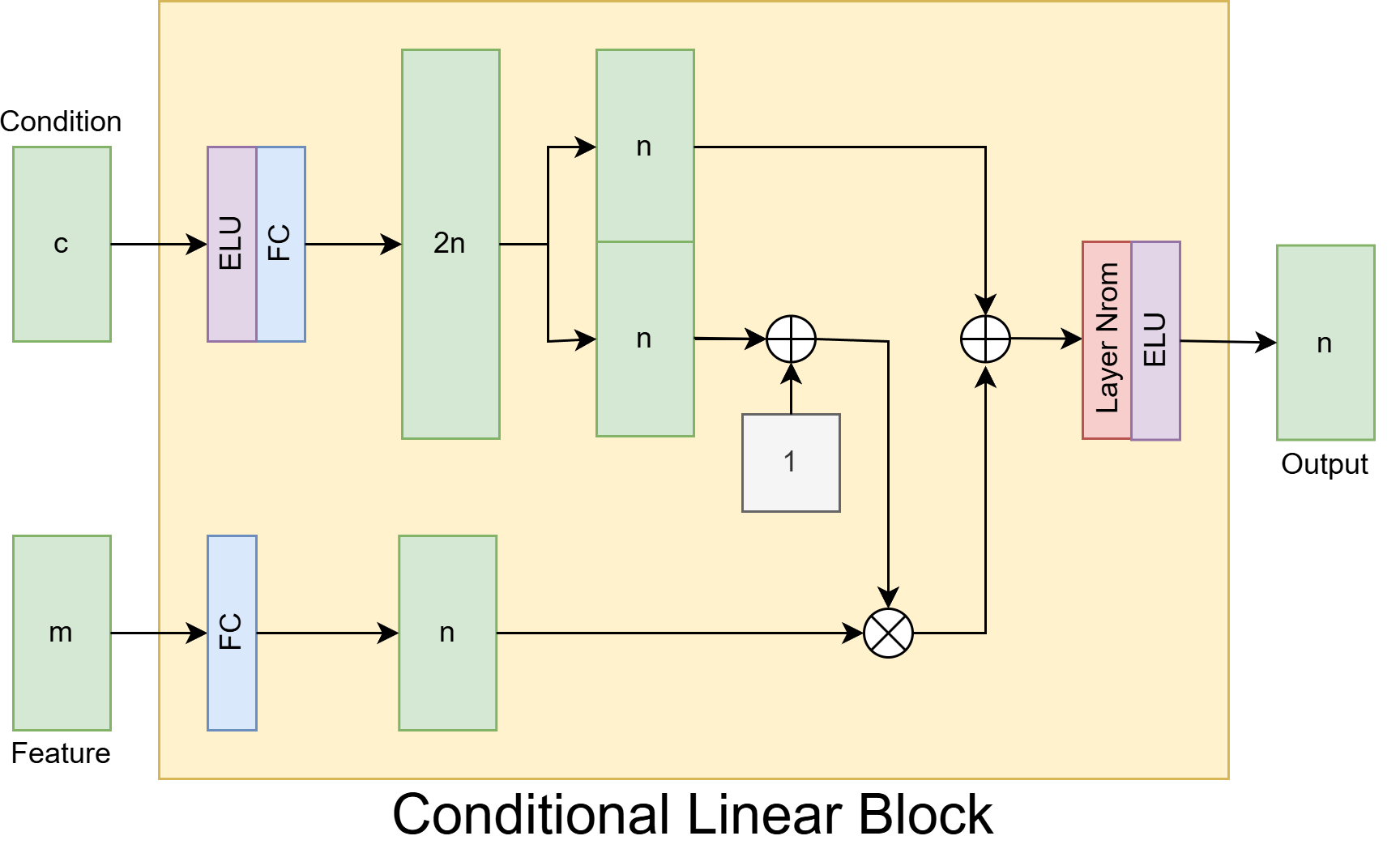}
    \caption{Conditional linear block. Addition ($\oplus$) and multiplication ($\otimes$) nodes denote element-wise addition and element-wise multiplication, respectively.}
    \label{fig:conditonal-linear-block}
\end{figure}

The architecture of an MLPSkipNet is depicted in Figure~\ref{fig:mlp-skipnet}. It has a train of 8 CLBs such that all but the first block receive the concatenation of the previous block's output and the input vector $x$ as its feature vector. These skip connections, which fuse $x$ with intermediate feature vectors, are the reason we call the architecture MLPSkipNet. The output dimension $n$ of each CLB is 2048, and the feature dimension $m$ is $2048 + d$, except for the first CLB whose $m$ is 2048. The condition vector that is fed to all CLB is derived from the two input times. They are converted to 512-dimensional positional encodings, concatenated, and transformed into a $d$-dimensional vector through a 2-layered MLP.

\begin{figure}[h]
    \centering
    \includegraphics[width=0.9\linewidth]{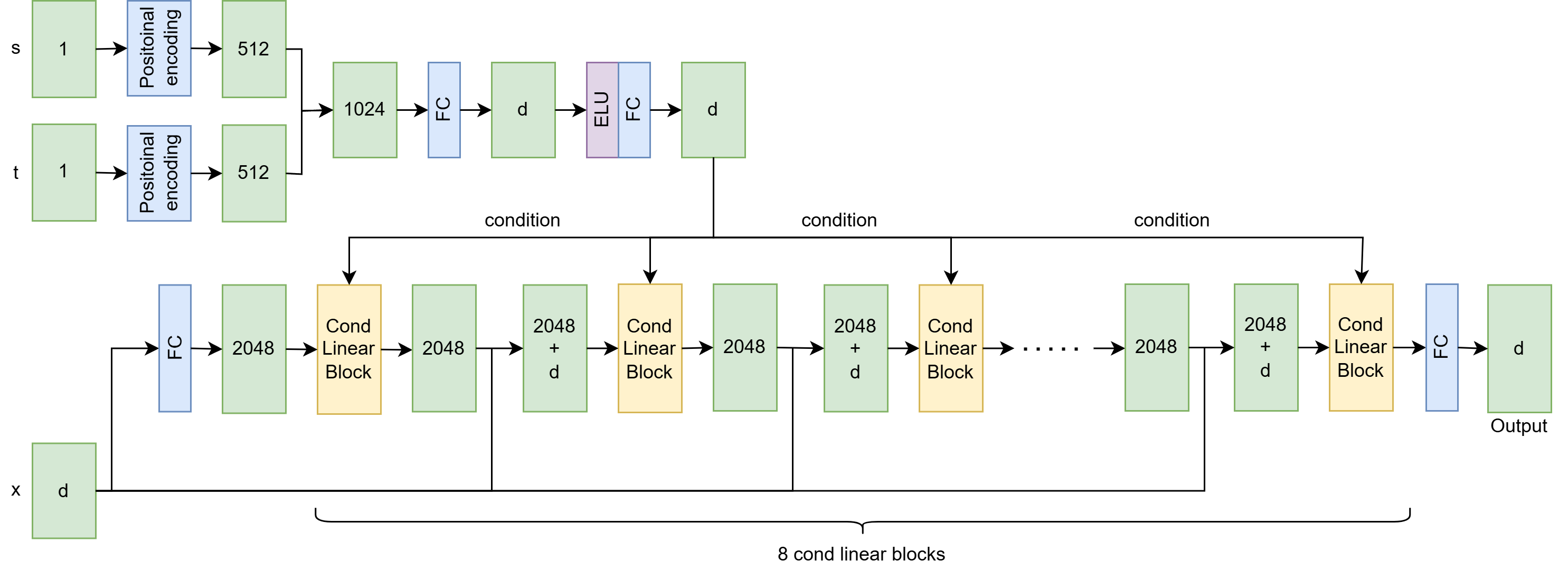}
    \caption{MLPSkipNet architecture.}
    \label{fig:mlp-skipnet}
\end{figure}

A teacher network takes one time variable $t$ as input, and the MLPSkipNet takes 2 time variables. We construct a teacher network by wrapping the MLPSkipNet so that its two input times are set to the input time of the wrapper class. For the student model, the MLPSkipNet serves as its AVM $v^\theta_{s,t}(x)$, which is then wrapped to form $\phi^\theta_{s,t}(x)$ like in Section~\ref{sec:2d-model-architecture}.

Both teacher and student networks are trained using the process described in Section~\ref{sec:2d-model-training-process}.

In the next 4 sessions, we show KL divergence metrics for student models trained with our ITVM loss and other baselines for tabular datasets. Currently, we have not identified an effective way to visualize the samples. 

\subsubsection{Tabular Dataset Results: GAS}

\begin{center}
    \centering
    \begin{tabular}{|l|r|r|r|r|r|}
        \hline
        \multirow{2}{*}{Method} 
        & \multicolumn{4}{c|}{KL Divergence ($\downarrow$)} 
        & \multirow{2}{*}{Fused rank}\\
        \cline{2-5}
        & \multicolumn{1}{c|}{NFE = 1} & \multicolumn{1}{c|}{NFE = 2} & \multicolumn{1}{c|}{NFE = 4} & \multicolumn{1}{c|}{NFE = 8} & \\
        \hline
EFMD & 19.034107 (07)& 27.089544 (07)& 37.149078 (07)& 41.112362 (07)& 07\\
LFMD & 3.481750 (05)& {\color{Blue} 2.193527 (03)}& 1.458288 (05)& 0.835541 (05)& 05\\
PID & {\color{Blue} 2.864193 (03)}& 3.153163 (06)& 1.726937 (06)& 1.017411 (06)& 06\\
\hline
ITVM ($\mu = 0.0$) & 4.084299 (06)& 2.219406 (04)& {\color{Green} 1.287662 (02)}& {\color{Green} 0.681458 (02)}& {\color{Blue} 03}\\
ITVM ($\mu = 0.9$) & {\color{Green} 2.632812 (02)}& 2.383862 (05)& {\color{Red} 1.218426 (01)}& {\color{Red} 0.605913 (01)}& {\color{Green} 02}\\
ITVM ($\mu = 0.99$) & {\color{Red} 2.576506 (01)}& {\color{Red} 1.944362 (01)}& {\color{Blue} 1.300219 (03)}& {\color{Blue} 0.731111 (03)}& {\color{Red} 01}\\
ITVM ($\mu = 0.999$) & 3.106254 (04)& {\color{Green} 1.982998 (02)}& 1.345918 (04)& 0.740087 (04)& 04\\
\hline
    \end{tabular}
\end{center}

\subsubsection{Tabular Dataset Results: HEPMASS}

\begin{center}
    \centering
    \begin{tabular}{|l|r|r|r|r|r|}
        \hline
        \multirow{2}{*}{Method} 
        & \multicolumn{4}{c|}{KL Divergence ($\downarrow$)} 
        & \multirow{2}{*}{Fused rank}\\
        \cline{2-5}
        & \multicolumn{1}{c|}{NFE = 1} & \multicolumn{1}{c|}{NFE = 2} & \multicolumn{1}{c|}{NFE = 4} & \multicolumn{1}{c|}{NFE = 8} & \\
        \hline
EFMD & 3.871845 (07)& 3.329424 (07)& 2.541658 (07)& 2.150262 (07)& 07\\
LFMD & 1.186905 (04)& {\color{Green} 0.939028 (02)}& {\color{Red} 0.358874 (01)}& 0.212761 (04)& {\color{Green} 02}\\
PID & 1.542357 (06)& {\color{Red} 0.834312 (01)}& 0.374184 (04)& 0.217585 (05)& 05\\
\hline
ITVM ($\mu = 0.0$) & 1.348401 (05)& 1.017282 (05)& 0.418560 (06)& 0.235091 (06)& 06\\
ITVM ($\mu = 0.9$) & {\color{Red} 1.091968 (01)}& 1.024881 (06)& 0.374659 (05)& {\color{Blue} 0.205551 (03)}& 04\\
ITVM ($\mu = 0.99$) & {\color{Green} 1.123249 (02)}& {\color{Blue} 0.978086 (03)}& {\color{Green} 0.360455 (02)}& {\color{Green} 0.198744 (02)}& {\color{Red} 01}\\
ITVM ($\mu = 0.999$) & {\color{Blue} 1.184638 (03)}& 0.998316 (04)& {\color{Blue} 0.362117 (03)}& {\color{Red} 0.189735 (01)}& {\color{Blue} 03}\\
\hline
    \end{tabular}
\end{center}

\subsubsection{Tabular Dataset Results: MINIBOONE}

\begin{center}
    \centering
    \begin{tabular}{|l|r|r|r|r|r|}
        \hline
        \multirow{2}{*}{Method} 
        & \multicolumn{4}{c|}{KL Divergence ($\downarrow$)} 
        & \multirow{2}{*}{Fused rank}\\
        \cline{2-5}
        & \multicolumn{1}{c|}{NFE = 1} & \multicolumn{1}{c|}{NFE = 2} & \multicolumn{1}{c|}{NFE = 4} & \multicolumn{1}{c|}{NFE = 8} & \\
        \hline
EFMD & 68.027 (07)& 155.235 (07)& 191.797 (07)& 206.934 (07)& 07\\
LFMD & {\color{Green} 7.927 (02)}& {\color{Green} 5.904 (02)}& {\color{Red} 5.547 (01)}& {\color{Red} 5.566 (01)}& {\color{Red} 01}\\
PID & {\color{Red} 7.918 (01)}& {\color{Red} 5.790 (01)}& {\color{Green} 5.574 (02)}& {\color{Green} 6.013 (02)}& {\color{Red} 01}\\
\hline
ITVM ($\mu = 0.0$) & 10.071 (04)& 8.560 (04)& {\color{Blue} 7.410 (03)}& {\color{Blue} 6.662 (03)}& {\color{Blue} 03}\\
ITVM ($\mu = 0.9$) & {\color{Blue} 10.067 (03)}& {\color{Blue} 8.533 (03)}& 7.449 (04)& 6.707 (04)& {\color{Blue} 03}\\
ITVM ($\mu = 0.99$) & 10.156 (05)& 8.591 (05)& 7.630 (05)& 6.916 (05)& 05\\
ITVM ($\mu = 0.999$) & 10.708 (06)& 8.751 (06)& 7.700 (06)& 7.831 (06)& 06\\
\hline
    \end{tabular}
\end{center}

\subsubsection{Tabular Dataset Results: POWER}

\begin{center}
    \centering
    \begin{tabular}{|l|r|r|r|r|r|}
        \hline
        \multirow{2}{*}{Method} 
        & \multicolumn{4}{c|}{KL Divergence ($\downarrow$)} 
        & \multirow{2}{*}{Fused rank}\\
        \cline{2-5}
        & \multicolumn{1}{c|}{NFE = 1} & \multicolumn{1}{c|}{NFE = 2} & \multicolumn{1}{c|}{NFE = 4} & \multicolumn{1}{c|}{NFE = 8} & \\
        \hline
EFMD & 8.159658 (07)& 9.415152 (07)& 11.877481 (07)& 13.189130 (07)& 07\\
LFMD & 0.209364 (06)& 0.101803 (06)& 0.037712 (04)& 0.020508 (05)& 05\\
PID & 0.173100 (05)& 0.090578 (05)& 0.048503 (06)& 0.027075 (06)& 06\\
\hline
ITVM ($\mu = 0.0$) & 0.162801 (04)& 0.080080 (04)& 0.037747 (05)& 0.019898 (04)& 04\\
ITVM ($\mu = 0.9$) & {\color{Blue} 0.130872 (03)}& {\color{Blue} 0.060825 (03)}& {\color{Blue} 0.026033 (03)}& {\color{Blue} 0.015285 (03)}& {\color{Blue} 03}\\
ITVM ($\mu = 0.99$) & {\color{Red} 0.115969 (01)}& {\color{Red} 0.049607 (01)}& {\color{Green} 0.024108 (02)}& {\color{Green} 0.013504 (02)}& {\color{Red} 01}\\
ITVM ($\mu = 0.999$) & {\color{Green} 0.116765 (02)}& {\color{Green} 0.052908 (02)}& {\color{Red} 0.022147 (01)}& {\color{Red} 0.012813 (01)}& {\color{Red} 01}\\
\hline
    \end{tabular}
\end{center}

\newpage 

\subsection{Image Datasets}

\subsubsection{Network Architectures}

Both teacher and student models have U-Net backbones with attention \cite{Ho:2020} whose implementations are based on those used by Dhariwal and Nichol in their ADM paper \yrcite{Dhariwal:2021}. Their configurations can be found in Table~\ref{tab:image-dataset-architectures}. We would like to point out some differences between architectures of the teacher and the student.

\begin{enumerate}
    \item The teacher uses dropout \cite{Srivastava:2014} layers with probability 0.2, but the student does not. We tried a student architecture with such layers, but we found that samples quality degraded significantly.

    \item Again, the teacher takes only one time variable $t$ as input, but the student takes 2. In the teacher, $t$ is converted to a 1024-dimensional vector via positional encoding, which is then processed by a 2-layered MLP to yield a 1024-dimensional conditioning vector. This conditioning vector is used to condition all ResNet \cite{He:2016} blocks inside the network through Adaptive Instance Normalization (AdaIN) \cite{Huang:2017:AdaIN}. In the student, both the initial time $s$ and the terminal time $t$ are converted to $1024$-dimensional positional encodings. The encodings are then concatenated to form a $2048$-dimensional vector, which is later processed by a 2-layer MLP to yield a 1024-dimensional conditioning vector. How the networks handle time inputs are their main architectural differences.
\end{enumerate}

\begin{table}[h]
\caption{ADM configurations for teacher and student networks for the MNIST and CIFAR-10 datasets.}
    \label{tab:image-dataset-architectures}    
    \centering        
    \begin{tabular}{l|cccc}
        \toprule
        Configuration & MNIST Teacher & MNIST Student & CIFAR-10 Teacher & CIFAR-10 Student \\
        \midrule
        Base \#\,channel & 128 & 128 & 256 & 256  \\
        Channel multipliers & [1,1,1] & [1,1,1] & [1,1,1] & [1,1,1] \\
        \# ResNet blocks per resolution & 3 & 3 & 3 & 3  \\
        Attention resolutions & [14, 7] & [14, 7] & [16,8] & [16,8] \\
        Dropout probability & 0.2 & 0.0 &  0.2 & 0.0 \\
        Size (MB) & 79 & 84 & 237 & 242 \\
        \bottomrule
    \end{tabular}
\end{table}

Again, for the student, its U-Net with attention serves as the AVM $v^\theta_{s,t}(x)$, which must be wrapped to form a TTFM $\phi^{\theta}_{s,t}(x)$.

\subsubsection{Training Processes}

Both the teacher and student were trained using processes similar to those described in Section~\ref{sec:2d-model-training-process}. The differences were training lengths, batch sizes, and the EMA decay rates used to create test-time models.
\begin{itemize}
    \item For MNIST, batch sizes were 128 for all models. Training lasted $64M$ training examples, which amounted to 500,000 iterations and approximately 1,067 epochs.
    \item For CIFAR-10, batch sizes were 80 for most models, except for student models trained with EFMD and LFMD. For these models, we had to reduce the batch size to 56 because 80 required more RAM than our GPUs had available. Like MNIST, training also lasted 64M training examples (1,280 epochs).
    \item The EMA decay rate used to create test-time models was 0.9999 instead of 0.999.
\end{itemize}

\newpage

\subsubsection{Image Dataset Results: MNIST}

\newpage

\begin{center}
    \begin{tabular}{|l|r|r|r|r|r|}
        \hline
        \multirow{2}{*}{Method} 
        & \multicolumn{4}{c|}{FID ($\downarrow$)} 
        & \multirow{2}{*}{Fused rank}\\
        \cline{2-5}
        & \multicolumn{1}{c|}{NFE = 1} & \multicolumn{1}{c|}{NFE = 2} & \multicolumn{1}{c|}{NFE = 4} & \multicolumn{1}{c|}{NFE = 8} & \\
        \hline
EFMD & 17.866039 (07)& 16.011790 (07)& 16.131295 (07)& 18.354233 (07)& 07\\
LFMD & 12.279407 (06)& 3.500427 (06)& 1.170980 (06)& 0.869458 (06)& 06\\
PID & {\color{Blue} 2.338407 (03)}& 1.758009 (05)& {\color{Blue} 0.842187 (03)}& {\color{Red} 0.586441 (01)}& {\color{Green} 02}\\
\hline
ITVM ($\mu = 0.0$) & 2.664168 (05)& 1.471545 (04)& {\color{Green} 0.820349 (02)}& {\color{Blue} 0.623237 (03)}& 04\\
ITVM ($\mu = 0.9$) & 2.532360 (04)& {\color{Blue} 1.461902 (03)}& 0.959420 (05)& 0.637742 (04)& 05\\
ITVM ($\mu = 0.99$) & {\color{Red} 2.174508 (01)}& {\color{Red} 1.167029 (01)}& {\color{Red} 0.812260 (01)}& {\color{Green} 0.603876 (02)}& {\color{Red} 01}\\
ITVM ($\mu = 0.999$) & {\color{Green} 2.301054 (02)}& {\color{Green} 1.184924 (02)}& 0.869098 (04)& 0.670455 (05)& {\color{Blue} 03}\\
\hline
    \end{tabular}    
\end{center}

\begin{center}
    \small
\begin{tabular}{@{\hskip 0.01\linewidth}c@{\hskip 0.01\linewidth}c@{\hskip 0.01\linewidth}c@{\hskip 0.01\linewidth}c@{\hskip 0.01\linewidth}c@{\hskip 0.01\linewidth}}
    \includegraphics[width=0.23\linewidth]{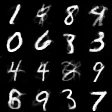} &
    \includegraphics[width=0.23\linewidth]{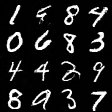} &
    \includegraphics[width=0.23\linewidth]{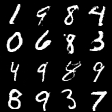} &    
    \\
    EFMD &
    LFMD &
    PID \\
    \includegraphics[width=0.23\linewidth]{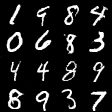} &
    \includegraphics[width=0.23\linewidth]{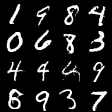} &
    \includegraphics[width=0.23\linewidth]{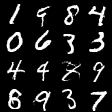} &
    \includegraphics[width=0.23\linewidth]{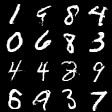} \\
    ITVM $(\mu = 0)$ &
    ITVM $(\mu = 0.9)$ &
    ITVM $(\mu = 0.99)$ &
    ITVM $(\mu = 0.999)$ \\
    & & & \\
    \multicolumn{4}{c}{Samples were generated with 1 function evaluation.} \\
\end{tabular}        
\end{center}

\newpage

\subsubsection{Image Dataset Results: CIFAR-10}

\begin{center}
    \begin{tabular}{|l|r|r|r|r|r|}
        \hline
        \multirow{2}{*}{Method} 
        & \multicolumn{4}{c|}{FID ($\downarrow$)} 
        & \multirow{2}{*}{Fused rank}\\
        \cline{2-5}
        & \multicolumn{1}{c|}{NFE = 1} & \multicolumn{1}{c|}{NFE = 2} & \multicolumn{1}{c|}{NFE = 4} & \multicolumn{1}{c|}{NFE = 8} & \\
        \hline
EFMD & 57.029638 (07)& 42.550134 (07)& 29.906570 (07)& 26.798251 (07)& 07\\
LFMD & 12.820038 (06)& 6.939151 (05)& {\color{Green} 4.766512 (02)}& {\color{Red} 4.194140 (01)}& {\color{Blue} 03}\\
PID & 12.730561 (05)& 7.117951 (06)& 5.025122 (04)& 4.555006 (04)& 05\\
\hline
ITVM ($\mu = 0.0$) & 9.979253 (04)& 6.903587 (04)& 5.147546 (05)& 4.690248 (06)& 05\\
ITVM ($\mu = 0.9$) & {\color{Red} 9.376229 (01)}& {\color{Blue} 6.587607 (03)}& 5.161658 (06)& 4.600747 (05)& 04\\
ITVM ($\mu = 0.99$) & {\color{Blue} 9.734444 (03)}& {\color{Green} 6.269003 (02)}& {\color{Blue} 4.829522 (03)}& {\color{Blue} 4.482476 (03)}& {\color{Green} 02}\\
ITVM ($\mu = 0.999$) & {\color{Green} 9.531846 (02)}& {\color{Red} 6.217342 (01)}& {\color{Red} 4.751021 (01)}& {\color{Green} 4.407881 (02)}& {\color{Red} 01}\\
\hline
    \end{tabular}    
\end{center}

\begin{center}
    \small
\begin{tabular}{@{\hskip 0.01\linewidth}c@{\hskip 0.01\linewidth}c@{\hskip 0.01\linewidth}c@{\hskip 0.01\linewidth}c@{\hskip 0.01\linewidth}c@{\hskip 0.01\linewidth}}
    \includegraphics[width=0.24\linewidth]{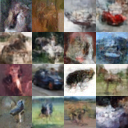} &
    \includegraphics[width=0.24\linewidth]{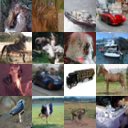} &
    \includegraphics[width=0.24\linewidth]{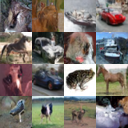} &    
    \\
    EFMD &
    LFMD &
    PID \\
    \includegraphics[width=0.23\linewidth]{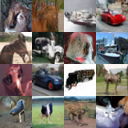} &
    \includegraphics[width=0.23\linewidth]{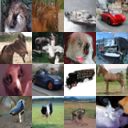} &
    \includegraphics[width=0.23\linewidth]{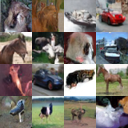} &
    \includegraphics[width=0.23\linewidth]{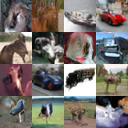} \\
    ITVM $(\mu = 0)$ &
    ITVM $(\mu = 0.9)$ &
    ITVM $(\mu = 0.99)$ &
    ITVM $(\mu = 0.999)$ \\
    & & & \\
    \multicolumn{4}{c}{Samples were generated with 1 function evaluation.} \\
\end{tabular}        
\end{center}

\newpage

\section{A Study on Choices of Intermediate Time in the TVM loss} \label{sec:full-u-ablation}

\begin{figure}
    \centering
    \begin{tabular}{@{\hskip 0.01\linewidth}c@{\hskip 0.01\linewidth}c@{\hskip 0.01\linewidth}c@{\hskip 0.01\linewidth}c@{\hskip 0.01\linewidth}}
        \includegraphics[width=0.24\linewidth]{images/u_ablation/twodim_01/kl_0000.png} &
        \includegraphics[width=0.24\linewidth]{images/u_ablation/twodim_01/kl_0001.png} &
        \includegraphics[width=0.24\linewidth]{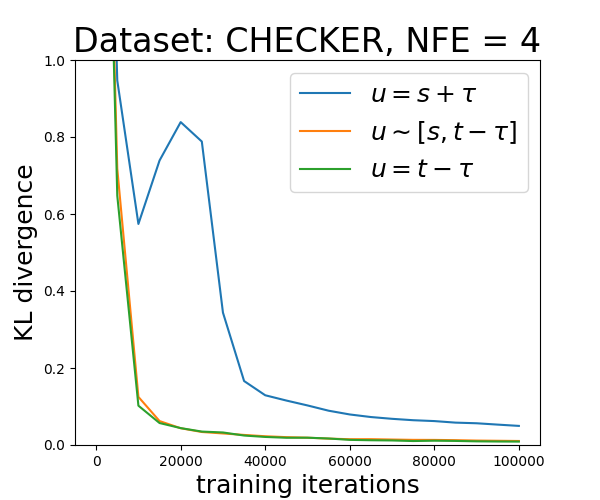} &
        \includegraphics[width=0.24\linewidth]{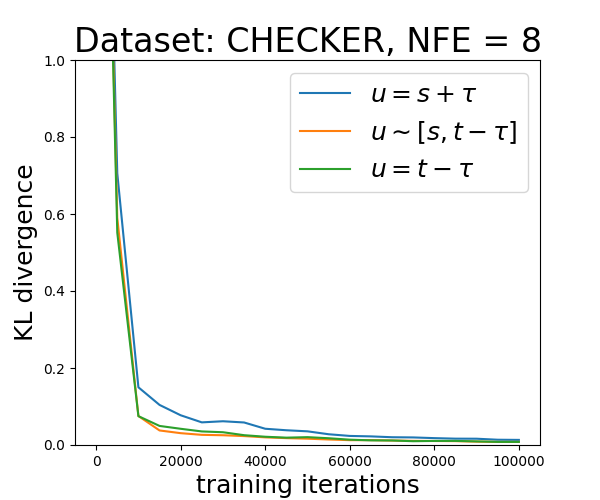} \\
        \includegraphics[width=0.24\linewidth]{images/u_ablation/twodim_02/kl_0000.png} &
        \includegraphics[width=0.24\linewidth]{images/u_ablation/twodim_02/kl_0001.png} &
        \includegraphics[width=0.24\linewidth]{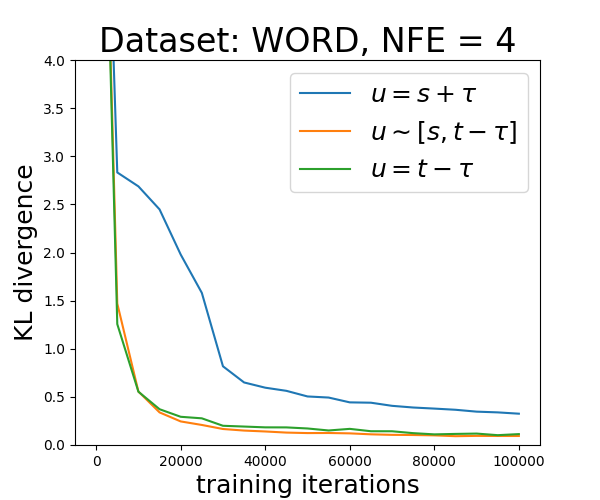} &
        \includegraphics[width=0.24\linewidth]{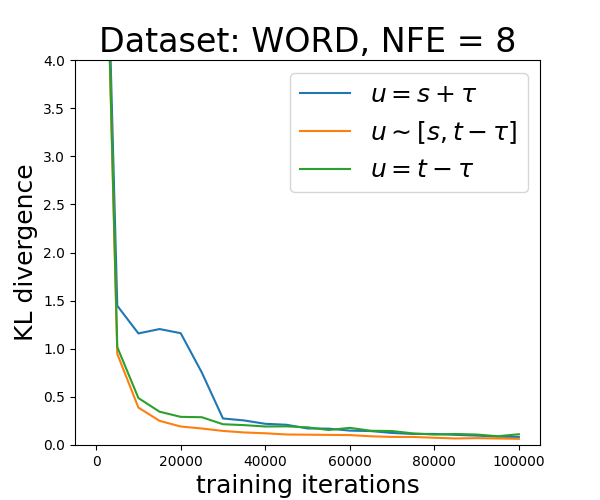} 
    \end{tabular}    
    \vspace{-0.7em}
    \caption{The effect of how the intermediate time $u$ is chosen on the evolution of the KL divergence between distributions of teacher models and student models during training.}
    \label{fig:full-u-ablation}
\end{figure}

The consistency property states: $\phi_{s,t}^\theta(x) = \phi_{u,t}^\theta(\phi_{s,u}^\theta(x))$ for all $0 \leq s \leq u \leq t$. It involves three time variables: the initial time $s$, the intermediate time $u$, and the terminal time $t$. The TVM loss tries to enforce the property by minimizing the L2 norm between the LHS and the RHS of the equation, and it chooses $u = t-\tau$ where $\tau > 0$ is a small constant. We are interested to see whether this choice is optimal or not.

Before we pursue the question further, let us rewrite the TVM loss in such a way that one may change how the intermediate time is chosen.
\begin{align}
    \mcal{L}_{\operatorname{TVM*}} = E_{\substack{s \sim [0,1],\\t \sim [s,1],\\ x_s \sim p_s}} \bigg[ \bigg\| \frac{\phi^{\theta}_{s,t}(x_s) - \phi^{\theta}_{s,u}(x_s) }{t-u} - v^{\langle \theta \rangle}_{u,t}(\phi_{s,u}^{[\theta]}(x_s)) \bigg\|^2 \bigg]. \label{eqn:modified-tvm}
\end{align}

We compare three strategies of choosing the intermediate time $u$.
\begin{enumerate}
    \item Choosing $u = s+\tau$, which is employed by CM.
    \item Sampling $u \sim \mcal{U}[s,t]$, which is employed by SCM.
    \item Choosing $u = t - \tau$, which is employed by our proposed TVM.
\end{enumerate}
Here, $\tau$ is a small constant, which is fixed to $0.005$. When implementing the second strategy, we set the upper limit of $u$ to $t-\tau$ instead of $t$ to prevent the denominator $t-u$ on the RHS of \eqref{eqn:modified-tvm} to be too small. The EMA decay rate $\mu$ was fixed to $0.99$. For each of the CHECKER and WORD datasets, we trained 3 student models with 3 versions of ITVM where the TVM term chooses $u$ according to one of the three strategies. We show plots of the KL divergence metrics at NFE = $1$, $2$, $4$, and $8$ as functions of training iterations in Figure~\ref{fig:full-u-ablation}.

From the plots, we can observe the following trends.

\begin{itemize}
    \item Picking $u = s+\tau$ performs significantly worse than the other two strategies when NFE = 1, 2, and 4. The gap vanishes at NFE = 8, but models trained with this strategy seem to converge slower.

    \item $u = t-\tau$ results in better performance than $u \sim [s,t]$ when NFE = 1, but the performance gap disappears at higher NFEs. Moreover, $u \sim [s,t]$ seems to lead to faster convergence when NFE = 4 and 8.
\end{itemize}

We conclude that the $u = t - \tau$ strategy is superior to other strategies because (1) it leads to noticeably better performance when NFE = 1, and (2) it performs on par with other strategies at higher NFEs.

\section{A Study on the Step Size $\tau$} \label{sec:ful-tau-ablation}

\begin{figure}
    \centering
    \begin{tabular}{@{\hskip 0.01\linewidth}c@{\hskip 0.01\linewidth}c@{\hskip 0.01\linewidth}c@{\hskip 0.01\linewidth}c@{\hskip 0.01\linewidth}}
        \includegraphics[width=0.24\linewidth]{images/tau_ablation/twodim_01/kl_0000.png} &
        \includegraphics[width=0.24\linewidth]{images/tau_ablation/twodim_01/kl_0001.png} &
        \includegraphics[width=0.24\linewidth]{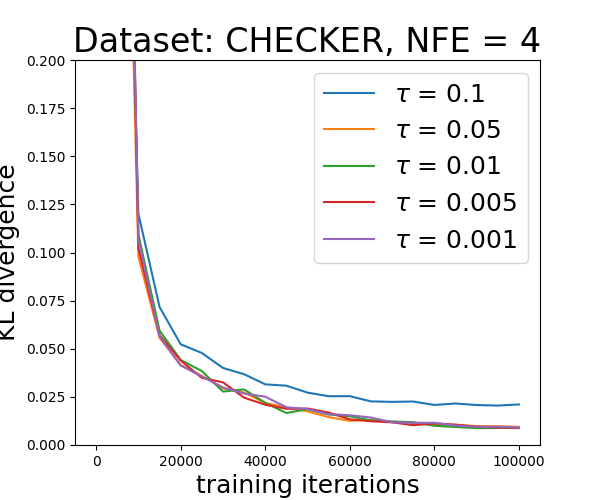} &
        \includegraphics[width=0.24\linewidth]{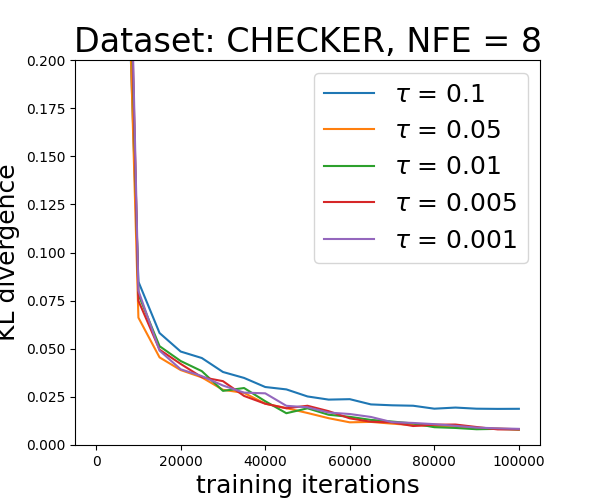} \\
        \includegraphics[width=0.24\linewidth]{images/tau_ablation/twodim_02/kl_0000.png} &
        \includegraphics[width=0.24\linewidth]{images/tau_ablation/twodim_02/kl_0001.png} &
        \includegraphics[width=0.24\linewidth]{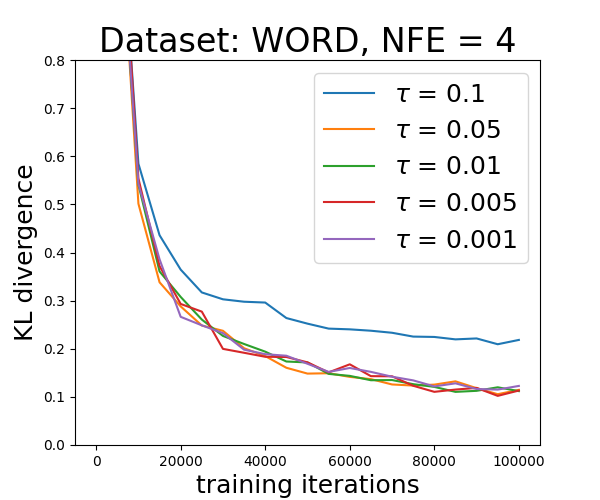} &
        \includegraphics[width=0.24\linewidth]{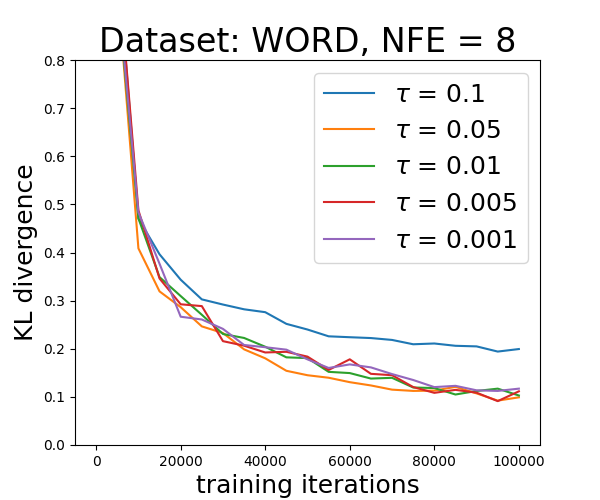} 
    \end{tabular}    
    \caption{The effect of the step size $\tau$ on the evolution of the KL divergence between distributions of teacher models and student models during training.}
    \label{fig:full-tau-ablation}
\end{figure}

We study the effect of the hyperparameter $\tau$ by training student models on the CHECKER and WORD datasets with ITVM, setting $\tau$ to $0.1$, $0.05$, $0.01$, $0.005$, and $0.001$. We show the evaluation of the KL divergence metric as training progresses in Figure~\ref{fig:full-tau-ablation}. We find that almost all values of $\tau$ yield comparable convergence rates and final performance. The exception is $\tau = 0.1$, which performed worse than other values at most dataset-NFE combinations. We conclude that one should pick a $\tau$ value that is small enough, and we find $0.005$ to be a good value to use.


\end{document}